\documentclass{article}
\usepackage{microtype}
\usepackage{graphicx}
\usepackage{subcaption}
\usepackage{booktabs} 
\usepackage{hyperref}

\usepackage[accepted]{icml2026}

\usepackage{amsmath}
\usepackage{amssymb}
\usepackage{mathtools}
\usepackage{amsthm}

\usepackage[capitalize,noabbrev]{cleveref}
\usepackage[shortlabels]{enumitem}
\setlist{
  noitemsep,
  topsep=0pt,
  parsep=0pt,
  partopsep=0pt
}

\usepackage[dvipsnames]{xcolor}
\definecolor{verylightblue}{RGB}{235, 245, 255}

\usepackage{tikz}
\usetikzlibrary{fit, positioning, arrows.meta}
\usetikzlibrary{decorations.pathreplacing}
\usetikzlibrary{calc} 

\theoremstyle{plain}
\newtheorem{thm}{\protect\theoremname}
\theoremstyle{definition}
\newtheorem{problem}[thm]{\protect\problemname}
\theoremstyle{plain}
\newtheorem{lem}[thm]{\protect\lemmaname}
\theoremstyle{plain}
\newtheorem{cor}[thm]{\protect\corollaryname}
\theoremstyle{plain}
\newtheorem{prop}[thm]{\protect\propositionname}
\theoremstyle{remark}
\newtheorem{rem}[thm]{\protect\remarkname}
\newtheorem{defn}[thm]{\protect\definitionname}
\newtheorem{conjecture}[thm]{\protect\conjecturename}

\usepackage{babel}
\providecommand{\corollaryname}{Corollary}
\providecommand{\lemmaname}{Lemma}
\providecommand{\problemname}{Problem}
\providecommand{\propositionname}{Proposition}
\providecommand{\remarkname}{Remark}
\providecommand{\theoremname}{Theorem}
\providecommand{\definitionname}{Definition}
\providecommand{\conjecturename}{Conjecture}

\usepackage[textsize=tiny]{todonotes}

\icmltitlerunning{\tournamentsort: Principled Zero-shot Ranking Agents with Tournament Graphs}

\newcommand{\tournamentsort}{\textsc{BlitzRank}}
\newcommand{\algshortname}{\textsc{Blitz}}

\global\long\def\indeg{\deg^{-}}%
 
\global\long\def\outdeg{\deg^{+}}%
 
\global\long\def\inreach{R^{-}}%
 
\global\long\def\outreach{R^{+}}%
 
\global\long\def\path{\rightsquigarrow}%

\global\long\def\perm#1#2{{#1}^{\underline{#2}}}%

 \global\long\def\condense#1{\left[#1\right]}%

\begin{document}

\twocolumn[
  \icmltitle{\tournamentsort: Principled Zero-shot Ranking Agents with Tournament Graphs}



  \icmlsetsymbol{equal}{*}

  \begin{icmlauthorlist}
    \icmlauthor{Sheshansh Agrawal}{equal,ctxl}
    \icmlauthor{Thien Hang Nguyen}{equal,ctxl}
    \icmlauthor{Douwe Kiela}{ctxl}
  \end{icmlauthorlist}

  \icmlaffiliation{ctxl}{Contextual AI}

  \icmlcorrespondingauthor{Sheshansh Agrawal and Thien Hang Nguyen}{blitzrank.icml@gmail.com}

  \icmlkeywords{Reranking, Tournament Graphs, k-wise sorting, k-sort, Zero-shot reranking, Information retrieval}

  \vskip 0.3in
]


\printAffiliationsAndNotice{\icmlEqualContribution}

\begin{abstract}
Selecting the top $m$ from $n$ items via expensive $k$-wise comparisons is central to settings ranging from LLM-based document reranking to crowdsourced evaluation and tournament design.
Existing methods either rely on heuristics that discard comparison information, or exploit it at prohibitive cost.
We introduce a \textit{tournament graph} framework that provides a principled foundation for $k$-wise ranking.
Our key observation is that each $k$-item comparison reveals an induced tournament of $\binom{k}{2}$ pairwise preferences; aggregating these into a global preference graph and computing its transitive closure yields many additional orderings without further oracle calls.
We formalize when the current top-$m$ output is \textit{certifiably determined} and design a greedy query schedule that maximizes information gain towards identifying the top-$m$ items.
The framework also gracefully handles non-transitive preferences -- cycles induced by real-world oracles -- by collapsing them into equivalence classes that yield principled \textit{tiered rankings}.
Applied to LLM reranking across 14 benchmarks and 5 models, \tournamentsort{} achieves Pareto dominance over existing approaches: matching or exceeding accuracy while requiring 25--40\% fewer tokens than comparable methods; against pairwise reranking, it achieves near-identical quality with 7$\times$ fewer tokens. Code available at
\url{https://github.com/ContextualAI/BlitzRank}.

\end{abstract}

\section{Introduction}
\label{sec:intro}

Consider the classic \textit{25 horses' race} puzzle: \textit{given 25 horses where 5 can race at a time, what is the minimum number of races needed to identify the 3 fastest?}

The answer -- 7 races -- is well known~\citep{zhou2008practical}, but the reasoning reveals a deeper principle.
A naive tournament bracket uses each race only to identify a winner.
The optimal strategy extracts \emph{all} the information each race provides: a race among 5 horses reveals a complete ordering -- 10 pairwise preferences -- and accumulating these transitively certifies the top 3 with far fewer races than a bracket requires.

This paper develops a principled framework for this class of problems: \emph{top-$m$ selection from $n$ items via $k$-wise comparison queries}.
Each query to an oracle reveals the induced tournament on up to $k$ items -- $\binom{k}{2}$ pairwise preferences -- and the goal is to identify the top-$m$ items using as few queries as possible.
The problem arises naturally whenever comparisons are expensive: LLM-based document reranking, crowdsourced preference judgments, human-in-the-loop evaluation, and tournament design.
In all these settings, each query incurs cost in time, money, or human attention, making query efficiency paramount.

Existing methods leave information on the table.
Classical sorting algorithms adapted for $k$-wise comparisons -- heapsort~\citep{zhuang2024setwise,qin2024large}, tournament brackets~\citep{chen2025tourrank}, sliding windows~\citep{sun2023chatgpt} -- typically focus on selection or identifying a winner at each stage, underutilizing the $\binom{k}{2}$ pairwise relationships revealed.
To our knowledge, no existing approach provides a  framework for accumulating comparison outcomes and a criterion to certify the top-$m$.

\begin{figure}[t]
    \centering
    \begin{tikzpicture}[
        vertex/.style={circle, draw, minimum size=0.45cm, inner sep=0pt, font=\footnotesize},
        selected/.style={vertex, fill=blue!12, draw=blue!60!black},
        edge/.style={->, >=stealth, semithick},
        newedge/.style={edge, blue!70!black},
        transedge/.style={edge, orange!80!black, densely dashed},
        oldedge/.style={edge, black!40},
        queryset/.style={draw=blue!60!black, dashed, rounded corners=6pt, thick},
    ]
    
    \begin{scope}[local bounding box=left]
        \node[selected] (1) at (0, 0.7) {1};
        \node[selected] (2) at (-0.4, 0) {2};
        \node[selected] (3) at (0.3, 0) {3};
        \node[vertex] (4) at (1.6, 0.7) {4};
        \node[vertex] (5) at (1.0, 0.7) {5};
        \node[vertex] (6) at (1.3, 0) {6};
        
        \draw[oldedge] (3) -- (5);
        \draw[oldedge] (5) -- (6);
        \draw[oldedge] (4) -- (6);
        \draw[transedge] (3) -- (6);

        \node[queryset, fit=(1)(2)(3), inner sep=3.5pt] (qbox) {};
    \end{scope}
    
    \draw[->, >=stealth, thick, black!70] (2.4, 0.28) -- (3.25, 0.28);
    \node[font=\scriptsize, above] at (2.85, 0.32) {$\mathcal{O}(S)$};
    
    \begin{scope}[xshift=4.4cm, local bounding box=right]
        \node[vertex] (1r) at (0, 0.7) {1};
        \node[vertex] (2r) at (-0.4, 0) {2};
        \node[vertex] (3r) at (0.3, 0) {3};
        \node[vertex] (4r) at (1.6, 0.7) {4};
        \node[vertex] (5r) at (1.0, 0.7) {5};
        \node[vertex] (6r) at (1.3, 0) {6};
        
        \draw[oldedge] (3r) -- (5r);
        \draw[oldedge] (5r) -- (6r);
        \draw[oldedge] (4r) -- (6r);

        \draw[newedge] (1r) -- (2r);
        \draw[newedge] (1r) -- (3r);
        \draw[newedge] (3r) -- (2r);
        \draw[transedge] (3r) -- (6r);
        \draw[transedge] (1r) -- (5r);
        \draw[transedge] (1r) -- (6r);
    \end{scope}
    
    \node[font=\scriptsize, anchor=north] at ([yshift=-0.17cm]left.south) {Query set $S$ selected};
    \node[font=\scriptsize, anchor=north] at ([yshift=-0.3cm]right.south) {Oracle reveals $3$ new edges};
    
    \end{tikzpicture}
    \vspace{-0.5em}
    \caption{A $k$-wise oracle query on $n{=}6$ candidates. \textbf{Left:} A query set $S$ of $k{=}3$ candidates (shaded) is selected. \textbf{Right:} The oracle returns a tournament on $S$, revealing $\binom{3}{2}{=}3$ new edges (\textcolor{blue!70!black}{blue}). Combined with prior edges (\textcolor{black!40}{gray}), additional preferences are inferred transitively (\textcolor{orange!80!black}{orange dashed}).}
    
    \label{fig:oracle-query}
\end{figure}
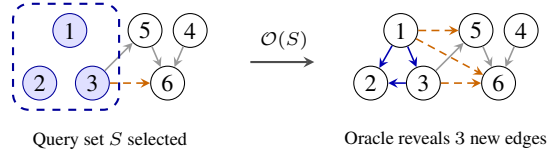
\definecolor{leftcolor}{named}{red}
\definecolor{rightcolor}{named}{ForestGreen}

\begin{figure*}[t]
\centering
\begin{tikzpicture}[
    node distance=0.9cm,
    every node/.style={font=\scriptsize},
    graph node/.style={circle, draw, minimum size=0.45cm, inner sep=1pt},
    blue node/.style={fill=blue!12, draw=blue!60!black},
    double node/.style={double, double distance=1pt},
    arrow/.style={-{Stealth[scale=1]}},
    blue arrow/.style={-{Stealth[scale=1]}, cyan},
    title/.style={font=\scriptsize},
    score/.style={font=\fontsize{5}{6}\selectfont, inner sep=1pt, fill=white, rounded corners=2pt},
    scoreL/.style={score, text=leftcolor},
    scoreW/.style={score, text=rightcolor},
]

\newcommand{\horsenode}[6]{
    \node[graph node, #2] (#1) at (#3) {#4};
    \node[scoreL, below left=-2pt and -4pt of #1] {#5};
    \node[scoreW, below right=-2pt and -4pt of #1] {#6};
}

\begin{scope}[local bounding box=legend, xshift=-0.5cm, yshift=-1cm]
    \node[title] at (0.5, 4.2) {\textbf{Legend}};
    
    \node[graph node, blue node] (leg3) at (-0.35, 3.5) {};
    \node[anchor=west, font=\scriptsize] at (0, 3.5) {Queried};
    
    \draw[blue arrow] (-0.6, 2.8) -- (-0.1, 2.8);
    \node[anchor=west, font=\scriptsize] at (0, 2.8) {Revealed Edges};

    \node[graph node] (leg1) at (-0.35, 2.15) {$v$};
    \node[scoreL, below left=-2pt and -4pt of leg1] {$L$};
    \node[scoreW, below right=-2pt and -4pt of leg1] {$W$};
    \node[anchor=west, font=\scriptsize] at (0, 2.1) {Node's stats};

    \node[graph node, double node] (leg2) at (-0.35, 1.4) {};
    \node[anchor=west, font=\scriptsize] at (0, 1.4) {Finalized};

    \draw[rounded corners=5pt, draw=black!60] 
    ([xshift=-7pt, yshift=-10pt]leg2.south west) 
    rectangle 
    ([xshift=38pt, yshift=4pt]legend.north);

\end{scope}

\begin{scope}[xshift=2.2cm]
    \node[title] at (1.8, 4.2) {\textbf{Rounds 1--5} (\textit{in parallel})};
    
    \horsenode{A1}{blue node}{0, 0}{1}{0}{4}
    \horsenode{A4}{blue node}{0.9, 0}{4}{1}{3}
    \horsenode{A8}{blue node}{1.8, 0}{8}{2}{2}
    \horsenode{A9}{blue node}{2.7, 0}{9}{3}{1}
    \horsenode{A21}{blue node}{3.6, 0}{21}{4}{0}
    
    \horsenode{A3}{blue node}{0, 0.9}{3}{0}{4}
    \horsenode{A5}{blue node}{0.9, 0.9}{5}{1}{3}
    \horsenode{A14}{blue node}{1.8, 0.9}{14}{2}{2}
    \horsenode{A24}{blue node}{2.7, 0.9}{24}{3}{1}
    \horsenode{A25}{blue node}{3.6, 0.9}{25}{4}{0}
    
    \horsenode{A2}{blue node}{0, 1.8}{2}{0}{4}
    \horsenode{A12}{blue node}{0.9, 1.8}{12}{1}{3}
    \horsenode{A15}{blue node}{1.8, 1.8}{15}{2}{2}
    \horsenode{A18}{blue node}{2.7, 1.8}{18}{3}{1}
    \horsenode{A23}{blue node}{3.6, 1.8}{23}{4}{0}
    
    \horsenode{A6}{blue node}{0, 2.7}{6}{0}{4}
    \horsenode{A7}{blue node}{0.9, 2.7}{7}{1}{3}
    \horsenode{A11}{blue node}{1.8, 2.7}{11}{2}{2}
    \horsenode{A16}{blue node}{2.7, 2.7}{16}{3}{1}
    \horsenode{A22}{blue node}{3.6, 2.7}{22}{4}{0}
    
    \horsenode{A10}{blue node}{0, 3.6}{10}{0}{4}
    \horsenode{A13}{blue node}{0.9, 3.6}{13}{1}{3}
    \horsenode{A17}{blue node}{1.8, 3.6}{17}{2}{2}
    \horsenode{A19}{blue node}{2.7, 3.6}{19}{3}{1}
    \horsenode{A20}{blue node}{3.6, 3.6}{20}{4}{0}
    
    \draw[blue arrow] (A10) -- (A13); \draw[blue arrow] (A13) -- (A17); \draw[blue arrow] (A17) -- (A19); \draw[blue arrow] (A19) -- (A20);
    \draw[blue arrow] (A6) -- (A7); \draw[blue arrow] (A7) -- (A11); \draw[blue arrow] (A11) -- (A16); \draw[blue arrow] (A16) -- (A22);
    \draw[blue arrow] (A2) -- (A12); \draw[blue arrow] (A12) -- (A15); \draw[blue arrow] (A15) -- (A18); \draw[blue arrow] (A18) -- (A23);
    \draw[blue arrow] (A3) -- (A5); \draw[blue arrow] (A5) -- (A14); \draw[blue arrow] (A14) -- (A24); \draw[blue arrow] (A24) -- (A25);
    \draw[blue arrow] (A1) -- (A4); \draw[blue arrow] (A4) -- (A8); \draw[blue arrow] (A8) -- (A9); \draw[blue arrow] (A9) -- (A21);
\end{scope}

\begin{scope}[xshift=7.2cm]
    \node[title] at (1.8, 4.2) {\textbf{Round 6}};
    
    \horsenode{B1}{blue node, double node}{0, 0}{1}{0}{24}
    \horsenode{B4}{}{0.9, 0}{4}{1}{3}
    \horsenode{B8}{}{1.8, 0}{8}{2}{2}
    \horsenode{B9}{}{2.7, 0}{9}{3}{1}
    \horsenode{B21}{}{3.6, 0}{21}{4}{0}
    
    \horsenode{B2}{blue node}{0, 0.9}{2}{1}{19}
    \horsenode{B12}{}{0.9, 0.9}{12}{2}{3}
    \horsenode{B15}{}{1.8, 0.9}{15}{3}{2}
    \horsenode{B18}{}{2.7, 0.9}{18}{4}{1}
    \horsenode{B23}{}{3.6, 0.9}{23}{5}{0}
    
    \horsenode{B3}{blue node}{0, 1.8}{3}{2}{14}
    \horsenode{B5}{}{0.9, 1.8}{5}{3}{3}
    \horsenode{B14}{}{1.8, 1.8}{14}{4}{2}
    \horsenode{B24}{}{2.7, 1.8}{24}{5}{1}
    \horsenode{B25}{}{3.6, 1.8}{25}{6}{0}
    
    \horsenode{B6}{blue node}{0, 2.7}{6}{3}{9}
    \horsenode{B7}{}{0.9, 2.7}{7}{4}{3}
    \horsenode{B11}{}{1.8, 2.7}{11}{5}{2}
    \horsenode{B16}{}{2.7, 2.7}{16}{6}{1}
    \horsenode{B22}{}{3.6, 2.7}{22}{7}{0}
    
    \horsenode{B10}{blue node}{0, 3.6}{10}{4}{4}
    \horsenode{B13}{}{0.9, 3.6}{13}{5}{3}
    \horsenode{B17}{}{1.8, 3.6}{17}{6}{2}
    \horsenode{B19}{}{2.7, 3.6}{19}{7}{1}
    \horsenode{B20}{}{3.6, 3.6}{20}{8}{0}
    
    \draw[arrow] (B10) -- (B13); \draw[arrow] (B13) -- (B17); \draw[arrow] (B17) -- (B19); \draw[arrow] (B19) -- (B20);
    \draw[arrow] (B6) -- (B7); \draw[arrow] (B7) -- (B11); \draw[arrow] (B11) -- (B16); \draw[arrow] (B16) -- (B22);
    \draw[arrow] (B3) -- (B5); \draw[arrow] (B5) -- (B14); \draw[arrow] (B14) -- (B24); \draw[arrow] (B24) -- (B25);
    \draw[arrow] (B2) -- (B12); \draw[arrow] (B12) -- (B15); \draw[arrow] (B15) -- (B18); \draw[arrow] (B18) -- (B23);
    \draw[arrow] (B1) -- (B4); \draw[arrow] (B4) -- (B8); \draw[arrow] (B8) -- (B9); \draw[arrow] (B9) -- (B21);
    \draw[blue arrow] (B1) -- (B2); \draw[blue arrow] (B2) -- (B3); \draw[blue arrow] (B3) -- (B6); \draw[blue arrow] (B6) -- (B10);
\end{scope}

\begin{scope}[xshift=12.2cm]
    \node[title] at (1.8, 4.2) {\textbf{Round 7}};
    
    \horsenode{C1}{double node}{0, 0}{1}{0}{24}
    \horsenode{C4}{blue node}{0.9, 0}{4}{3}{7}
    \horsenode{C8}{blue node}{1.8, 0}{8}{4}{6}
    \horsenode{C9}{}{2.7, 0}{9}{5}{1}
    \horsenode{C21}{}{3.6, 0}{21}{6}{0}
    
    \horsenode{C2}{blue node, double node}{0, 0.9}{2}{1}{23}
    \horsenode{C12}{blue node}{0.9, 0.9}{12}{5}{3}
    \horsenode{C15}{}{1.8, 0.9}{15}{6}{2}
    \horsenode{C18}{}{2.7, 0.9}{18}{7}{1}
    \horsenode{C23}{}{3.6, 0.9}{23}{8}{0}
    
    \horsenode{C3}{blue node, double node}{0, 1.8}{3}{2}{22}
    \horsenode{C5}{}{0.9, 1.8}{5}{3}{3}
    \horsenode{C14}{}{1.8, 1.8}{14}{4}{2}
    \horsenode{C24}{}{2.7, 1.8}{24}{5}{1}
    \horsenode{C25}{}{3.6, 1.8}{25}{6}{0}
    
    \horsenode{C6}{}{0, 2.7}{6}{3}{9}
    \horsenode{C7}{}{0.9, 2.7}{7}{4}{3}
    \horsenode{C11}{}{1.8, 2.7}{11}{5}{2}
    \horsenode{C16}{}{2.7, 2.7}{16}{6}{1}
    \horsenode{C22}{}{3.6, 2.7}{22}{7}{0}
    
    \horsenode{C10}{}{0, 3.6}{10}{4}{4}
    \horsenode{C13}{}{0.9, 3.6}{13}{5}{3}
    \horsenode{C17}{}{1.8, 3.6}{17}{6}{2}
    \horsenode{C19}{}{2.7, 3.6}{19}{7}{1}
    \horsenode{C20}{}{3.6, 3.6}{20}{8}{0}
    
    \draw[arrow] (C10) -- (C13); \draw[arrow] (C13) -- (C17); \draw[arrow] (C17) -- (C19); \draw[arrow] (C19) -- (C20);
    \draw[arrow] (C6) -- (C7); \draw[arrow] (C7) -- (C11); \draw[arrow] (C11) -- (C16); \draw[arrow] (C16) -- (C22);
    \draw[arrow] (C3) -- (C5); \draw[arrow] (C5) -- (C14); \draw[arrow] (C14) -- (C24); \draw[arrow] (C24) -- (C25);
    \draw[arrow] (C2) -- (C12); \draw[arrow] (C12) -- (C15); \draw[arrow] (C15) -- (C18); \draw[arrow] (C18) -- (C23);
    \draw[arrow] (C1) -- (C4); \draw[blue arrow] (C4) -- (C8); \draw[arrow] (C8) -- (C9); \draw[arrow] (C9) -- (C21);
    \draw[arrow] (C1) -- (C2); \draw[blue arrow] (C2) -- (C3); \draw[arrow] (C3) -- (C6); \draw[arrow] (C6) -- (C10);
   \draw[blue arrow] (C8) -- (C12); \draw[blue arrow] (C3) -- (C4);
\end{scope}
\draw[black!50] (6.5, -0.5) -- (6.5, 4.3);
\draw[black!50] (11.5, -0.5) -- (11.5, 4.3);

\end{tikzpicture}
\caption{Illustration of \tournamentsort{} (Algorithm \ref{alg:tournament-sort-main}) achieving the optimal 7 queries on the classic \textit{25 horses puzzle}, where $(n,k,m)=(25,5,3)$. Each node shows the horse ID with \textcolor{leftcolor}{$L(u):=|\inreach_G(u)|$} (cf. \eqref{eqdef:inreach}) bottom left and \textcolor{rightcolor}{$W(u):=|\outreach_G(u)|$} (cf. \eqref{eqdef:outreachdef}) at bottom right. Blue nodes indicate horses queried in that round. In this transitive instance, $\kappa_G(u)=L(u)+W(u)$, (cf. \eqref{eqdef:known-relationship}) and double circles indicate resolved horses (where $\kappa_G(u)=24$). 
\\
{\scriptsize Note: For reproducibility, the initial ordering was generated with \texttt{Python}'s \texttt{random.shuffle} on $[1,2,\dots,25]$ with \texttt{seed=42}. The first five rounds are grouped as shown.}}
\label{fig:horses}
\end{figure*}

We propose a \emph{tournament graph} framework and an algorithm \tournamentsort{}\footnote{The name \tournamentsort{}  evokes blitz chess, where rapid tournament play efficiently determines  rankings.} that addresses these limitations.
By aggregating these local tournaments into a global preference graph, the transitive closure yields additional orderings without further queries.
We formalize when a node is \emph{resolved} -- its relationship to all $n-1$ others determined -- and design an algorithm that terminates once the current top-$m$ are certified.
Applied to the 25 horses puzzle, the algorithm discovers the optimal 7-race strategy without being given any problem-specific knowledge (Figure~\ref{fig:horses}).

The framework also addresses a phenomenon that prior work largely overlooks: \emph{non-transitive preferences}.
Real-world oracles -- LLMs, crowdworkers, domain experts -- sometimes produce cyclic judgments: $A \succ B \succ C \succ A$.
Rather than treating cycles as noise, we treat them as structure indicating that the oracle cannot consistently separate those items.
Our framework captures this via \emph{strongly connected components} (SCCs) of the preference graph, which collapse into equivalence classes to yield a DAG of ``relevance tiers.''
The \emph{same algorithm} handles both settings -- returning a total ordering when preferences are consistent, and a principled tiered ranking otherwise.

\textbf{Contributions.}
\begin{enumerate}[leftmargin=*,itemsep=2pt]
    \item \textbf{A unifying theoretical framework.}
    We formalize top-$m$ selection via $k$-wise comparison oracles using tournament graphs (\S\ref{sec:tournament}).
    The framework captures how transitive closure amplifies each query's information yield, defines when items are \emph{resolved}, and unifies transitive and non-transitive preferences.

    \item \textbf{A provably correct algorithm.}
    We present \tournamentsort{}, a greedy algorithm that schedules queries among minimally-resolved SCCs to guarantee progress (\S\ref{sec:algorithm}).
    We prove correctness and termination in both transitive and non-transitive settings.

    \item \textbf{Empirical validation.}
    Across 14 benchmarks and 5 LLM oracles, our approach achieves Pareto dominance: matching or exceeding baseline accuracy while requiring 25--40\% fewer tokens than methods with comparable structure, and 7$\times$ fewer than pairwise reranking at near-identical quality (\S\ref{sec:experiments}).
    Convergence is predictable, and our analysis confirms that cycles capture genuinely ambiguous documents rather than noise.
\end{enumerate}

\paragraph{Overview.}
Section~\ref{sec:tournament} presents the tournament graph framework, Section~\ref{sec:related} discusses related work, and Section~\ref{sec:experiments} evaluates \tournamentsort{} empirically on document reranking benchmarks.
Full proofs, theoretical analysis, and additional experiments appear in the appendices.

\section{Tournament Graph Framework}\label{sec:tournament}

\subsection{Problem Setup}\label{sec:setup}

Let $V$ be a set of $n$ items and $G^* = (V, E^*)$ be a fixed but unknown 
tournament, representing the oracle's latent preferences: for every
pair $u \neq v$, exactly one of $(u,v)$ or $(v,u)$ belongs to $E^*$.
We access $G^*$ through a $k$-wise comparison oracle $O_{G^*}$:
given any queried set $S \subseteq V$ with $|S|\le k$, returns all
directed edges of $G^*$ induced by $S$.

Our ranking target is reachability-based. For a directed graph $G$,
write $u \leadsto_G v$ if there is a directed path from $u$ to $v$,
and define the \emph{in-reach} of $v$ by
\begin{align}
    \inreach_G(v) := \{u \in V \setminus \{v\} : u \leadsto_G v\}. \label{eqdef:inreach}
\end{align}

The task is to identify $m$ vertices with smallest in-reach
using as few oracle queries as possible. When $G^*$ is transitive,
in-reach coincides with in-degree, so this reduces to the usual loss-count
ordering; in general, the reachability view extends naturally to cyclic
preferences. Formal definitions and discussion appear in Appendix~\ref{sec:Problem-Statement}.

\subsection{Revealed Graph and Transitive Inference}\label{sec:revealed}
We maintain a \emph{revealed graph} $G = (V, E)$ that accumulates edges
returned by the oracle. Initially $E = \emptyset$; after querying
a set $S$, we add the revealed edges $O_{G^*}(S)$ to $E$. By construction,
$G$ is always a subgraph of $G^*$.

The key point is that $G$ contains more information than its explicit
edges. If $u \leadsto_G v$, then the transitive preference $u \succ v$ is already
certified in the unknown tournament, even if the edge $(u,v)$ was never
queried directly. In this sense, each query is amplified by reachability:
a newly revealed edge can combine with existing paths to certify many
additional pairwise relations.

For a vertex $v$, we also consider its out-reach
\begin{align}
    \outreach_G(v) := \{u \in V \setminus \{v\} : v \leadsto_G u\}. \label{eqdef:outreachdef}
\end{align}

The vertices whose relation to $v$ is already determined form the known-relationship
set
\begin{align}
   K_G(v) := \inreach_G(v) \cup \outreach_G(v),
\qquad
\kappa_G(v) := |K_G(v)|. \label{eqdef:known-relationship}
\end{align}

When $\kappa_G(v)=n-1$, the comparison between $v$ and every other
vertex is already implied by the revealed graph; we call such a vertex
\emph{resolved}. At that point, its discovered in-reach is already its
true in-reach in $G^*$, so further queries cannot change its rank. 
We use this as a practical stopping rule for algorithm design: we terminate once the current
top-$m$ vertices by discovered in-reach are resolved.
The full finalization framework is presented in Appendix~\ref{sec:graph-prelims}.

\subsection{Non-Transitive Preferences}\label{sec:nontransitive}
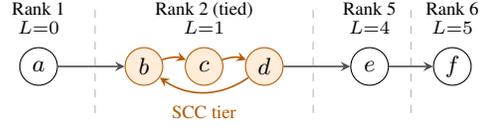
\begin{figure}[t]
    \centering
    \begin{tikzpicture}[
        vertex/.style={circle, draw, minimum size=0.5cm, inner sep=0pt, font=\footnotesize},
        sccvertex/.style={vertex, fill=orange!15, draw=orange!70!black},
        edge/.style={->, >=stealth, semithick, black!70},
        sccedge/.style={edge, orange!70!black, bend left=20},
        tierline/.style={dashed, black!30},
        label/.style={font=\scriptsize, align=center},
    ]
    
    \node[font=\footnotesize\scshape, anchor=west] at (-0.8, 1.6) {Transitive};
    
    \begin{scope}[local bounding box=transitive]
        \foreach \x in {0.52, 1.65, 2.75, 3.85, 4.95} {
            \draw[tierline] (\x, 1.1) -- (\x, -0.3);
        }
        
        \foreach \i/\x/\r/\l in {1/0/1/0, 2/1.1/2/1, 3/2.2/3/2, 4/3.3/4/3, 5/4.4/5/4, 6/5.5/6/5} {
            \node[label] at (\x, 0.9) {Rank \r\\[-1pt] $L{=}\l$};
        }
        
        \node[vertex] (a) at (0, 0.35) {$a$};
        \node[vertex] (b) at (1.1, 0.35) {$b$};
        \node[vertex] (c) at (2.2, 0.35) {$c$};
        \node[vertex] (d) at (3.3, 0.35) {$d$};
        \node[vertex] (e) at (4.4, 0.35) {$e$};
        \node[vertex] (f) at (5.5, 0.35) {$f$};
        
        \draw[edge] (a) -- (b);
        \draw[edge] (b) -- (c);
        \draw[edge] (c) -- (d);
        \draw[edge] (d) -- (e);
        \draw[edge] (e) -- (f);
    \end{scope}
    
    \node[font=\footnotesize\scshape, anchor=west] at (-0.8, -0.5) {Non-transitive};
    
    \begin{scope}[yshift=-2.1cm, local bounding box=nontransitive]
        \foreach \x in {0.75, 3.65, 4.95} {
            \draw[tierline] (\x, 1.1) -- (\x, -0.3);
        }
        
        \node[label] at (0, 1.0) {Rank 1\\[-1pt] $L{=}0$};
        \node[label] at (2.2, 1.0) {Rank 2 (tied)\\[-1pt] $L{=}1$};
        \node[label] at (4.4, 1.0) {Rank 5\\[-1pt] $L{=}4$};
        \node[label] at (5.5, 1.0) {Rank 6\\[-1pt] $L{=}5$};
        
        \node[vertex] (a2) at (0, 0.35) {$a$};
        \node[sccvertex] (b2) at (1.4, 0.35) {$b$};
        \node[sccvertex] (c2) at (2.2, 0.35) {$c$};
        \node[sccvertex] (d2) at (3.0, 0.35) {$d$};
        \node[vertex] (e2) at (4.4, 0.35) {$e$};
        \node[vertex] (f2) at (5.5, 0.35) {$f$};
        
        \draw[edge] (a2) -- (b2);
        \draw[edge] (d2) -- (e2);
        \draw[edge] (e2) -- (f2);
        
        \draw[sccedge] (b2) to (c2);
        \draw[sccedge] (c2) to (d2);
        \draw[sccedge, bend left=35] (d2) to (b2);
        
        \node[font=\scriptsize, orange!70!black] at (2.2, -0.25) {SCC tier};
    \end{scope}
    
    \end{tikzpicture}
    \caption{Tournament graphs with $n{=}6$ candidates. \textbf{Top (transitive):} Consistent preferences yield a total ordering; each node has a unique rank determined by $L(u):=|\inreach_G(u)|$, the number of vertices that reach it. \textbf{Bottom (non-transitive):} A cycle $b \succ c \succ d \succ b$ forms a strongly connected component (orange). Nodes in the SCC share the same tier since no consistent ordering exists among them, but the partial order $a \succ \{b,c,d\} \succ e \succ f$ is still recovered.}
    \label{fig:transitive-comparison}
\end{figure}

When the oracle produces cycles, a global total order may not exist. 
Our framework captures these intransitivities via \emph{strongly connected components} (SCCs). 
Vertices in the same SCC reach one another
in both directions, so they behave as a tied tier under reachability-based
ranking. Collapsing each SCC to a supernode gives the \emph{condensation}
$\condense G$, which is always a DAG; for tournaments, the condensation
is in fact a transitive tournament. This induces a total order on SCC
tiers even when the original graph contains cycles.

Accordingly, our output in the non-transitive case is a \emph{tiered
ranking}: SCCs are ordered from best to worst, while vertices inside
the same SCC are tied (see Figure~\ref{fig:transitive-comparison}). When all SCCs are singletons, this reduces to
the transitive case and yields a total ordering. For top-$m$ selection,
if the boundary tier is larger than the remaining quota, any subset
of the required size from that tier is a valid output.\footnote{If
 a total order is required, ties inside a tier
can be broken by a secondary signal such as the original retrieval score.}

At a high level, the algorithm operates on the discovered SCC structure
while maintaining the same vertex-level resolution criterion from the
previous subsection. Part~\ref{part:theory} of the appendix develops the full theory, including
why discovered SCCs refine the true ones and why reasoning on the condensation
is enough to certify correctness.

\subsection{Algorithm: \tournamentsort{}}\label{sec:algorithm}

\begin{algorithm}[tb]
\caption{\tournamentsort}
\label{alg:tournament-sort-main}
\begin{algorithmic}[1]
\STATE \textbf{Input:} vertex set $V$ with $|V|=n$, oracle $O_{G^*}$, maximum query size $k$, target count $m$
\STATE \textbf{Output:} a valid top-$m$ output for the underlying tournament $G^*$
\STATE Initialize $E \leftarrow \emptyset$, $G \leftarrow (V, E)$
\LOOP
    \STATE \textit{// Compute vertex metrics}
    \FOR{each $u \in V$}
        \STATE Compute $|\inreach_G(u)|$ and $\kappa_G(u)$
    \ENDFOR
\STATE \textit{// Identify resolved vertices and current top-$m$ candidates}
    \STATE $F \leftarrow \{u \in V : \kappa_G(u) = n - 1\}$ \COMMENT{resolved vertices}
    \STATE $T \leftarrow$ the $m$ vertices with smallest $|\inreach_G(\cdot)|$
\STATE \textit{// Termination condition: current top candidates are resolved}
    \IF{$T \subseteq F$}
        \STATE \textbf{return} $T$ sorted by ascending $|\inreach_G(\cdot)|$
    \ENDIF
    \STATE \textit{// Greedy Schedule}
    \STATE Compute condensation $\condense{G}$ of the revealed graph $G$
    \STATE $\mathcal{C} \leftarrow$ SCCs containing $\geq 1$ vertex with $\kappa_G(u) < n{-}1$, ordered by ascending in-reach in $\condense{G}$
    \STATE $Q \leftarrow \{\mathrm{rep}(C) : C \in \mathcal{C}[1:k']\}$ where $k' = \min(k, |\mathcal{C}|)$ \COMMENT{one representative per SCC}
    \STATE \textit{// Query and update}
    \STATE $E \leftarrow E \cup O_{G^*}(Q)$, \quad $G \leftarrow (V, E)$
\ENDLOOP
\end{algorithmic}
\end{algorithm}

\tournamentsort{} (Algorithm~\ref{alg:tournament-sort-main}) realizes the tournament graph framework.
 At each iteration, the algorithm recomputes
the discovered in-reach $|\inreach_G(v)|$ and known-relationship count
$\kappa_G(v)$ for every vertex, forms the current top-$m$ set $T$ by
smallest discovered in-reach, and stops once every vertex in $T$ is
resolved. If not, it computes the condensation $\condense G$ and greedily queries
representatives from the unresolved SCCs with the smallest in-reaches in the condensation graph.

\textbf{Why greedy scheduling works.}
The greedy step operates on SCCs ordered by ascending in-reach in the
condensation. The progress argument has two parts. First, at any non-terminal
iteration, the earliest unresolved SCCs are tied in condensation in-reach;
this follows from the general structural lemma on tied candidates after
the finalization threshold (Lemma~\ref{lem:tied-candidates-after-finalization}),
applied to $\condense G$, with the fact that these tied SCCs
indeed contain unresolved vertices (Lemma~\ref{lem:tied-sccs-unresolved}).
Second, once two SCCs are tied, the forced-tie property (Lemma~\ref{lem:tied-unknown-edge-general})
implies that there is no edge between them in the condensation and thus no revealed edge between any of their members in $G$. 
Querying representatives from such SCCs therefore uncovers at least one new edge and progress is guaranteed at every round.
This is the key behind the termination proof for \tournamentsort{};
see Theorem~\ref{thm:practical-termination} in Appendix~\ref{sec:stronger-finalization-general}.

\textbf{Parallelization.}
When $k$ is small relative to $n$, multiple disjoint groups can be queried
in parallel. The algorithm naturally supports this because unresolved
SCCs are handled independently until they become connected by newly
revealed edges. 

\textbf{Variable window size.}
The framework places no constraint on $k$ being fixed across rounds.
In each iteration, $k$ can be chosen adaptively based on the candidate documents' lengths, the oracle's context window, or model-specific capabilities. This is a practical advantage for heterogeneous document collections where a fixed window size either truncates long documents or underutilizes the context on short ones.

\subsection{Proof Sketch}\label{sec:guarantees}

\tournamentsort{} stops when the current top-$m$ vertices by discovered
in-reach are all resolved.
We sketch why this stopping rule yields a correct output and why it is
always reached; the full proofs appear in
Appendix~\ref{sec:stronger-finalization-general}.

\paragraph{Output correctness.}
The correctness argument rests on the following property:
if $u$ is resolved (i.e. $\kappa_G(u) = n{-}1$), every
$w \neq u$ is either in $\inreach_G(u)$ or $\outreach_G(u)$.
We show in Lemma~\ref{lem:resolved-vertex-ordering} that whenever $u$ is resolved and has discovered in-reach no
larger than that of some vertex~$v$, then $u$ truly ranks at least as
well as $v$ in the unknown tournament~$G^*$ -- that is,
$|\inreach_{G^*}(u)| \le |\inreach_{G^*}(v)|$.


At termination, every vertex $u$ in the returned set~$T$ is resolved
and has discovered in-reach at most that of every vertex outside~$T$.
Applying the lemma to each such pair certifies both the internal
ordering within~$T$ and its rank dominance over all remaining
vertices
(Corollary~\ref{cor:resolved-inreaches-are-true},
Theorem~\ref{thm:practical-output-correctness}).

\paragraph{Termination.}
Every non-terminal round discovers at least one new edge, so the
algorithm halts in at most $\binom{n}{2}$ rounds.
The argument works on the condensation $\condense{G}$, which is
always a DAG.
In any DAG, the $i$-th vertex by in-reach has in-reach at most
$i{-}1$; the longest prefix where this bound is tight defines
the \emph{finalization threshold}~$\widetilde{m}$.
Non-terminality forces $\widetilde{m} \le n'{-}2$, and the two
positions just past the threshold must share the same in-reach
(otherwise finalization would extend further).
These ``tied'' SCCs have no revealed edge between them, so the
greedy schedule -- which prioritizes them
(\S\ref{sec:algorithm}) -- is guaranteed to discover a new edge
(Theorem~\ref{thm:practical-termination}).

\subsection{Tie-breaking and Query Complexity}\label{subsec:tie-breaking-and-query-complexity}

Algorithm~\ref{alg:tournament-sort-main} leaves two choices open:
the ordering among SCCs with equal condensation in-reach, and the
representative selected from each SCC.
The correctness and termination guarantees hold for any choice, but
query efficiency depends on them.
A natural heuristic for both is to \emph{prioritize the least-resolved
entity}.
Among SCCs with equal condensation in-reach, we query those with lower
out-reach (highest positional uncertainty) and select the representative vertex with
the smallest~$\kappa_G$ (fewest established relationships), concentrating
effort where information gain is highest.
We denote this fully specified schedule $\tournamentsort^{\dagger}$
(defined in Appendix~\ref{sec:query-complexity}) and use it in all
experiments (Section~\ref{sec:experiments}).

The termination proof gives a worst-case bound of $\binom{n}{2}$
queries, but in practice \tournamentsort{} terminates much sooner:
Figure~\ref{fig:horses} shows it achieves the optimal $7$ queries on the 25-horses instance.
For top-1 selection ($m=1$), each query eliminates at least $k-1$ candidates, giving at most $\lceil(n-1)/(k-1)\rceil$ queries (Proposition~\ref{thm:top-1-complexity}).
For general $m$, we conjecture that $\tournamentsort^{\dagger}$ achieves $O((n-1)/(k-1) + (m-1)/(k-1)\cdot\log_k m)$, decomposing into a \emph{candidate reduction} term and a \emph{frontier refinement} term (Conjecture~\ref{conj:query-complexity}).
Empirically, observed query counts remain within a factor of $1.25$ of this form across $n$ from $100$ to $800$ and $k$ from $5$ to $50$ (Figure~\ref{fig:q-vs-m-grid}), suggesting the bound is tight up to lower-order terms.
A formal proof for $m>1$ remains open; see Appendix~\ref{sec:query-complexity} for details.

\section{Related Work}
\label{sec:related}

\textbf{LLM-based document reranking.}
LLM reranking methods fall into three paradigms.
\emph{Pointwise} methods score documents independently, enabling parallelism but discarding relative information; zero-shot LLM approaches include query likelihood~\citep{sachan2022improving} and relevance classification~\citep{zhuang2024beyond}, while trained cross-encoder models~\citep{contextualai2025rerank} dominate in efficiency.
\emph{Pairwise} methods recover relative preferences: \citet{qin2024large} introduced Pairwise Ranking Prompting with heapsort aggregation at $O(n \log n)$ comparisons, noting that ``pairwise comparisons are not guaranteed to be transitive.''
\emph{Listwise} methods compare multiple documents per call: RankGPT~\citep{sun2023chatgpt} and LRL~\citep{ma2023zero} concurrently established the sliding-window paradigm, processing windows of 20 documents with stride 10, while subsequent work distilled this into open-source models~\citep{pradeep2023rankvicuna,pradeep2023rankzephyr}.
AcuRank~\citep{yoon2025acurank} maintains Gaussian distributions over document relevance and performs Bayesian updates via TrueSkill, selectively reranking uncertain documents until confidence criteria are met.
All of these methods use a fixed window size; our framework accommodates variable $k$ per round (\S\ref{sec:algorithm}), adapting to heterogeneous document lengths and model-specific context limits.

The \emph{setwise} approach~\citep{zhuang2024setwise} bridges pairwise and listwise by prompting the LLM to select the most relevant from $k$ candidates, using this primitive within heapsort.
While more efficient than pairwise, setwise extracts only the \emph{winner}, discarding the remaining $\binom{k}{2} - (k-1)$ pairwise relationships.
Our framework differs by extracting the \emph{complete tournament} from each $k$-wise comparison and accumulating edges where transitive closure amplifies each query's information yield.

\textbf{Handling inconsistency in LLM rankings.}
LLM judgments frequently violate transitivity.
\citet{zeng2024llm} address this via LLM-RankFusion, measuring inconsistency patterns and resolving them through rank aggregation.
\citet{tang2024found} show that shuffling and Kemeny-optimal aggregation mitigate positional biases; similarly, TourRank~\citep{chen2025tourrank} runs multiple parallel tournaments with different random seeds and aggregates scores.
Most related to our approach, ELSPR~\citep{yu2025elspr} uses SCC analysis to quantify non-transitivity via structural entropy, but applies it to filter training examples for evaluator LLMs rather than for query-efficient selection.
REALM~\citep{wang2025realm} takes a probabilistic approach, modeling relevance as Gaussian distributions updated via Bayesian inference.
In contrast, our framework is deterministic and treats cycles as structure (tiered rankings) rather than a defect to be aggregated away.

\textbf{Tournament theory and multi-wise comparisons.}
Our framework builds on tournament graph
theory~\citep{brandt2016handbook,laslier1997tournament,landau1953dominance}.
The query complexity of tournament solutions under a pairwise oracle
was studied by \citet{dey2017query}, who proved that finding the
Copeland set, Slater set, and most other solutions requires
$\Omega(n^{2})$ queries in the worst case. When the top cycle has
bounded size~$c$, all solutions can be found in
$O(nc + n\log n/\log(1-1/c))$ queries; however, this identifies only
the first SCC (the top cycle) and does not recover the full
condensation ordering over all components.

On the multi-wise comparison side, \citet{ren2021sample} establish
that full-ranking feedback from $k$-wise comparisons improves sample
complexity by a factor of $k\log k$ over winner feedback, and
\citet{saha2019pac} show similar gains under the Plackett--Luce
model. These results, along with near-optimal top-$m$ algorithms
under parametric models~\citep{jang2017optimal,chen2018nearly},
assume stochastic comparison models and target total rankings rather
than structural objectives like SCC recovery.

Our setting differs from all of the above in three respects: (i)~we
assume a deterministic tournament rather than a stochastic comparison
model, (ii)~we use a $k$-wise oracle ($k \geq 2$) that reveals
complete induced tournaments rather than restricting to pairwise
queries or winner-only feedback, and (iii)~we target the SCC
decomposition ordering -- a tiered ranking that declares cyclic items
tied -- rather than a total ranking or a single tournament solution
set; no existing algorithm addresses this combination.
A more detailed discussion appears in Appendix~\ref{sec:theoretical-related-work}.

\section{Experiments}\label{sec:experiments}
\begin{figure*}[t]
    \centering
    \begin{subfigure}[b]{0.48\textwidth}
        \centering
        \includegraphics[width=\textwidth]{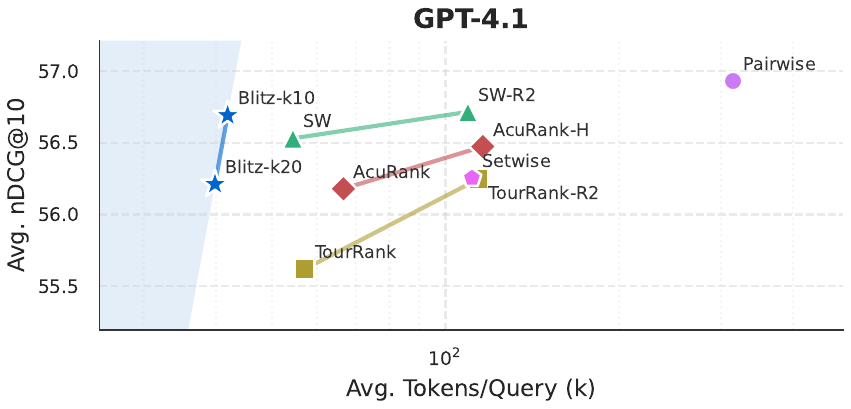}
    \end{subfigure}
    \hfill
    \begin{subfigure}[b]{0.48\textwidth}
        \centering
        \includegraphics[width=\textwidth]{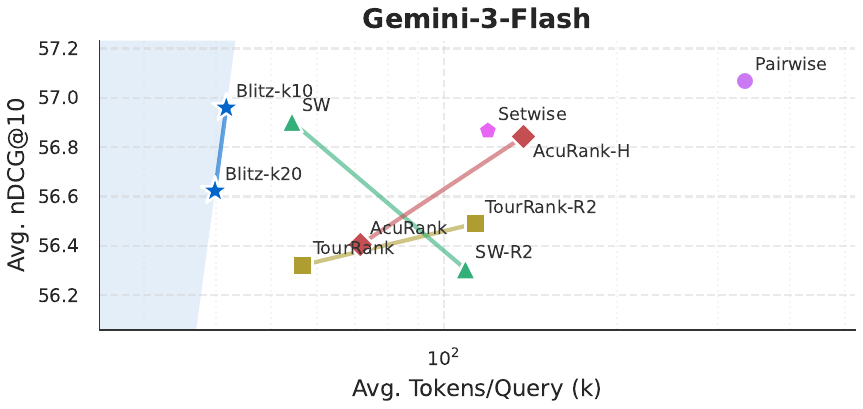}
    \end{subfigure}
        
    \begin{subfigure}[b]{0.48\textwidth}
        \centering
        \includegraphics[width=\textwidth]{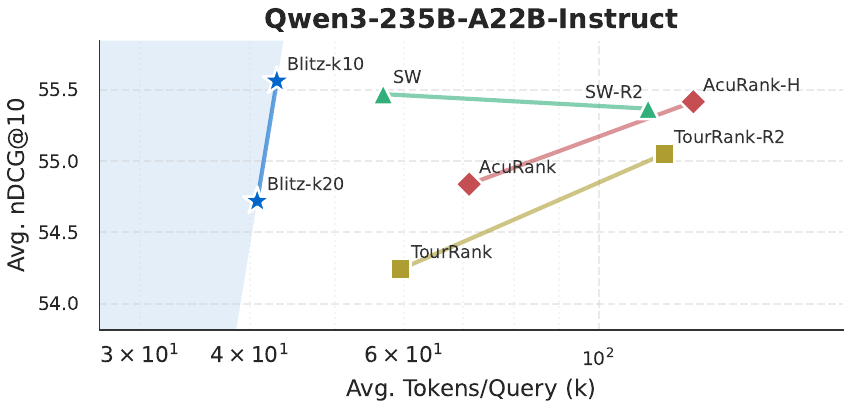}
    \end{subfigure}
    \hfill
    \begin{subfigure}[b]{0.48\textwidth}
        \centering
        \includegraphics[width=\textwidth]{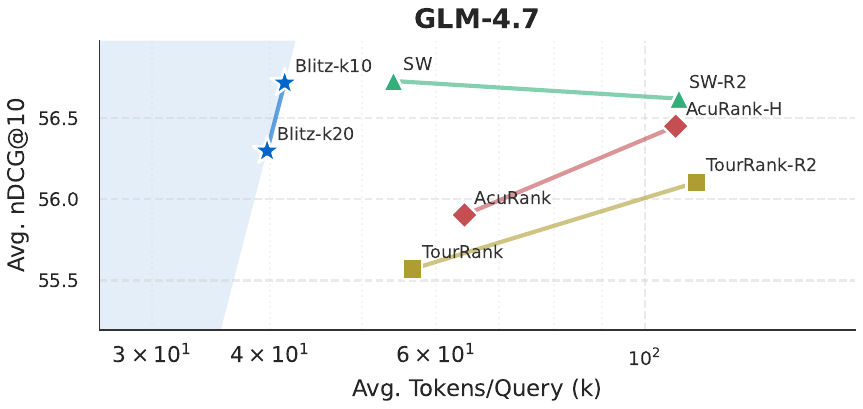}
    \end{subfigure}
    \caption{Pareto frontiers showing the accuracy-efficiency trade-off across LLM oracles. \tournamentsort{} (Algorithm~\ref{alg:tournament-sort-main}) consistently occupies the upper-left region, achieving competitive accuracy with 25--40\% fewer tokens than methods with comparable structure (\S\ref{sec:efficiency-details}).}
    \label{fig:pareto-comparison}
\end{figure*}

We evaluate \tournamentsort{} (Algorithm~\ref{alg:tournament-sort-main}) on standard document reranking benchmarks within the established retrieve-then-rerank pipeline, using an LLM as the zero-shot comparison oracle.
Our experiments address two primary questions:
(1)~Does extracting complete tournaments and propagating information via transitive closure translate to practical efficiency gains?
(2)~Do these efficiency gains compromise ranking quality, or can our framework match existing approaches with fewer oracle calls?
We compare against baseline reranking strategies across 14 datasets and 5 LLM oracles.

\subsection{Setup}
\label{sec:eval-protocol}

\textbf{Datasets.}
We evaluate on 14 datasets: six TREC Deep Learning tracks (DL19--DL23, DL-Hard) \citep{craswell2020overviewtrec2019deep, craswell2021overviewtrec2020deep, craswell2025overviewtrec2021deep, craswell2025overviewtrec2022deep, craswell2025overviewtrec2023deep, mackie2021dlhard} and eight BEIR datasets (TREC-COVID, NFCorpus, Signal-1M, News, Robust04, Touch\'e, DBPedia, SciFact) \citep{thakur2021beir}.
For each dataset, BM25~\citep{robertson2009probabilistic} retrieves the top-100 candidates per query.

\textbf{Metrics.} We measure ranking quality via nDCG@$k$~\cite{jarvelin2002cumulated} and efficiency via input tokens per query -- isolating algorithmic efficiency from implementation details. 

\textbf{Baselines.}
We compare against five LLM-based reranking methods: Sliding Windows~\citep{sun2023chatgpt} (single-pass and two-pass), TourRank~\citep{chen2025tourrank} (single and two-round), AcuRank~\citep{yoon2025acurank} (standard and high-precision), Setwise~\citep{zhuang2024setwise}, and Pairwise~\citep{qin2024large}.
See Appendix~\ref{sec:baseline-details} for detailed comparison rationale.

\textbf{Oracles.}
All methods use the same underlying LLM with the RankGPT prompt format~\citep{sun2023chatgpt}.
We evaluate five diverse LLMs: GPT-4.1, Gemini-3-Flash, GLM-4.7~\citep{5team2025glm45agenticreasoningcoding}, DeepSeek-V3.2~\citep{deepseekai2025deepseekv32pushingfrontieropen}, and Qwen3-235B-A22B-Instruct~\citep{qwen3technicalreport}, allowing us to assess whether our efficiency gains generalize across models of varying capabilities.

\textbf{Our method.}
We evaluate \tournamentsort{}, or simply \algshortname{}, with two window sizes: $k=10$ and $k=20$.
We set target $m = 10$, so \tournamentsort{} terminates once the current top-10 documents are resolved.

\subsection{Main Results: Accuracy-Efficiency Frontier}
\label{sec:main-results}

\begin{table}[t]
\centering
\caption{Summary of reranking quality and efficiency, macro-averaged across 14 datasets and 2 LLM oracles (GPT-4.1 \& Gemini-3-Flash). Relative cost is measured against \algshortname{}-k20.}
\label{tab:summary-results}
\small
\begin{tabular}{lcrc}
\toprule
Method & nDCG@10 & Tokens & Rel.\ Cost \\
\midrule
BM25 (no rerank) & 41.1 & 0 & --- \\
\midrule
Pairwise & \textbf{57.0} & 324k & 8.1$\times$ \\
Setwise & 56.6 & 115k & 2.9$\times$ \\
\midrule
TourRank & 56.0 & 57k & 1.4$\times$ \\
TourRank-R2 & 56.4 & 114k & 2.8$\times$ \\
SW & 56.7 & 54k & 1.4$\times$ \\
SW-R2 & 56.5 & 109k & 2.7$\times$ \\
AcuRank & 56.3 & 69k & 1.7$\times$ \\
AcuRank-H & 56.6 & 127k & 3.2$\times$ \\
\midrule
\algshortname{}-k20 & 56.4 & \textbf{40k} & \textbf{1.0}$\times$ \\
\algshortname{}-k10 & \underline{56.9} & \underline{42k} & \underline{1.1}$\times$ \\
\bottomrule
\end{tabular}
\end{table}

\begin{table}[ht]
\centering
\caption{Macro-averaged nDCG@$k$ across 14 datasets $\times$ 5 models.}
\label{tab:ndcg-cutoffs}
\begin{tabular}{lccc}
\toprule
Method & nDCG@1 & nDCG@5 & nDCG@10 \\
\midrule
BM25 retrieval & 46.5 & 43.0 & 41.1 \\
TourRank & 63.3 & 58.8 & 55.5 \\
TourRank-R2 & 63.5 & 59.3 & 56.1 \\
SW & \textbf{66.2} & \underline{59.8} & \textbf{56.6} \\
SW-R2 & 65.7 & 59.6 & 56.4 \\
AcuRank & 64.5 & 59.5 & 55.9 \\
AcuRank-H & 65.6 & \underline{59.8} & 56.4 \\
\midrule
\textbf{\algshortname{}-k20} & \underline{66.1} & 59.7 & 56.0 \\
\textbf{\algshortname{}-k10} & 66.0 & \textbf{60.1} & \underline{56.5} \\
\bottomrule
\end{tabular}
\end{table}

Our main results (summarized in Table~\ref{tab:summary-results} and Figure~\ref{fig:pareto-comparison}; full per-dataset breakdown in Table~\ref{tab:main-results}) show that the tournament graph framework achieves comparable or superior ranking quality while consuming substantially fewer tokens, validating the hypothesis that transitive closure yields practical efficiency gains.

\textbf{Efficiency gains.}
\tournamentsort{} requires significantly fewer tokens than all baselines across every oracle.
With GPT-4.1, \algshortname{}-k10 and \algshortname{}-k20 consume 42k and 40k tokens per query, respectively -- a 22--26\% reduction compared to SW (54k) and TourRank (57k), and 37--40\% fewer than AcuRank (67k).
The gap widens dramatically against comparison-based methods: Pairwise requires 315k tokens (7.5$\times$ more than \algshortname{}-k10) and Setwise requires 111k tokens (2.6$\times$ more).
Higher-computation variants exacerbate these differences: SW-R2 (109k), TourRank-R2 (114k), and AcuRank-H (116k) all consume 2.6--2.9$\times$ more tokens than \algshortname{}-k10.

\textbf{Ranking quality and Model Generalization.}
The efficiency-accuracy trade-off generalizes robustly across all five LLM oracles (Table~\ref{tab:summary-results}).
Despite using fewer oracle calls, \tournamentsort{} matches or exceeds baseline accuracy across almost all configurations.
With GPT-4.1, \algshortname{}-k10 achieves 56.7 average nDCG@10 -- matching the best sliding-window variant (SW-R2: 56.7) at less than 40\% of its token cost, and outperforming SW (56.5), TourRank (55.6), AcuRank (56.2), and Setwise (56.3). Only Pairwise is marginally higher (56.9) but at 7.5$\times$ the cost.
With Gemini-3-Flash, \algshortname{}-k10 achieves 57.0 nDCG@10, within 0.1 points of Pairwise (57.1) while using 8$\times$ fewer tokens (42k vs.\ 334k).
With GLM-4.7, \algshortname{}-k10 (56.7) matches SW (56.7) and exceeds TourRank (55.6) at 24\% lower token cost (41k vs.\ 54k).
With DeepSeek-V3.2, \algshortname{}-k10 (56.6) still outperforms TourRank (55.5) and AcuRank (56.0) at lower cost.
With Qwen3-235B -- the weakest oracle -- \algshortname{}-k10 (55.6) exceeds all baselines except SW (55.5, within noise), again at reduced token consumption (43k vs.\ 57k). Table~\ref{tab:ndcg-cutoffs} shows that trends for nDCG@1 and nDCG@5 are consistent.

\textbf{Pareto dominance.}
Across all oracles, \tournamentsort{} consistently achieves quality comparable to the most accurate methods at a fraction of their cost (Figure~\ref{fig:pareto-comparison}).
Methods that match \algshortname{}-k10's accuracy (e.g., SW-R2, AcuRank-H) require 2--3$\times$ more tokens; methods that match its efficiency achieve lower accuracy.
Appendix~\ref{sec:efficiency-details} provides further efficiency details.

\subsection{Effect of Window Size}
\label{sec:window-size}

\tournamentsort{}'s primary user-facing hyperparameter is the window size $k$.
Across all 5 oracles (Table~\ref{tab:main-results}), \algshortname{}-k10 consistently outperforms \algshortname{}-k20 in ranking quality: by 0.5 with GPT-4.1 (56.7 vs.\ 56.2), 0.4 with Gemini-3-Flash (57.0 vs.\ 56.6), and 0.9 with Qwen3-235B (55.6 vs.\ 54.7).
The token cost difference between configurations is modest (40--43k), suggesting that smaller windows, with more queries but finer-grained comparisons, provide better accuracy.
We note that $k{=}10$ outperforms $k{=}20$ despite comparing fewer documents per query; as analyzed in \S\ref{sec:scc-analysis}, larger windows cause LLMs to produce more cyclic judgments among similar documents due to positional attention limitations.
Additionally, convergence is highly predictable: for $k{=}10$, \tournamentsort{} terminates in 12--15 rounds (mean 13.6, std 0.58), reflecting the progress guarantee of the greedy SCC schedule (Theorem~\ref{thm:practical-termination}) and enabling reliable cost estimation.
See Appendix~\ref{sec:convergence-details} for detailed convergence analysis.

\begin{table}[t]
\centering
\caption{nDCG@10 at matched window sizes (GPT-4.1, TREC DL19 and DL20). Sliding Window's quality degrades sharply at $k{=}10$ while \tournamentsort{} maintains comparable performance.}
\label{tab:sw-k10}
\begin{tabular}{llrr}
\toprule
\textbf{Method} & \textbf{k} & \textbf{DL19} & \textbf{DL20} \\
\midrule
Sliding Window    & 20 & 74.0 & 70.8 \\
Sliding Window    & 10 & 56.4 & 53.2 \\
\midrule 
\tournamentsort   & 20 & 74.6 & 70.7 \\
\tournamentsort   & 10 & 73.6 & 72.4 \\
\bottomrule
\end{tabular}
\end{table}

\textbf{Comparison with Sliding Window at $k{=}10$.}
Table~\ref{tab:sw-k10} compares both methods at matched window sizes on DL19 and DL20.
Sliding Window's quality degrades sharply from $k{=}20$ to $k{=}10$ (74.0$\to$56.4 on DL19), because with stride 5 it propagates only the top-5 documents per pass which is insufficient to identify the top 10.
In contrast, \tournamentsort{} at $k{=}10$ maintains quality comparable to $k{=}20$ (73.6 vs.\ 74.6 on DL19), because correctness is certified by the resolution criterion rather than window coverage.

\subsection{Analysis of Strongly Connected Components}
\label{sec:scc-analysis}

We analyze SCCs formed by \tournamentsort{} on DL19 using GPT-4.1 to understand when and why cycles arise, and what they reveal about ranking difficulty.

\begin{table}[t]
\centering
\caption{SCC statistics at convergence on DL19 (43 queries) using GPT-4.1. Larger $k$ requires fewer rounds but produces more cycles. This reflects a pattern of increased cycle occurrences when comparing many  documents simultaneously.}
\label{tab:scc-evolution}
\small
\begin{tabular}{lcccc}
\toprule
Config & Queries & \#Rounds & Final \#SCCs & Avg SCC Size \\
\midrule
$k=10$ & 43 & 13.4 & 93.7 & 1.069 \\
$k=20$ & 43 & 6.6 & 85.3 & 1.180 \\
\bottomrule
\end{tabular}
\end{table}

\begin{figure}[t]
    \centering
    \begin{subfigure}[b]{0.48\columnwidth}
        \centering
        \includegraphics[width=\textwidth]{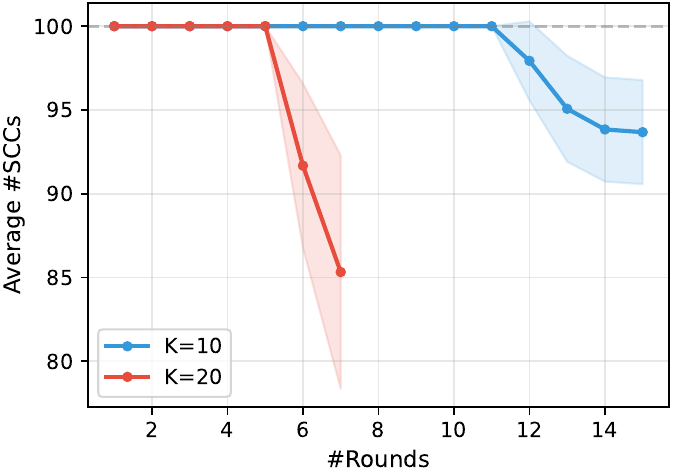}
        \caption{Number of SCCs}
        \label{fig:scc-evolution-count}
    \end{subfigure}
    \hfill
    \begin{subfigure}[b]{0.48\columnwidth}
        \centering
        \includegraphics[width=\textwidth]{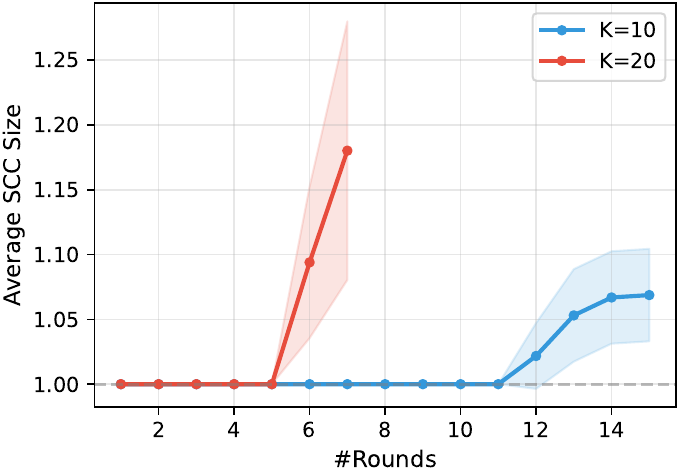}
        \caption{Average SCC size}
        \label{fig:scc-evolution-size}
    \end{subfigure}
    \caption{Evolution of SCCs on DL19 with GPT-4.1. Solid lines and shaded regions show means and variance across queries, respectively. \textbf{(a)} Both $k$'s begin with 100 singleton SCCs. $k{=}20$ forms cycles earlier (rounds 5--7), ending with $\sim$85 SCCs. $k{=}10$ forms fewer cycles and also later. \textbf{(b)} Average SCC size follows a similar pattern: $k{=}20$ reaches 1.18 average size by round 7, while $k{=}10$ reaches only 1.07 by round 15. 
    }
    \label{fig:scc-evolution}
\end{figure}

\textbf{Evolution of Strongly Connected Components.}
Table~\ref{tab:scc-evolution} summarizes SCC statistics at convergence, and Figure~\ref{fig:scc-evolution} traces their evolution across rounds.
Both configurations begin with 100 singleton SCCs (one per document).
With $k{=}20$, merging begins earlier -- around rounds 5--7 -- as \tournamentsort{} compares larger document sets where cyclic judgments are more likely.
By convergence, $k{=}20$ produces 85.3 SCCs with average size 1.18, while $k{=}10$ produces 93.7 SCCs with average size 1.07.

The shaded regions in Figure~\ref{fig:scc-evolution} indicate variance across queries.
The $k{=}20$ configuration exhibits higher variance in both SCC count and size, reflecting greater sensitivity to query-specific document similarity.
This explains the accuracy gap between configurations (Table~\ref{tab:main-results}): larger windows induce more ties, and ties at the top-$m$ boundary directly impact nDCG@10.

\begin{figure}[t]
    \centering
    \begin{subfigure}[b]{\columnwidth}
        \centering
        \includegraphics[width=\textwidth]{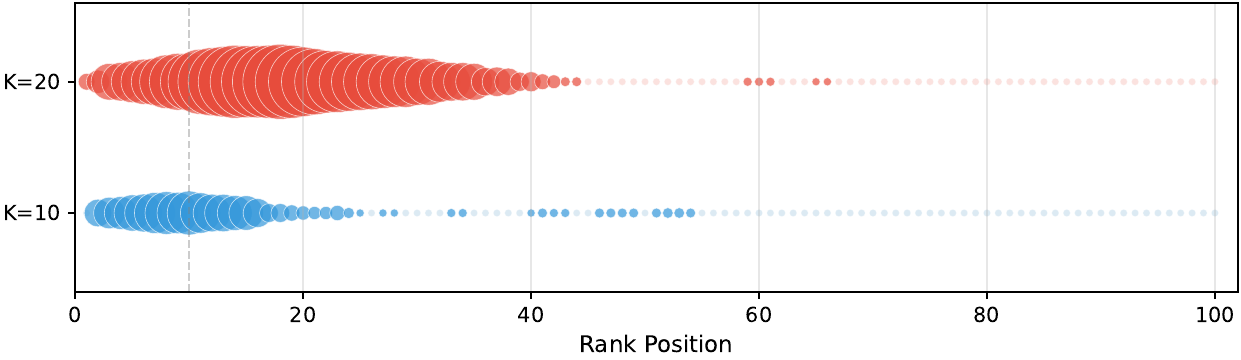}
        \caption{SCC locations by rank position (bubble size $\propto$ SCC size)}
        \label{fig:scc-locations}
    \end{subfigure}    
    \begin{subfigure}[b]{\columnwidth}
        \centering
        \includegraphics[width=\textwidth]{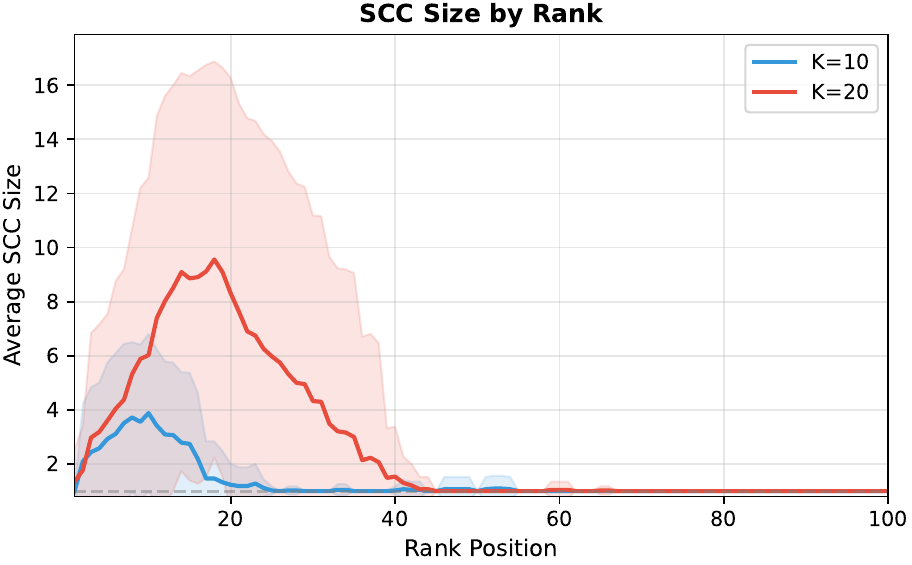}
        \caption{Average SCC size by rank position}
        \label{fig:scc-size-by-rank}
    \end{subfigure}
    \caption{Spatial distribution of strongly connected components after convergence on DL19 using GPT-4.1. \\ \textbf{(a)} Each bubble represents an SCC; size indicates number of documents. The dashed vertical line marks rank 10 (top-$m$ boundary).  $k{=}20$ produces larger SCCs concentrated in ranks 10--40, while $k{=}10$ produces smaller SCCs focused in ranks 1--20. \\ \textbf{(b)} Average SCC size exhibits a wave pattern: both configurations show peaks in mid-ranks rather than at the tail. 
    }
    \label{fig:scc-position}
\end{figure}


\textbf{Larger windows produce larger SCCs.}
At every rank position, $k{=}20$ produces larger SCCs than $k{=}10$.
The effect is most pronounced in mid-ranks: at rank 15, $k{=}20$ averages 8.86 documents per SCC compared to 2.74 for $k{=}10$.
This likely reflects the ``lost in the middle'' phenomenon~\citep{liu2024lost}: when comparing 20 documents simultaneously, LLMs struggle to attend to items in the middle of the list, producing inconsistent judgments that manifest as cycles.
Smaller windows ($k{=}10$) reduce this burden, yielding more consistent orderings.

\begin{table}[t]
\centering
\caption{BM25 score variance within SCCs vs.\ neighboring documents on DL19 using GPT-4.1. Documents within SCCs exhibit ${\sim}40\%$ lower BM25 variance than equal-sized neighbor groups (ratio ${\sim}0.6$), confirming that cycles capture genuinely similar documents rather than arbitrary ties.}
\label{tab:scc-bm25-variance}
\tiny
\begin{tabular}{lccccc}
\toprule
Config & SCCs ($\geq$2) & Avg Size & Within-SCC Std & Neighbor Std & Ratio \\
\midrule
$K=10$ & $100$ & $3.72$ & $0.605$ & $1.032$ & $0.59$ \\
$K=20$ & $126$ & $6.01$ & $0.695$ & $1.125$ & $0.62$ \\
\bottomrule
\end{tabular}
\end{table}

\textbf{SCCs Capture Genuinely Similar Documents.}
We use BM25 score variance as a proxy for document similarity: documents with similar lexical relevance to the query should have similar BM25 scores.
For each SCC of size $\geq$2, we compute the standard deviation of BM25 scores within the SCC and among equal-sized groups of neighboring (non-tied) documents.
Table~\ref{tab:scc-bm25-variance} shows that documents within SCCs have substantially lower BM25 variance than their neighbors.
For $k{=}10$, within-SCC standard deviation is 0.605 compared to 1.032 for neighbors -- a ratio of 0.59.
For $k{=}20$, the ratio is 0.62 (0.695 vs.\ 1.125).
This ${\sim}40\%$ reduction in variance confirms that cycles capture documents that are genuinely similar in lexical relevance, not arbitrary ties.
Notably, $k{=}10$ SCCs have lower within-SCC variance (0.605 vs.\ 0.695) despite smaller average SCC size (3.72 vs.\ 6.01). This suggests that $k{=}10$ successfully resolves easier ambiguities through finer comparisons, leaving only the most difficult cases as unresolved cycles -- explaining its consistent accuracy advantage.

\textbf{Oracle noise.}
Our framework assumes a deterministic oracle (\S\ref{sec:setup}), but LLM oracles are inherently noisy.
The SCC analysis above provides indirect evidence that this noise is well-behaved: cycles capture genuinely similar documents (${\sim}40\%$ lower BM25 variance), suggesting that noisy edges are concentrated among hard-to-distinguish items rather than corrupting long transitive chains.
The consistent performance across five oracles of varying capability further indicates that the efficiency gains are robust to oracle-specific noise characteristics.

\section{Conclusion}

We presented a tournament graph framework for top-$m$ selection via $k$-wise comparison oracles, alongside \tournamentsort{}, a greedy algorithm with provable correctness and termination guarantees.
Empirically, \tournamentsort{} achieves Pareto dominance across 14 benchmarks and 5 LLM oracles, matching or exceeding accuracy while reducing token consumption by 25--40\% compared to comparable methods, and up to 7$\times$ compared to pairwise approaches.
Moreover, convergence is highly predictable, enabling reliable cost estimation.
Finally, our analysis of strongly connected components confirms that cycles capture genuine lexical similarity rather than arbitrary noise, validating the interpretation of non-transitivity as structure.

\subsection{Future Work}\label{sec:future-work}
\textbf{Noisy oracle.} Our framework assumes a deterministic oracle: each pairwise comparison has a fixed ground-truth outcome.
Real oracles -- LLMs, crowdworkers, human experts -- are noisy, and this noise interacts asymmetrically with transitive inference: in principle, a single erroneous edge can collapse a long chain into an SCC.
While our empirical analysis suggests such catastrophic collapse is rare with capable models, a principled treatment of noise remains a key challenge.
We also leave the explicit incorporation of prior retrieval scores into query selection to future work.

\textbf{Query complexity.}
We showed $\lceil(n-1)/(k-1)\rceil$ query complexity for top-1 selection and conjectured a tight bound for general $m$ (Conjecture~\ref{conj:query-complexity}).
A formal proof or matching lower bound remains open.

\textbf{Noisy and probabilistic oracles.}
Extending the framework to handle oracle noise is a significant open problem.
One approach models each edge as a random variable whose confidence accumulates across repeated or corroborating observations.
A key challenge is that edges differ in \emph{structural importance}: an edge along a long chain discriminates many pairs transitively, so its corruption is catastrophic, while a leaf edge affects only one comparison.
Models that account for this asymmetry -- weighting edges by transitive reach, or using soft SCCs that resist collapse from isolated errors -- could yield algorithms that degrade gracefully under noise.

\textbf{Incorporating priors.} First-stage retrieval scores provide a natural prior over the ranking.
Algorithms that initialize edge beliefs with priors, or prioritize queries where the prior is uncertain, could improve both efficiency and robustness. This connects to active learning formulations where queries maximize information gain relative to prior belief.

We hope the deterministic framework developed here provides a foundation for these extensions, much as noiseless sorting algorithms underpin the study of noisy comparison models.
\section*{Impact Statement}

This paper develops algorithmic methods for query-efficient ranking using expensive comparison oracles. By reducing the number of oracle calls required to identify top candidates, our framework lowers computational costs when LLMs serve as oracles -- contributing to more sustainable use of large-scale models. The techniques are domain-agnostic and apply broadly to ranking problems with costly comparisons (crowdsourcing, human evaluation, tournament design). We do not foresee specific negative societal consequences beyond those common to advances in information retrieval and ranking systems.
\bibliographystyle{icml2026}
\bibliography{bibliography}

@inproceedings{dey2017query,
  title     = {Query complexity of tournament solutions},
  author    = {Dey, Palash},
  booktitle = {Proceedings of the AAAI Conference on Artificial Intelligence},
  volume    = {31},
  year      = {2017}
}

@misc{qwen3technicalreport,
  title         = {Qwen3 Technical Report},
  author        = {Qwen Team},
  year          = {2025},
  eprint        = {2505.09388},
  archiveprefix = {arXiv},
  primaryclass  = {cs.CL},
  url           = {https://arxiv.org/abs/2505.09388}
}

@article{ma2023zero,
  title   = {Zero-shot listwise document reranking with a large language model},
  author  = {Ma, Xueguang and Zhang, Xinyu and Pradeep, Ronak and Lin, Jimmy},
  journal = {arXiv preprint arXiv:2305.02156},
  year    = {2023}
}

@misc{deepseekai2025deepseekv32pushingfrontieropen,
  title         = {DeepSeek-V3.2: Pushing the Frontier of Open Large Language Models},
  author        = {DeepSeek-AI and Aixin Liu and Aoxue Mei and Bangcai Lin and Bing Xue and Bingxuan Wang and Bingzheng Xu and Bochao Wu and Bowei Zhang and Chaofan Lin and Chen Dong and Chengda Lu and Chenggang Zhao and Chengqi Deng and Chenhao Xu and Chong Ruan and Damai Dai and Daya Guo and Dejian Yang and Deli Chen and Erhang Li and Fangqi Zhou and Fangyun Lin and Fucong Dai and Guangbo Hao and Guanting Chen and Guowei Li and H. Zhang and Hanwei Xu and Hao Li and Haofen Liang and Haoran Wei and Haowei Zhang and Haowen Luo and Haozhe Ji and Honghui Ding and Hongxuan Tang and Huanqi Cao and Huazuo Gao and Hui Qu and Hui Zeng and Jialiang Huang and Jiashi Li and Jiaxin Xu and Jiewen Hu and Jingchang Chen and Jingting Xiang and Jingyang Yuan and Jingyuan Cheng and Jinhua Zhu and Jun Ran and Junguang Jiang and Junjie Qiu and Junlong Li and Junxiao Song and Kai Dong and Kaige Gao and Kang Guan and Kexin Huang and Kexing Zhou and Kezhao Huang and Kuai Yu and Lean Wang and Lecong Zhang and Lei Wang and Liang Zhao and Liangsheng Yin and Lihua Guo and Lingxiao Luo and Linwang Ma and Litong Wang and Liyue Zhang and M. S. Di and M. Y Xu and Mingchuan Zhang and Minghua Zhang and Minghui Tang and Mingxu Zhou and Panpan Huang and Peixin Cong and Peiyi Wang and Qiancheng Wang and Qihao Zhu and Qingyang Li and Qinyu Chen and Qiushi Du and Ruiling Xu and Ruiqi Ge and Ruisong Zhang and Ruizhe Pan and Runji Wang and Runqiu Yin and Runxin Xu and Ruomeng Shen and Ruoyu Zhang and S. H. Liu and Shanghao Lu and Shangyan Zhou and Shanhuang Chen and Shaofei Cai and Shaoyuan Chen and Shengding Hu and Shengyu Liu and Shiqiang Hu and Shirong Ma and Shiyu Wang and Shuiping Yu and Shunfeng Zhou and Shuting Pan and Songyang Zhou and Tao Ni and Tao Yun and Tian Pei and Tian Ye and Tianyuan Yue and Wangding Zeng and Wen Liu and Wenfeng Liang and Wenjie Pang and Wenjing Luo and Wenjun Gao and Wentao Zhang and Xi Gao and Xiangwen Wang and Xiao Bi and Xiaodong Liu and Xiaohan Wang and Xiaokang Chen and Xiaokang Zhang and Xiaotao Nie and Xin Cheng and Xin Liu and Xin Xie and Xingchao Liu and Xingkai Yu and Xingyou Li and Xinyu Yang and Xinyuan Li and Xu Chen and Xuecheng Su and Xuehai Pan and Xuheng Lin and Xuwei Fu and Y. Q. Wang and Yang Zhang and Yanhong Xu and Yanru Ma and Yao Li and Yao Li and Yao Zhao and Yaofeng Sun and Yaohui Wang and Yi Qian and Yi Yu and Yichao Zhang and Yifan Ding and Yifan Shi and Yiliang Xiong and Ying He and Ying Zhou and Yinmin Zhong and Yishi Piao and Yisong Wang and Yixiao Chen and Yixuan Tan and Yixuan Wei and Yiyang Ma and Yiyuan Liu and Yonglun Yang and Yongqiang Guo and Yongtong Wu and Yu Wu and Yuan Cheng and Yuan Ou and Yuanfan Xu and Yuduan Wang and Yue Gong and Yuhan Wu and Yuheng Zou and Yukun Li and Yunfan Xiong and Yuxiang Luo and Yuxiang You and Yuxuan Liu and Yuyang Zhou and Z. F. Wu and Z. Z. Ren and Zehua Zhao and Zehui Ren and Zhangli Sha and Zhe Fu and Zhean Xu and Zhenda Xie and Zhengyan Zhang and Zhewen Hao and Zhibin Gou and Zhicheng Ma and Zhigang Yan and Zhihong Shao and Zhixian Huang and Zhiyu Wu and Zhuoshu Li and Zhuping Zhang and Zian Xu and Zihao Wang and Zihui Gu and Zijia Zhu and Zilin Li and Zipeng Zhang and Ziwei Xie and Ziyi Gao and Zizheng Pan and Zongqing Yao and Bei Feng and Hui Li and J. L. Cai and Jiaqi Ni and Lei Xu and Meng Li and Ning Tian and R. J. Chen and R. L. Jin and S. S. Li and Shuang Zhou and Tianyu Sun and X. Q. Li and Xiangyue Jin and Xiaojin Shen and Xiaosha Chen and Xinnan Song and Xinyi Zhou and Y. X. Zhu and Yanping Huang and Yaohui Li and Yi Zheng and Yuchen Zhu and Yunxian Ma and Zhen Huang and Zhipeng Xu and Zhongyu Zhang and Dongjie Ji and Jian Liang and Jianzhong Guo and Jin Chen and Leyi Xia and Miaojun Wang and Mingming Li and Peng Zhang and Ruyi Chen and Shangmian Sun and Shaoqing Wu and Shengfeng Ye and T. Wang and W. L. Xiao and Wei An and Xianzu Wang and Xiaowen Sun and Xiaoxiang Wang and Ying Tang and Yukun Zha and Zekai Zhang and Zhe Ju and Zhen Zhang and Zihua Qu},
  year          = {2025},
  eprint        = {2512.02556},
  archiveprefix = {arXiv},
  primaryclass  = {cs.CL},
  url           = {https://arxiv.org/abs/2512.02556}
}

@misc{5team2025glm45agenticreasoningcoding,
  title         = {GLM-4.5: Agentic, Reasoning, and Coding (ARC) Foundation Models},
  author        = {GLM Team and Aohan Zeng and Xin Lv and Qinkai Zheng and Zhenyu Hou and Bin Chen and Chengxing Xie and Cunxiang Wang and Da Yin and Hao Zeng and Jiajie Zhang and Kedong Wang and Lucen Zhong and Mingdao Liu and Rui Lu and Shulin Cao and Xiaohan Zhang and Xuancheng Huang and Yao Wei and Yean Cheng and Yifan An and Yilin Niu and Yuanhao Wen and Yushi Bai and Zhengxiao Du and Zihan Wang and Zilin Zhu and Bohan Zhang and Bosi Wen and Bowen Wu and Bowen Xu and Can Huang and Casey Zhao and Changpeng Cai and Chao Yu and Chen Li and Chendi Ge and Chenghua Huang and Chenhui Zhang and Chenxi Xu and Chenzheng Zhu and Chuang Li and Congfeng Yin and Daoyan Lin and Dayong Yang and Dazhi Jiang and Ding Ai and Erle Zhu and Fei Wang and Gengzheng Pan and Guo Wang and Hailong Sun and Haitao Li and Haiyang Li and Haiyi Hu and Hanyu Zhang and Hao Peng and Hao Tai and Haoke Zhang and Haoran Wang and Haoyu Yang and He Liu and He Zhao and Hongwei Liu and Hongxi Yan and Huan Liu and Huilong Chen and Ji Li and Jiajing Zhao and Jiamin Ren and Jian Jiao and Jiani Zhao and Jianyang Yan and Jiaqi Wang and Jiayi Gui and Jiayue Zhao and Jie Liu and Jijie Li and Jing Li and Jing Lu and Jingsen Wang and Jingwei Yuan and Jingxuan Li and Jingzhao Du and Jinhua Du and Jinxin Liu and Junkai Zhi and Junli Gao and Ke Wang and Lekang Yang and Liang Xu and Lin Fan and Lindong Wu and Lintao Ding and Lu Wang and Man Zhang and Minghao Li and Minghuan Xu and Mingming Zhao and Mingshu Zhai and Pengfan Du and Qian Dong and Shangde Lei and Shangqing Tu and Shangtong Yang and Shaoyou Lu and Shijie Li and Shuang Li and Shuang-Li and Shuxun Yang and Sibo Yi and Tianshu Yu and Wei Tian and Weihan Wang and Wenbo Yu and Weng Lam Tam and Wenjie Liang and Wentao Liu and Xiao Wang and Xiaohan Jia and Xiaotao Gu and Xiaoying Ling and Xin Wang and Xing Fan and Xingru Pan and Xinyuan Zhang and Xinze Zhang and Xiuqing Fu and Xunkai Zhang and Yabo Xu and Yandong Wu and Yida Lu and Yidong Wang and Yilin Zhou and Yiming Pan and Ying Zhang and Yingli Wang and Yingru Li and Yinpei Su and Yipeng Geng and Yitong Zhu and Yongkun Yang and Yuhang Li and Yuhao Wu and Yujiang Li and Yunan Liu and Yunqing Wang and Yuntao Li and Yuxuan Zhang and Zezhen Liu and Zhen Yang and Zhengda Zhou and Zhongpei Qiao and Zhuoer Feng and Zhuorui Liu and Zichen Zhang and Zihan Wang and Zijun Yao and Zikang Wang and Ziqiang Liu and Ziwei Chai and Zixuan Li and Zuodong Zhao and Wenguang Chen and Jidong Zhai and Bin Xu and Minlie Huang and Hongning Wang and Juanzi Li and Yuxiao Dong and Jie Tang},
  year          = {2025},
  eprint        = {2508.06471},
  archiveprefix = {arXiv},
  primaryclass  = {cs.CL},
  url           = {https://arxiv.org/abs/2508.06471}
}

@article{jarvelin2002cumulated,
  title     = {Cumulated gain-based evaluation of IR techniques},
  author    = {J{\"a}rvelin, Kalervo and Kek{\"a}l{\"a}inen, Jaana},
  journal   = {ACM Transactions on Information Systems (TOIS)},
  volume    = {20},
  number    = {4},
  pages     = {422--446},
  year      = {2002},
  publisher = {ACM New York, NY, USA}
}

@article{robertson2009probabilistic,
  title     = {The probabilistic relevance framework: BM25 and beyond},
  author    = {Robertson, Stephen and Zaragoza, Hugo and others},
  journal   = {Foundations and trends{\textregistered} in information retrieval},
  volume    = {3},
  number    = {4},
  pages     = {333--389},
  year      = {2009},
  publisher = {Now Publishers, Inc.}
}

@inproceedings{thakur2021beir,
  title     = {{BEIR}: A Heterogeneous Benchmark for Zero-shot Evaluation of Information Retrieval Models},
  author    = {Nandan Thakur and Nils Reimers and Andreas R{\"u}ckl{\'e} and Abhishek Srivastava and Iryna Gurevych},
  booktitle = {Thirty-fifth Conference on Neural Information Processing Systems Datasets and Benchmarks Track (Round 2)},
  year      = {2021},
  url       = {https://openreview.net/forum?id=wCu6T5xFjeJ}
}

@inproceedings{sachan2022improving,
  title     = {Improving passage retrieval with zero-shot question generation},
  author    = {Sachan, Devendra and Lewis, Mike and Joshi, Mandar and Aghajanyan, Armen and Yih, Wen-tau and Pineau, Joelle and Zettlemoyer, Luke},
  booktitle = {Proceedings of the 2022 Conference on Empirical Methods in Natural Language Processing},
  pages     = {3781--3797},
  year      = {2022}
}

@inproceedings{zhuang2024beyond,
  title     = {Beyond yes and no: Improving zero-shot llm rankers via scoring fine-grained relevance labels},
  author    = {Zhuang, Honglei and Qin, Zhen and Hui, Kai and Wu, Junru and Yan, Le and Wang, Xuanhui and Bendersky, Michael},
  booktitle = {Proceedings of the 2024 conference of the North American chapter of the Association for Computational Linguistics: Human language technologies (volume 2: short papers)},
  pages     = {358--370},
  year      = {2024}
}

@inproceedings{chen2025tourrank,
  title     = {Tourrank: Utilizing large language models for documents ranking with a tournament-inspired strategy},
  author    = {Chen, Yiqun and Liu, Qi and Zhang, Yi and Sun, Weiwei and Ma, Xinyu and Yang, Wei and Shi, Daiting and Mao, Jiaxin and Yin, Dawei},
  booktitle = {Proceedings of the ACM on Web Conference 2025},
  pages     = {1638--1652},
  year      = {2025}
}

@misc{craswell2025overviewtrec2023deep,
  title         = {Overview of the TREC 2023 deep learning track},
  author        = {Nick Craswell and Bhaskar Mitra and Emine Yilmaz and Hossein A. Rahmani and Daniel Campos and Jimmy Lin and Ellen M. Voorhees and Ian Soboroff},
  year          = {2025},
  eprint        = {2507.08890},
  archiveprefix = {arXiv},
  primaryclass  = {cs.IR},
  url           = {https://arxiv.org/abs/2507.08890}
}

@article{craswell2025overviewtrec2022deep,
  title   = {Overview of the TREC 2022 deep learning track},
  author  = {Craswell, Nick and Mitra, Bhaskar and Yilmaz, Emine and Campos, Daniel and Lin, Jimmy and Voorhees, Ellen M and Soboroff, Ian},
  journal = {arXiv preprint arXiv:2507.10865},
  year    = {2025}
}

@misc{craswell2025overviewtrec2021deep,
  title         = {Overview of the TREC 2021 deep learning track},
  author        = {Nick Craswell and Bhaskar Mitra and Emine Yilmaz and Daniel Campos and Jimmy Lin},
  year          = {2025},
  eprint        = {2507.08191},
  archiveprefix = {arXiv},
  primaryclass  = {cs.IR},
  url           = {https://arxiv.org/abs/2507.08191}
}

@misc{craswell2021overviewtrec2020deep,
  title         = {Overview of the TREC 2020 deep learning track},
  author        = {Nick Craswell and Bhaskar Mitra and Emine Yilmaz and Daniel Campos},
  year          = {2021},
  eprint        = {2102.07662},
  archiveprefix = {arXiv},
  primaryclass  = {cs.IR},
  url           = {https://arxiv.org/abs/2102.07662}
}

@misc{craswell2020overviewtrec2019deep,
  title         = {Overview of the TREC 2019 deep learning track},
  author        = {Nick Craswell and Bhaskar Mitra and Emine Yilmaz and Daniel Campos and Ellen M. Voorhees},
  year          = {2020},
  eprint        = {2003.07820},
  archiveprefix = {arXiv},
  primaryclass  = {cs.IR},
  url           = {https://arxiv.org/abs/2003.07820}
}

@inproceedings{mackie2021dlhard,
  title     = {How Deep is your Learning: the DL-HARD Annotated Deep Learning Dataset},
  author    = {Mackie, Iain and Dalton, Jeffrey and Yates, Andrew},
  booktitle = {Proceedings of the 44th International ACM SIGIR Conference on Research and Development in Information Retrieval},
  year      = {2021}
}

@inproceedings{tang2024found,
  title     = {Found in the middle: Permutation self-consistency improves listwise ranking in large language models},
  author    = {Tang, Raphael and Zhang, Crystina and Ma, Xueguang and Lin, Jimmy and T{\"u}re, Ferhan},
  booktitle = {Proceedings of the 2024 conference of the North American chapter of the Association for Computational Linguistics: human language technologies (volume 1: long papers)},
  pages     = {2327--2340},
  year      = {2024}
}

@inproceedings{wang2025realm,
  title     = {REALM: Recursive Relevance Modeling for LLM-based Document Re-Ranking},
  author    = {Wang, Pinhuan and Xia, Zhiqiu and Liao, Chunhua and Wang, Feiyi and Liu, Hang},
  booktitle = {Proceedings of the 2025 Conference on Empirical Methods in Natural Language Processing},
  pages     = {23875--23889},
  year      = {2025}
}

@book{brandt2016handbook,
  title     = {Handbook of computational social choice},
  author    = {Brandt, Felix and Conitzer, Vincent and Endriss, Ulle and Lang, J{\'e}r{\^o}me and Procaccia, Ariel D},
  year      = {2016},
  publisher = {Cambridge University Press}
}

@book{laslier1997tournament,
  title     = {Tournament solutions and majority voting},
  author    = {Laslier, Jean-Fran{\c{c}}ois},
  volume    = {7},
  year      = {1997},
  publisher = {Springer}
}

@article{alon2006ranking,
  title     = {Ranking tournaments},
  author    = {Alon, Noga},
  journal   = {SIAM Journal on Discrete Mathematics},
  volume    = {20},
  number    = {1},
  pages     = {137--142},
  year      = {2006},
  publisher = {SIAM}
}

@article{bartholdi1989voting,
  title     = {Voting schemes for which it can be difficult to tell who won the election},
  author    = {Bartholdi III, John and Tovey, Craig A and Trick, Michael A},
  journal   = {Social Choice and welfare},
  volume    = {6},
  number    = {2},
  pages     = {157--165},
  year      = {1989},
  publisher = {Springer}
}

@book{bang2008digraphs,
  title     = {Digraphs: theory, algorithms and applications},
  author    = {Bang-Jensen, J{\o}rgen and Gutin, Gregory Z},
  year      = {2008},
  publisher = {Springer Science \& Business Media}
}

@article{ren2021sample,
  title   = {Sample complexity bounds for active ranking from multi-wise comparisons},
  author  = {Ren, Wenbo and Liu, Jia and Shroff, Ness},
  journal = {Advances in Neural Information Processing Systems},
  volume  = {34},
  pages   = {4290--4300},
  year    = {2021}
}

@inproceedings{saha2019pac,
  title        = {Pac battling bandits in the plackett-luce model},
  author       = {Saha, Aadirupa and Gopalan, Aditya},
  booktitle    = {Algorithmic Learning Theory},
  pages        = {700--737},
  year         = {2019},
  organization = {PMLR}
}

@article{jang2017optimal,
  title   = {Optimal sample complexity of m-wise data for top-k ranking},
  author  = {Jang, Minje and Kim, Sunghyun and Suh, Changho and Oh, Sewoong},
  journal = {Advances in Neural Information Processing Systems},
  volume  = {30},
  year    = {2017}
}

@inproceedings{chen2018nearly,
  title        = {A nearly instance optimal algorithm for top-k ranking under the multinomial logit model},
  author       = {Chen, Xi and Li, Yuanzhi and Mao, Jieming},
  booktitle    = {Proceedings of the Twenty-Ninth Annual ACM-SIAM Symposium on Discrete Algorithms},
  pages        = {2504--2522},
  year         = {2018},
  organization = {SIAM}
}

@article{feige1994computing,
  title     = {Computing with noisy information},
  author    = {Feige, Uriel and Raghavan, Prabhakar and Peleg, David and Upfal, Eli},
  journal   = {SIAM Journal on Computing},
  volume    = {23},
  number    = {5},
  pages     = {1001--1018},
  year      = {1994},
  publisher = {SIAM}
}

@inproceedings{gu2023optimal,
  title     = {Optimal bounds for noisy sorting},
  author    = {Gu, Yuzhou and Xu, Yinzhan},
  booktitle = {Proceedings of the 55th Annual ACM Symposium on Theory of Computing},
  pages     = {1502--1515},
  year      = {2023}
}

@inproceedings{yoon2025acurank,
  title     = {AcuRank: Uncertainty-Aware Adaptive Computation for Listwise Reranking},
  author    = {Soyoung Yoon and Gyuwan Kim and GYU-HWUNG CHO and seung-won hwang},
  booktitle = {The Thirty-ninth Annual Conference on Neural Information Processing Systems},
  year      = {2025},
  url       = {https://openreview.net/forum?id=H918WyPf0s}
}

@inproceedings{sun2023chatgpt,
  title     = {Is {ChatGPT} Good at Search? Investigating Large Language Models as Re-Ranking Agents},
  author    = {Sun, Weiwei and Yan, Lingyong and Ma, Xinyu and Wang, Shuaiqiang and Ren, Pengjie and Chen, Zhumin and Yin, Dawei and Ren, Zhaochun},
  booktitle = {Proceedings of the 2023 Conference on Empirical Methods in Natural Language Processing},
  pages     = {14918--14937},
  year      = {2023},
  address   = {Singapore},
  publisher = {Association for Computational Linguistics},
  doi       = {10.18653/v1/2023.emnlp-main.923},
  url       = {https://aclanthology.org/2023.emnlp-main.923/}
}

@inproceedings{zhuang2024setwise,
  title     = {A Setwise Approach for Effective and Highly Efficient Zero-shot Ranking with Large Language Models},
  author    = {Zhuang, Shengyao and Zhuang, Honglei and Koopman, Bevan and Zuccon, Guido},
  booktitle = {Proceedings of the 47th International ACM SIGIR Conference on Research and Development in Information Retrieval},
  series    = {SIGIR '24},
  pages     = {1974--1983},
  year      = {2024},
  address   = {Washington, DC, USA},
  publisher = {ACM},
  doi       = {10.1145/3626772.3657813},
  url       = {https://doi.org/10.1145/3626772.3657813}
}

@article{landau1953dominance,
  title   = {On dominance relations and the structure of animal societies. III. The condition for a score structure},
  author  = {Landau, Hyman G},
  journal = {Bull. Math. Biophys},
  volume  = {15},
  number  = {2},
  pages   = {143--148},
  year    = {1953}
}

@book{moon2015topics,
  title     = {Topics on tournaments in graph theory},
  author    = {Moon, John W},
  year      = {2015},
  publisher = {Courier Dover Publications}
}

@misc{contextualai2025rerank,
  title        = {Open-Sourcing Reranker v2},
  author       = {{Contextual AI}},
  howpublished = {\url{https://contextual.ai/blog/rerank-v2}},
  year         = {2025},
  note         = {Accessed: 2025-01-13}
}

@article{liu2024lost,
  title     = {Lost in the Middle: How Language Models Use Long Contexts},
  author    = {Liu, Nelson F. and Lin, Kevin and Hewitt, John and Paranjape, Ashwin and Bevilacqua, Michele and Petroni, Fabio and Liang, Percy},
  journal   = {Transactions of the Association for Computational Linguistics},
  volume    = {12},
  pages     = {157--173},
  year      = {2024},
  publisher = {MIT Press},
  doi       = {10.1162/tacl_a_00638},
  url       = {https://aclanthology.org/2024.tacl-1.9/}
}

@book{zhou2008practical,
  title     = {A Practical Guide to Quantitative Finance Interviews},
  author    = {Zhou, X. and Jiu, B.},
  isbn      = {9781435715752},
  url       = {https://books.google.com/books?id=RosxmAYFFosC},
  year      = {2008},
  publisher = {Lulu.com}
}

@article{yu2025elspr,
  title   = {Elspr: Evaluator llm training data self-purification on non-transitive preferences via tournament graph reconstruction},
  author  = {Yu, Yan and Liu, Yilun and He, Minggui and Tao, Shimin and Meng, Weibin and Yang, Xinhua and Zhang, Li and Ma, Hongxia and Li, Dengye and Wei, Daimeng and others},
  journal = {arXiv preprint arXiv:2505.17691},
  year    = {2025}
}

@inproceedings{qin2024large,
  title     = {Large language models are effective text rankers with pairwise ranking prompting},
  author    = {Qin, Zhen and Jagerman, Rolf and Hui, Kai and Zhuang, Honglei and Wu, Junru and Yan, Le and Shen, Jiaming and Liu, Tianqi and Liu, Jialu and Metzler, Donald and others},
  booktitle = {Findings of the Association for Computational Linguistics: NAACL 2024},
  pages     = {1504--1518},
  year      = {2024}
}

@article{pradeep2023rankvicuna,
  title   = {Rankvicuna: Zero-shot listwise document reranking with open-source large language models},
  author  = {Pradeep, Ronak and Sharifymoghaddam, Sahel and Lin, Jimmy},
  journal = {arXiv preprint arXiv:2309.15088},
  year    = {2023}
}

@article{pradeep2023rankzephyr,
  title   = {RankZephyr: Effective and Robust Zero-Shot Listwise Reranking is a Breeze!},
  author  = {Pradeep, Ronak and Sharifymoghaddam, Sahel and Lin, Jimmy},
  journal = {arXiv preprint arXiv:2312.02724},
  year    = {2023}
}

@article{zeng2024llm,
  title   = {LLM-RankFusion: Mitigating Intrinsic Inconsistency in LLM-based Ranking},
  author  = {Zeng, Yifan and Tendolkar, Ojas and Baartmans, Raymond and Wu, Qingyun and Chen, Lizhong and Wang, Huazheng},
  journal = {arXiv preprint arXiv:2406.00231},
  year    = {2024}
}

\newpage
\appendix
\onecolumn


\part{Additional Experimental Details}\label{sec:additional-experiments}

\begin{table*}[!ht]
\centering
\caption{Reranking quality (nDCG@10) and efficiency (input tokens per query in thousands).}
\label{tab:main-results}
\scriptsize
\begin{tabular}{lcccccccccccccccc}
\toprule
& \multicolumn{6}{c}{TREC-DL} & \multicolumn{8}{c}{BEIR} & \multicolumn{2}{c}{Avg.} \\
\cmidrule(lr){2-7} \cmidrule(lr){8-15} \cmidrule(lr){16-17}
Method & DL19 & DL20 & DL21 & DL22 & DL23 & DLHard & COVID & NFC & Signal & News & R04 & Touche & DBP & Scif & nDCG & Tok \\
\midrule
\multicolumn{17}{c}{\textit{No Reranking}} \\
\midrule
BM25 & 50.6 & 48.0 & 44.6 & 26.9 & 26.2 & 28.5 & 59.5 & 33.7 & 33.0 & 39.5 & 40.7 & 44.2 & 31.8 & 67.9 & 41.1 & 0 \\
\midrule
\multicolumn{17}{c}{\textit{GPT-4.1}} \\
\midrule
Pairwise & 74.8 & 72.3 & 73.0 & 52.6 & 49.3 & 40.0 & 83.1 & 39.8 & 33.3 & 51.4 & 67.1 & 35.4 & 45.0 & 80.0 & \textbf{56.9} & 315k \\
Setwise & 73.5 & 70.4 & 72.0 & 51.9 & 48.2 & 39.6 & 83.3 & 39.9 & 34.0 & 49.6 & 66.6 & 35.6 & 44.4 & 78.4 & 56.3 & 111k \\
TourRank & 72.6 & 69.4 & 71.2 & 51.8 & 49.3 & 37.6 & 84.3 & 40.1 & 31.5 & 50.3 & 66.3 & 32.2 & 44.5 & 77.6 & 55.6 & 57k \\
TourRank-R2 & 74.5 & 70.9 & 70.3 & 51.8 & 50.3 & 38.6 & 84.5 & 40.4 & 33.2 & 50.0 & 66.3 & 32.4 & 45.5 & 78.7 & 56.2 & 114k \\
SW & 74.0 & 70.8 & 70.5 & 51.4 & 49.5 & 37.4 & 82.7 & 40.9 & 34.3 & 51.5 & 66.7 & 36.9 & 45.9 & 79.0 & 56.5 & 54k \\
SW-R2 & 73.4 & 72.0 & 70.5 & 52.0 & 49.6 & 39.4 & 82.6 & 40.5 & 34.0 & 51.9 & 66.0 & 36.4 & 46.4 & 79.1 & \underline{56.7} & 109k \\
AcuRank & 74.2 & 70.5 & 70.5 & 52.1 & 49.9 & 38.2 & 83.5 & 40.4 & 32.3 & 50.0 & 66.8 & 33.0 & 45.6 & 79.4 & 56.2 & 67k \\
AcuRank-H & 73.7 & 70.8 & 71.1 & 51.5 & 51.0 & 37.9 & 83.5 & 40.5 & 32.6 & 51.6 & 67.0 & 32.9 & 45.9 & 80.5 & 56.5 & 116k \\
{\algshortname{}-k20} & 74.6 & 70.7 & 70.4 & 51.4 & 48.9 & 37.3 & 82.4 & 39.9 & 33.7 & 50.1 & 66.4 & 36.5 & 45.3 & 79.2 & 56.2 & 40k \\
{\algshortname{}-k10} & 73.6 & 72.4 & 71.3 & 52.0 & 50.2 & 37.7 & 83.8 & 40.3 & 33.0 & 50.3 & 66.8 & 37.4 & 45.6 & 79.4 & \underline{56.7} & 42k \\
\midrule
\multicolumn{17}{c}{\textit{Gemini-3-Flash}} \\
\midrule
Pairwise & 74.9 & 72.4 & 72.7 & 53.3 & 50.2 & 38.9 & 82.7 & 40.8 & 34.2 & 48.8 & 67.9 & 36.3 & 45.5 & 80.4 & \textbf{57.1} & 334k \\
Setwise & 73.9 & 73.1 & 72.6 & 52.8 & 49.5 & 38.9 & 82.5 & 40.7 & 33.0 & 47.6 & 67.4 & 37.9 & 44.7 & 81.6 & 56.9 & 119k \\
TourRank & 74.9 & 72.4 & 71.9 & 51.0 & 49.8 & 36.6 & 82.7 & 40.6 & 33.8 & 49.7 & 66.4 & 35.1 & 45.6 & 77.9 & 56.3 & 57k \\
TourRank-R2 & 73.0 & 72.1 & 72.5 & 51.4 & 50.5 & 38.5 & 82.7 & 40.5 & 33.5 & 49.6 & 66.7 & 35.5 & 46.0 & 78.3 & 56.5 & 113k \\
SW & 74.0 & 73.1 & 73.0 & 52.1 & 51.1 & 38.9 & 82.6 & 40.8 & 32.6 & 49.4 & 66.5 & 36.1 & 45.7 & 80.6 & 56.9 & 54k \\
SW-R2 & 73.3 & 71.6 & 72.7 & 50.6 & 51.5 & 38.1 & 81.0 & 40.1 & 32.3 & 47.0 & 65.9 & 37.7 & 46.3 & 80.1 & 56.3 & 109k \\
AcuRank & 73.8 & 71.6 & 72.6 & 51.3 & 51.0 & 37.5 & 81.9 & 40.8 & 32.3 & 48.3 & 67.2 & 35.0 & 45.6 & 80.9 & 56.4 & 71k \\
AcuRank-H & 73.9 & 71.7 & 71.9 & 51.9 & 51.6 & 38.6 & 82.5 & 41.3 & 34.1 & 49.0 & 66.9 & 35.1 & 46.0 & 81.3 & 56.8 & 138k \\
{\algshortname{}-k20} & 74.9 & 72.1 & 72.4 & 51.4 & 49.7 & 38.1 & 82.3 & 41.1 & 33.0 & 48.2 & 65.8 & 37.9 & 45.3 & 80.4 & 56.6 & 40k \\
{\algshortname{}-k10} & 74.8 & 72.0 & 73.0 & 50.8 & 50.9 & 39.4 & 82.3 & 41.0 & 33.9 & 47.4 & 66.6 & 39.6 & 45.9 & 79.9 & \underline{57.0} & 42k \\
\midrule
\multicolumn{17}{c}{\textit{GLM-4.7}} \\
\midrule
TourRank & 74.1 & 71.2 & 69.8 & 51.0 & 49.4 & 38.4 & 83.4 & 40.0 & 30.4 & 49.1 & 66.0 & 31.8 & 45.9 & 77.5 & 55.6 & 57k \\
TourRank-R2 & 74.8 & 71.0 & 71.4 & 52.3 & 49.2 & 38.7 & 83.3 & 40.4 & 31.9 & 49.0 & 66.4 & 32.6 & 46.1 & 78.2 & 56.1 & 113k \\
SW & 74.2 & 71.6 & 71.5 & 51.7 & 48.3 & 38.5 & 82.9 & 39.8 & 34.1 & 50.4 & 66.2 & 37.6 & 47.6 & 80.0 & \textbf{56.7} & 54k \\
SW-R2 & 74.5 & 71.9 & 71.5 & 52.3 & 49.4 & 37.1 & 82.9 & 40.0 & 33.6 & 50.4 & 65.9 & 34.9 & 47.2 & 81.0 & \underline{56.6} & 109k \\
AcuRank & 74.0 & 70.2 & 70.8 & 51.8 & 48.3 & 37.5 & 82.5 & 40.1 & 31.7 & 49.8 & 66.0 & 33.0 & 46.5 & 80.4 & 55.9 & 64k \\
AcuRank-H & 75.1 & 71.4 & 71.6 & 52.4 & 48.4 & 39.3 & 83.4 & 40.5 & 32.1 & 50.3 & 66.7 & 31.7 & 46.5 & 80.9 & 56.4 & 108k \\
{\algshortname{}-k20} & 75.7 & 71.2 & 70.4 & 51.0 & 47.7 & 38.5 & 83.3 & 39.9 & 33.3 & 49.9 & 65.1 & 35.8 & 46.5 & 79.9 & 56.3 & 40k \\
{\algshortname{}-k10} & 72.8 & 72.4 & 71.3 & 52.4 & 49.2 & 38.6 & 81.8 & 39.9 & 33.9 & 50.0 & 66.5 & 38.4 & 46.5 & 80.3 & \textbf{56.7} & 41k \\
\midrule
\multicolumn{17}{c}{\textit{Qwen3-235B-A22B-Instruct}} \\
\midrule
TourRank & 71.3 & 69.0 & 69.0 & 49.7 & 47.3 & 36.6 & 84.8 & 38.7 & 28.3 & 50.3 & 64.0 & 34.5 & 42.1 & 73.8 & 54.2 & 59k \\
TourRank-R2 & 72.8 & 71.2 & 69.0 & 51.5 & 48.3 & 37.4 & 83.6 & 39.8 & 30.7 & 49.5 & 65.6 & 33.0 & 43.0 & 75.3 & 55.1 & 119k \\
SW & 73.5 & 69.7 & 70.6 & 49.9 & 47.8 & 39.2 & 83.8 & 39.6 & 32.5 & 50.5 & 64.5 & 32.5 & 45.0 & 77.3 & \underline{55.5} & 57k \\
SW-R2 & 73.5 & 71.1 & 70.2 & 51.5 & 48.4 & 37.8 & 83.6 & 39.9 & 31.6 & 47.9 & 65.3 & 33.0 & 45.1 & 76.3 & 55.4 & 114k \\
AcuRank & 73.9 & 69.4 & 69.8 & 50.7 & 48.9 & 37.3 & 82.1 & 39.2 & 29.0 & 50.3 & 65.4 & 32.2 & 43.2 & 76.3 & 54.8 & 71k \\
AcuRank-H & 74.2 & 69.6 & 70.2 & 51.3 & 49.5 & 39.0 & 82.7 & 39.6 & 30.5 & 50.3 & 65.9 & 32.1 & 44.2 & 76.5 & 55.4 & 128k \\
{\algshortname{}-k20} & 71.9 & 70.6 & 69.2 & 50.5 & 48.6 & 39.1 & 83.2 & 38.9 & 30.4 & 49.2 & 63.8 & 32.7 & 43.3 & 74.5 & 54.7 & 41k \\
{\algshortname{}-k10} & 74.1 & 71.1 & 70.3 & 49.0 & 47.3 & 39.7 & 83.1 & 39.8 & 31.9 & 50.9 & 65.3 & 35.9 & 43.5 & 75.9 & \textbf{55.6} & 43k \\
\midrule
\multicolumn{17}{c}{\textit{DeepSeek-V3.2}} \\
\midrule
TourRank & 72.7 & 71.2 & 69.5 & 49.6 & 47.8 & 38.8 & 83.3 & 40.0 & 31.4 & 50.0 & 65.2 & 33.5 & 45.0 & 79.6 & 55.5 & 56k \\
TourRank-R2 & 74.5 & 70.9 & 71.0 & 51.5 & 48.6 & 38.2 & 83.9 & 40.9 & 32.8 & 52.7 & 66.1 & 32.7 & 46.1 & 79.6 & 56.4 & 112k \\
SW & 74.6 & 71.4 & 71.2 & 52.5 & 47.5 & 40.8 & 84.3 & 40.7 & 34.1 & 51.4 & 65.6 & 40.0 & 47.2 & 80.7 & \textbf{57.3} & 54k \\
SW-R2 & 74.9 & 72.0 & 70.5 & 52.2 & 48.1 & 38.5 & 84.5 & 40.3 & 33.7 & 52.4 & 65.4 & 36.0 & 47.2 & 80.5 & \underline{56.9} & 108k \\
AcuRank & 73.6 & 71.2 & 70.9 & 52.1 & 48.5 & 37.2 & 83.5 & 40.5 & 31.0 & 53.1 & 66.3 & 31.3 & 45.6 & 79.3 & 56.0 & 69k \\
AcuRank-H & 74.5 & 71.4 & 71.3 & 52.4 & 48.8 & 39.4 & 84.5 & 41.2 & 32.2 & 52.2 & 66.8 & 32.6 & 46.4 & 80.9 & 56.8 & 131k \\
{\algshortname{}-k20} & 74.2 & 71.6 & 69.6 & 52.2 & 47.4 & 39.1 & 82.4 & 40.2 & 32.9 & 52.4 & 64.3 & 35.1 & 45.4 & 80.0 & 56.2 & 39k \\
{\algshortname{}-k10} & 74.0 & 71.6 & 70.9 & 52.3 & 47.1 & 40.6 & 84.6 & 40.6 & 32.1 & 50.3 & 65.4 & 37.0 & 46.2 & 80.1 & 56.6 & 41k \\
\bottomrule
\end{tabular}
\end{table*}

\section{Baseline Comparison Rationale}
\label{sec:baseline-details}

Each baseline tests a specific aspect of our tournament graph framework.

\paragraph{Sliding Windows \citep{sun2023chatgpt}} 
This baseline directly tests our central hypothesis: sliding windows process overlapping document sets, yet discard comparison information from previous windows rather than accumulating it. Our framework captures these relationships via transitive closure, potentially achieving equivalent quality with fewer queries.

\paragraph{TourRank \citep{chen2025tourrank}}
While TourRank uses a static tournament structure to schedule matches, it does not employ tournament graphs to model comparison outcomes or exploit graph-theoretic properties such as transitivity. This comparison isolates the value of maintaining a tournament graph and propagating information via transitive closure versus using tournaments purely as a scheduling mechanism.

\paragraph{AcuRank \citep{yoon2025acurank}.}
Both AcuRank and \tournamentsort{} are adaptive -- terminating when sufficiently confident about the top-$m$ -- but employ different certification criteria: AcuRank uses Bayesian score distributions, while our framework certifies via graph-theoretic finalization through transitive closure.

\paragraph{Setwise \citep{zhuang2024setwise}.}
This baseline provides the most direct contrast to our approach: both methods issue $k$-wise comparison queries, but Setwise extracts only the winner -- a single comparison -- while our framework captures the complete tournament of $\binom{k}{2}$ pairwise relationships.

\paragraph{Pairwise \citep{qin2024large}.}
Pairwise reranking serves as an upper bound on comparison granularity: each query yields exactly one pairwise relationship. This makes it highly accurate -- reducing the cognitive load on LLMs to simple binary judgments -- but computationally expensive, requiring 7--8$\times$ more tokens than our approach.

\section{Efficiency Comparison Details}
\label{sec:efficiency-details}

\begin{figure*}[t]
    \centering
    \begin{subfigure}[b]{0.48\textwidth}
        \centering
        \includegraphics[width=\textwidth]{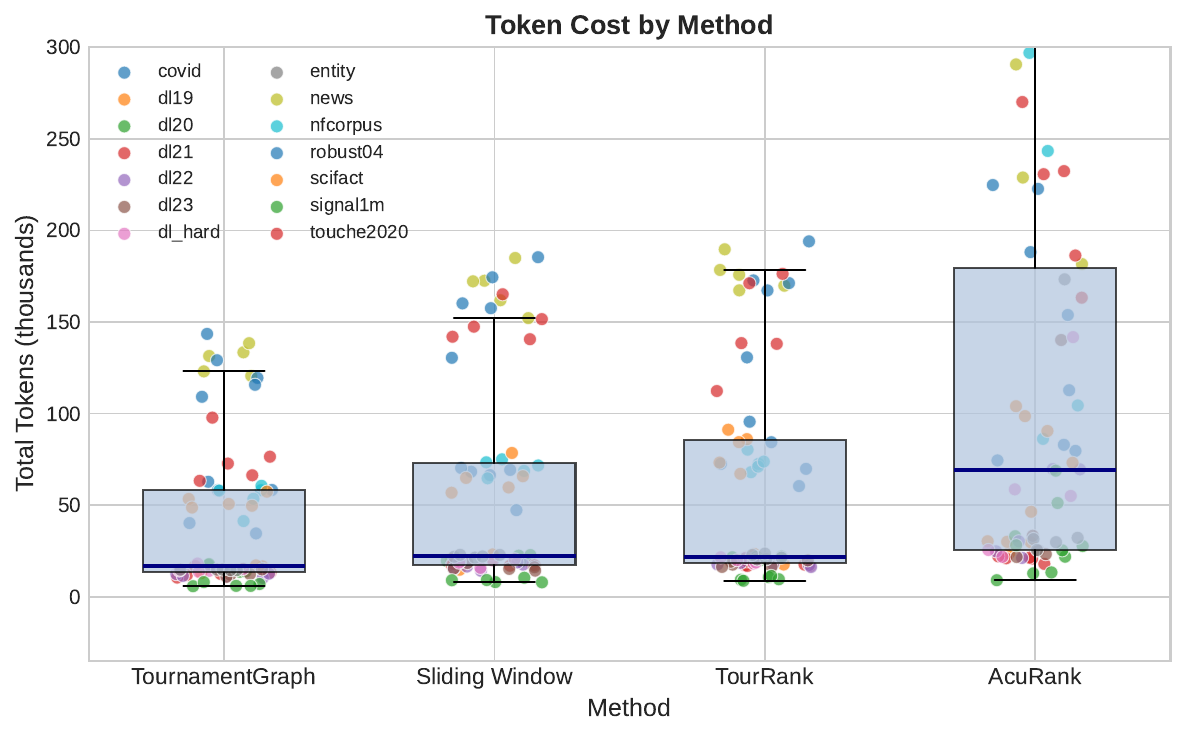}
        \caption{Token cost per query}
        \label{fig:method-cost-tokens}
    \end{subfigure}
    \hfill
    \begin{subfigure}[b]{0.48\textwidth}
        \centering
        \includegraphics[width=\textwidth]{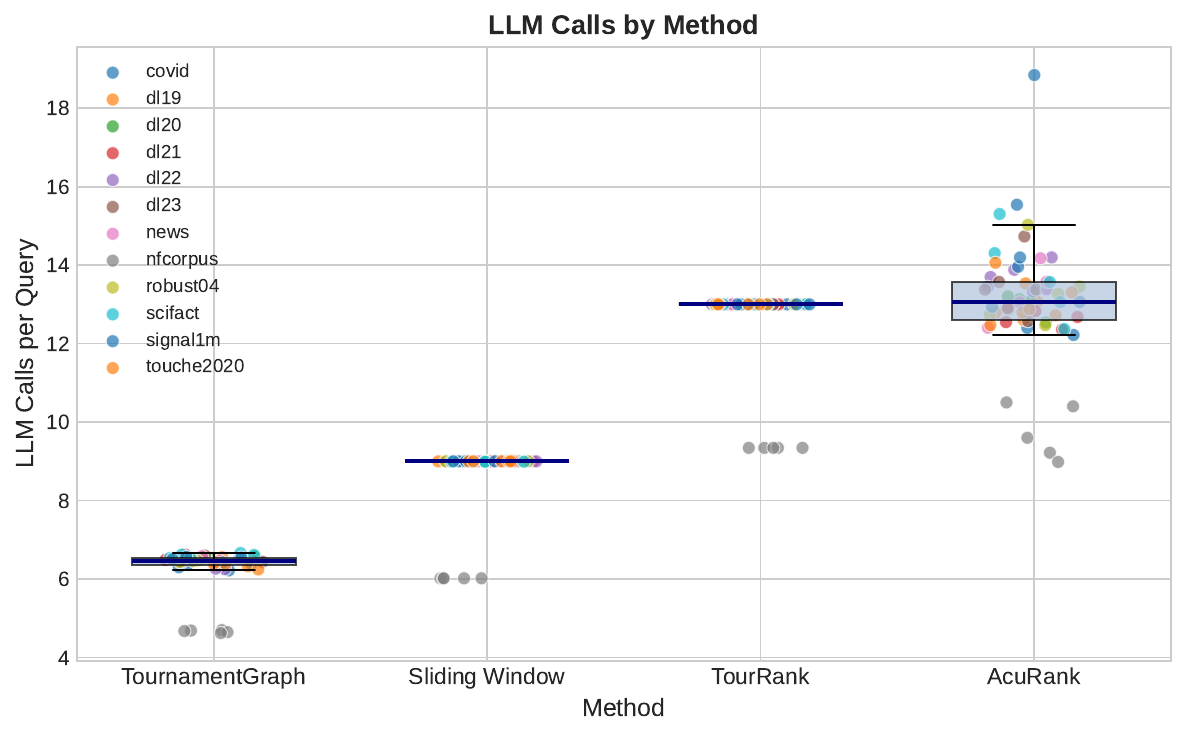}
        \caption{LLM calls per query}
        \label{fig:method-cost-calls}
    \end{subfigure}
    \caption{Efficiency comparison across listwise reranking methods on 14 datasets (GPT-4.1). Each point represents one dataset; box plots show the distribution. \textbf{(a)} Token consumption varies across datasets due to differing document lengths, but \tournamentsort{} consistently achieves the lowest median cost. AcuRank exhibits the highest variance due to its adaptive computation. \textbf{(b)} LLM call counts isolate algorithmic efficiency from document length effects. \tournamentsort{} requires $\sim$6.5 calls on average -- fewer than Sliding Window (9), TourRank (13), and AcuRank (13). Outliers at lower call counts correspond to NFCorpus, where many queries have fewer than 100 retrieved documents (see Figure~\ref{fig:nfcorpus-dist}).}
    \label{fig:method-cost}
\end{figure*}

\begin{figure}[t]
    \centering
    \includegraphics[width=0.5\columnwidth]{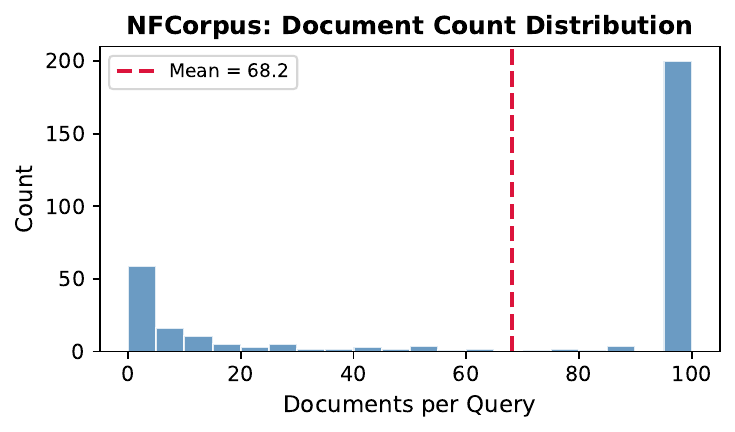}
    \caption{Distribution of retrieved documents per query in NFCorpus. Unlike other datasets where BM25 retrieves 100 documents per query, NFCorpus exhibits a bimodal distribution with many queries having fewer than 20 candidates (mean: 68.2). This explains the lower LLM call counts for NFCorpus in Table~\ref{tab:method-llm-calls}.}
    \label{fig:nfcorpus-dist}
\end{figure}

\begin{table*}[t]
\centering
\caption{LLM calls per query across reranking methods. Statistics shown for all 14 datasets and separately excluding NFCorpus (13 datasets). \tournamentsort makes the fewest calls ($\sim$6.5) while Sliding Window and TourRank are deterministic (9 and 13 calls respectively). AcuRank averages $\sim$13 calls with slight variation due to adaptive refinement.}
\label{tab:method-llm-calls}
\small
\begin{tabular}{l|rrrr|cc|rrrr|cc}
\toprule
& \multicolumn{6}{c|}{All Datasets} & \multicolumn{6}{c}{Excl. NFCorpus} \\
Method & Mean & Std & Min & Max & Std\% & Rng\% & Mean & Std & Min & Max & Std\% & Rng\% \\
\midrule
\tournamentsort & 6.3 & 0.6 & 5 & 7 & 9.0 & 32.0 & 6.5 & 0.1 & 6 & 7 & 2.0 & 7.0 \\
Sliding Window & 8.8 & 0.8 & 6 & 9 & 9.0 & 34.0 & 9.0 & 0.0 & 9 & 9 & 0.0 & 0.0 \\
TourRank & 12.7 & 1.0 & 9 & 13 & 8.0 & 29.0 & 13.0 & 0.0 & 13 & 13 & 0.0 & 0.0 \\
AcuRank & 13.1 & 1.4 & 9 & 19 & 11.0 & 75.0 & 13.4 & 1.1 & 12 & 19 & 8.0 & 49.0 \\
\bottomrule
\end{tabular}
\end{table*}

Setwise and Pairwise consume substantially more tokens than \tournamentsort{} due to their small comparison windows ($k{=}3$ and $k{=}2$, respectively), requiring many more queries to establish order.
More interesting is the comparison against methods that share our oracle structure -- AcuRank, TourRank, and Sliding Window all prompt the LLM to rank $k{=}20$ documents per call.
Despite this structural similarity, \tournamentsort{} achieves lower cost through principled information extraction.

Figure~\ref{fig:method-cost} visualizes token consumption and LLM call counts across the 14 datasets.
Token costs vary across datasets due to differing document lengths, but \tournamentsort consistently achieves the lowest median (Figure~\ref{fig:method-cost-tokens}).
To isolate algorithmic efficiency from document length effects, we also compare LLM call counts (Figure~\ref{fig:method-cost-calls}).
\tournamentsort requires only $\sim$6.5 calls per query on average, compared to 9 for Sliding Window, 13 for TourRank, and 13 for AcuRank (Table~\ref{tab:method-llm-calls}).

The outliers at lower call counts in Figure~\ref{fig:method-cost-calls} correspond to NFCorpus, where many queries have fewer than 100 retrieved documents.
Figure~\ref{fig:nfcorpus-dist} shows that NFCorpus exhibits a bimodal distribution with many queries having fewer than 20 candidates (mean: 68.2), explaining why all methods require fewer calls on this dataset.
We report statistics both with and without NFCorpus in Table~\ref{tab:method-llm-calls} to isolate this effect.

\paragraph{Comparison with AcuRank.}
AcuRank exhibits both higher cost and less predictable cost than \tournamentsort{}.
While \tournamentsort has a standard deviation of just 2\% in call counts (excluding NFCorpus), AcuRank's standard deviation reaches 8\%, with a range spanning 49\% of its mean (Table~\ref{tab:method-llm-calls}).
This variance stems from AcuRank's design: it models each document with an independent score distribution and adaptively allocates comparisons based on uncertainty.
In contrast, our framework maximizes information extraction by capturing complete tournaments from each query and propagating relationships via transitive closure, yielding deterministic convergence behavior.

AcuRank's adaptivity does offer a potential advantage: spending more computation on genuinely difficult queries.
However, modern LLMs produce largely consistent judgments, limiting the practical benefit of this flexibility.
Whether combining adaptive allocation with tournament graph structure can yield further gains remains an open question.

\section{Convergence Analysis}
\label{sec:convergence-details}

\begin{table}[t]
\centering
\small
\caption{Number of rounds as a function of window size $k$, aggregated over 70 configurations (14 datasets $\times$ 5 LLMs). The algorithm exhibits deterministic convergence: for fixed $k$, round count remains nearly constant across datasets and oracles, enabling reliable cost estimation.}
\label{tab:rounds}
\begin{tabular}{c|cccc|c}
\toprule
$k$ & Min & Max & Mean & Std & Avg.\ Tokens \\
\midrule
10 & 12 & 15 & 13.59 & 0.58 & 44,814 \\
20 & 6 & 7 & 6.73 & 0.45 & 42,139 \\
\bottomrule
\end{tabular}
\end{table}

\begin{figure*}[t]
    \centering
    \includegraphics[width=0.9\textwidth]{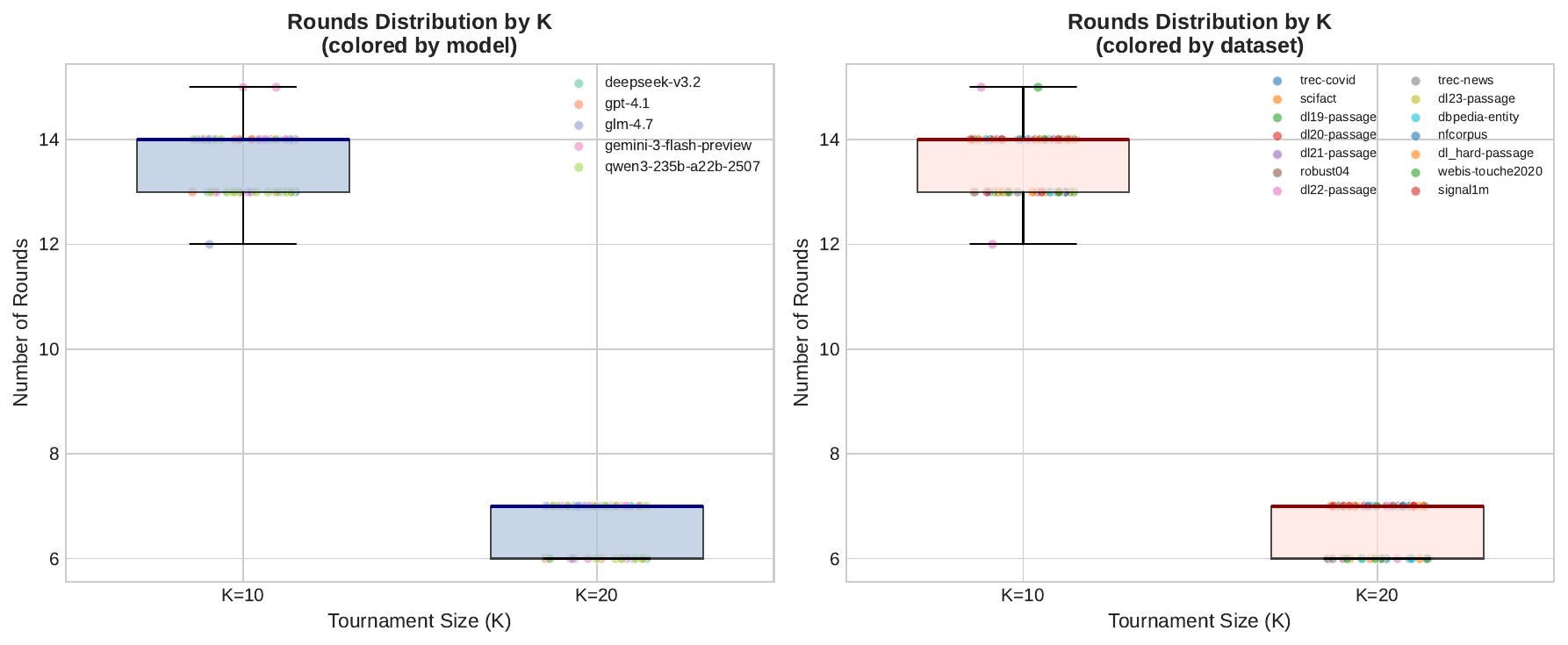}
    \caption{Distribution of rounds until convergence for $k{=}10$ and $k{=}20$, aggregated across 70 configurations (14 datasets $\times$ 5 LLMs). Left panel colors by model, right panel by dataset. The algorithm exhibits highly predictable convergence: $k{=}10$ requires 12--15 rounds (mean 13.6), while $k{=}20$ requires only 6--7 rounds (mean 6.7). Variance across models and datasets is minimal, indicating that convergence depends primarily on problem structure rather than oracle-specific factors.}
    \label{fig:rounds-by-k}
\end{figure*}

Table~\ref{tab:rounds} and Figure~\ref{fig:rounds-by-k} characterize the algorithm's convergence behavior across window sizes.
For $k{=}10$, convergence requires 12--15 rounds (mean 13.6, std 0.58); for $k{=}20$, only 6--7 rounds (mean 6.7, std 0.45).

This predictability is notable: despite variation in dataset characteristics, query difficulty, and oracle capabilities, round count remains nearly constant for fixed $k$.
The deterministic convergence stems from the progress guarantee (Lemma~\ref{lem:tied-unknown-edge-general}): each round reveals at least one new edge, and the algorithm terminates once the top-$m$ candidates are finalized.

Token consumption also remains stable across oracles despite differences in their ranking capabilities, indicating that the algorithm's query complexity depends primarily on the problem structure ($n$, $m$, $k$) rather than oracle-specific factors.
This contrasts sharply with adaptive methods like AcuRank, which exhibit 8\% variance in call counts -- the deterministic nature of our framework enables reliable cost estimation before execution.

\section{SCC Analysis}

\begin{table*}[ht]
\centering
\caption{Average SCC size by rank position on DL19 using GPT-4.1. SCCs concentrate in mid-ranks where the algorithm focuses queries, peaking around rank 15 for $k{=}20$ (avg 8.86) and rank 5--10 for $k{=}10$ (avg $\sim$4). Beyond rank 50, SCC size drops to 1.0 as lower-ranked documents are excluded via transitive closure.}
\label{tab:scc-size-by-rank}
\scriptsize


\begin{tabular}{rccccccccc}
\toprule
Rank & $1$ & $2$ & $3$ & $5$ & $10$ & $15$ & $20$ & $25$ & $50$ \\
\midrule
$K=10$ & $1.00$ & $2.09$ & $2.44$ & $2.93$ & $3.88$ & $2.74$ & $1.23$ & $1.02$ & $1.00$ \\
$K=20$ & $1.35$ & $1.77$ & $2.98$ & $3.60$ & $6.02$ & $8.86$ & $8.30$ & $5.98$ & $1.00$ \\
\bottomrule
\end{tabular}
\end{table*}

\textbf{SCCs concentrate in mid-ranks, not at the tail.}
One might expect SCCs to cluster among low-ranked documents, where items are uniformly irrelevant and difficult to distinguish.
Instead, Figure~\ref{fig:scc-size-by-rank} shows a wave pattern: SCC size peaks around ranks 10--20 for $k{=}20$ and ranks 5--10 for $k{=}10$, then drops to 1.0 beyond rank 50.
This pattern reflects the algorithm's greedy scheduling (\S\ref{sec:algorithm}): queries target non-finalized SCCs with smallest in-reach, concentrating comparisons near the decision boundary.
Lower-ranked documents are often excluded via transitive closure after a single comparison -- losing to a document that loses to a top-$m$ candidate finalizes their position without further queries.

\section{Scheduling Ablation}
\label{sec:scheduling-ablation}

\begin{table}[ht]
\centering
\caption{Scheduling ablation on synthetic transitive instances ($m=n$). Oracle calls (mean $\pm$ std over 20 seeds). Ratio = Random / Greedy.}
\label{tab:scheduling-ablation}
\small
\begin{tabular}{lrcrcr}
\toprule
Config & Greedy & Random (cands.) & Ratio & Random (all) & Ratio \\
\midrule
$n{=}100,\; k{=}10$ & $37.7 \pm 1.2$ & $132.1 \pm 9.6$ & $3.5\times$ & $541.5 \pm 102.4$ & $14.4\times$ \\
$n{=}100,\; k{=}20$ & $14.3 \pm 0.6$ & $35.9 \pm 4.3$ & $2.5\times$ & $123.5 \pm 26.3$ & $8.6\times$ \\
$n{=}200,\; k{=}10$ & $89.2 \pm 1.6$ & $520.9 \pm 32.1$ & $5.8\times$ & $2554.6 \pm 432.4$ & $28.6\times$ \\
$n{=}200,\; k{=}20$ & $33.8 \pm 0.7$ & $129.4 \pm 8.9$ & $3.8\times$ & $653.5 \pm 184.5$ & $19.3\times$ \\
\bottomrule
\end{tabular}
\end{table}

To isolate the contribution of greedy SCC scheduling (\S\ref{sec:algorithm}), we compare three scheduling strategies on synthetic transitive instances where oracle calls can be counted exactly.
We run \tournamentsort{} with the default greedy schedule and two random baselines: (1)~\emph{Random (cands.)}, which selects a random $k$-subset from the current candidate set, and (2)~\emph{Random (all)}, which selects a random $k$-subset from all available nodes.
Values in Table~\ref{tab:scheduling-ablation} are mean $\pm$ std of total oracle calls over 20 seeds.
The greedy schedule is $2.5$--$5.8\times$ more efficient than random scheduling from the candidate set, and $8.6$--$28.6\times$ more efficient than random scheduling from all nodes.

\newpage
\part{Theory}\label{part:theory}

\noindent
The most direct proof of Algorithm~\ref{alg:tournament-sort-main}'s
correctness and termination can be made short, as the proof sketch
in~\S\ref{sec:guarantees} demonstrates: output correctness follows
from a single ordering lemma on resolved vertices, and termination
from the tied-SCC progress argument on the condensation DAG.
We nonetheless present the full development through the transitive
special case (Algorithm~\ref{alg:transitive-tournament-sort}), the
general non-transitive case
(Algorithm~\ref{alg:non-transitive-tournament-graph-sort}), and
finally the resolution-based variant
(Algorithm~\ref{alg:tournament-sort-main}) for two reasons.
First, the progression builds intuition for why the algorithm works
and how it was designed: the transitive case isolates the finalization
engine on DAGs, and the non-transitive case shows how condensation
lifts it to arbitrary tournaments, before the resolution criterion
simplifies the stopping rule.
Second, the intermediate machinery -- particularly the finalization
threshold, the rank spectrum, the condensation projection, and the
SCC refinement framework -- provides the essential tools for two
concrete open problems identified in~\S\ref{sec:future-work}: proving query complexity
bounds for general~$m$ (Conjecture~\ref{conj:query-complexity},
which would likely begin with the transitive case and lift via
condensation) and extending the framework to noisy oracles (which
requires replacing the subgraph invariant with probabilistic
analogues of the finalization and resolution criteria).

\section{Problem Statement}\label{sec:Problem-Statement}

We consider the problem of identifying top-ranked vertices in an unknown
tournament graph through queries to an oracle.

\subsection{Unknown Tournaments and Oracle Model}

\paragraph{Tournament Graphs.}

A \emph{tournament} is a directed graph $G^{*}=(V,E^{*})$ in which
every pair of vertices is connected by exactly one directed edge: for
all $u\neq v\in V$, either $(u,v)\in E^{*}$ or $(v,u)\in E^{*}$,
but not both. The edge $(u,v)$ is interpreted as $u$ defeating $v$
in a comparison. Throughout, $G^{*}$ denotes a fixed but unknown tournament
that we seek to discover through oracle queries. We use the subscript
and superscript~$*$ to indicate quantities associated with the
tournament $G^{*}$.

\paragraph{Neighborhoods and Degrees.}

For a vertex $u$ in a directed graph $G=(V,E)$, the
\emph{out-neighbors} of $u$ is
$N_{G}^{+}(u)=\{v\in V:(u,v)\in E\}$ with \emph{out-degree}
$\outdeg_{G}(u)=|N_{G}^{+}(u)|$, and the \emph{in-neighbors} of $u$
is $N_{G}^{-}(u)=\{v\in V:(v,u)\in E\}$ with \emph{in-degree}
$\indeg_{G}(u)=|N_{G}^{-}(u)|$.

\paragraph{Oracle Model.}

For a set $U$ with $|U|=n$ and a positive integer $k$, let
$\binom{U}{k}:=\{S\subseteq U:|S|=k\}$ denote the collection of all
$k$-subsets of $U$, and let
$\perm Uk:=\{(u_{1},\ldots,u_{k})\in U^{k}:u_{i}\neq u_{j}\text{
  for all }i\neq j\}$ denote the set of all ordered $k$-tuples of
distinct elements from $U$. We assume access to an oracle
$O_{G^{*}}:\binom{V}{k}\to2^{E^{*}}$ that, given any $k$-subset
$S\in\binom{V}{k}$, returns all directed edges of $G^{*}$ induced by
$S$:
\[
  O_{G^{*}}(S):=E^{*}\cap\perm S2.
\]
Intuitively, given $k$ items, the oracle ``organizes'' a
subtournament between them and reveals the result.

\paragraph{Reachability.}

We write $u\leadsto_{G}v$ to denote the existence of a directed path
from $u$ to $v$ in $G$, i.e.\ $u\leadsto_{G}v$ if there exist
vertices $v_{0},v_{1},\dots,v_{k}\in V$ with $k\geq0$, $v_{0}=u$,
$v_{k}=v$, and $(v_{i},v_{i+1})\in E$ for all
$i=0,1,\dots,k-1$. The \emph{in-reach} and \emph{out-reach} of a
vertex $v$ are defined as
\begin{align}
  \inreach_{G}(v)
    &:=\{u\in V\setminus\{v\}:u\leadsto_{G}v\},
    \label{eq:inreach-def}\\
  \outreach_{G}(v)
    &:=\{u\in V\setminus\{v\}:v\leadsto_{G}u\}.
    \label{eq:outreach-def}
\end{align}
In a tournament, $|\inreach_{G^{*}}(v)|$ measures how many items
transitively dominate $v$ -- a reachability-based loss count. When
$G^{*}$ is \emph{transitive} (i.e., $(u,v)$ and $(v, p)$ in $E^*$ implies $(u, p)$ in $E^*$), 
in-reach coincides with in-neighborhood
(Corollary~\ref{cor:inreach-equals-indeg}), so
$|\inreach_{G^{*}}(v)|=\indeg_{G^{*}}(v)$.

\subsection{Problem Statement: Ranking in Tournaments with $k$-wise
  Oracle}

We formalize the primary ranking problem of interest as follows.

\begin{problem}
\label{prob:main-inreach} Given an unlabeled vertex set $V$ of size
$n$, an oracle $O_{G^{*}}$ with query size $k\geq2$, and a target
count $m\leq n$, identify a set of top-$m$ ranked vertices
$S=\{v^{(1)},\ldots,v^{(m)}\}$ such that
\begin{align}
  & |\inreach_{G^{*}}(v^{(1)})|
    \leq|\inreach_{G^{*}}(v^{(2)})|
    \leq\cdots
    \leq|\inreach_{G^{*}}(v^{(m)})|
    \label{eq:sol-set-satisfies-internal-ordering}\\
  \text{and }\quad
  & \max_{i\leq m}|\inreach_{G^{*}}(v^{(i)})|
    \leq\min_{u\notin S}|\inreach_{G^{*}}(u)|,
    \label{eq:rank-dominance-of-sol-set}
\end{align}
using as few oracle queries as possible.
\end{problem}

\begin{rem}[In-reach ranking and the decomposition ordering]
\label{rem:inreach-ranking}
Every tournament decomposes uniquely into strongly connected components
(SCCs), and the condensation -- the DAG obtained by collapsing each SCC
to a single node -- is itself a transitive
tournament~\citep{bang2008digraphs}. Vertices in the same SCC have
identical in-reach, so the in-reach ranking coincides with the
\emph{decomposition ordering} (also called the \emph{Smith set
  stratification})~\citep{laslier1997tournament}: it ranks vertices by
the position of their SCC in the condensation, declaring vertices
within the same SCC tied. When $G^{*}$ is transitive, every SCC is a
singleton, in-reach reduces to in-degree, and the ranking is a total
order identical to the Copeland ranking. In the non-transitive case,
ties may cause the top-$m$ set (and its internal ordering) to be
non-unique. Refining the within-SCC ordering to match the full
Copeland ranking requires additionally discovering the induced
subtournament on each output SCC, at a cost of at most
$\binom{|C|}{2}$ further queries per component~$C$.
\end{rem}

\subsection{Theoretical Foundations and Related
  Work}\label{sec:theoretical-related-work}

  \paragraph{Tournament solutions and query complexity.}

  Tournaments are well-studied in computational social
  choice~\citep{brandt2016handbook,laslier1997tournament}. Beyond the
  Copeland and decomposition orderings used here, classical solutions
  include the Slater ranking (minimum feedback arc set) and Kemeny
  ranking (minimum Kendall-tau distance); both are
  NP-hard~\citep{alon2006ranking,bartholdi1989voting}.
  \citet{dey2017query} established query complexity bounds for
  tournament solutions under a pairwise oracle, proving that most
  require $\Omega(n^{2})$ queries. When the top cycle has bounded
  size~$c$, they show all solutions can be found in
  $O(nc+n\log n/\log(1-1/c))$ queries; however, this identifies only
  the first SCC (the top cycle) and does not recover the full
  condensation ordering over all components. No existing work, to our
  knowledge, studies the query complexity of recovering the complete SCC
  stratification.
  
  The decomposition ordering is a natural target for our framework for
  two reasons. First, the finalization engine
  (introduced in \S\ref{par:two-layer-architecture} and detailed in \S\ref{sec:graph-prelims}) determines SCC positions without
  resolving internal structure, so the algorithm terminates as soon as
  tier membership is established. Second, it gives the honest answer
  when preferences are cyclic. In our primary
  application -- LLM-as-a-judge reranking -- cycles arise not from noise
  but from genuine ambiguity: the oracle cannot consistently order those
  items, and reporting them as tied is more faithful than imposing an
  artificial distinction. When a total ordering is needed, ties may be
  broken by a secondary signal (e.g., the original retrieval score) or additional fine-grained queries.
  
  \paragraph{Multi-wise comparisons.}
  
  The information-theoretic advantage of $k$-wise comparisons is
  established in \citet{ren2021sample}, who prove that full-ranking
  feedback improves sample complexity by a factor of $k\log k$ over
  winner feedback for ranking recovery. \citet{saha2019pac} show similar
  gains under the Plackett--Luce model. \citet{jang2017optimal} and
  \citet{chen2018nearly} provide near-optimal algorithms for top-$m$
  selection under parametric models. For noisy comparisons,
  \citet{feige1994computing} established $O(n\log n)$ bounds for
  sorting, tightened by \citet{gu2023optimal}. Our framework operates
  non-parametrically, using graph structure rather than score estimation,
  and addresses the query-efficient top-$m$ selection problem that
  arises when each comparison is expensive. More specifically, our
  setting differs from prior work in three respects: (i)~we assume a
  deterministic tournament rather than a stochastic comparison model,
  (ii)~we use a $k$-wise oracle ($k \geq 2$) rather than restricting to
  pairwise queries, and (iii)~we target the SCC decomposition ordering
  rather than a total ranking or a single tournament solution set. To
  our knowledge, no existing algorithm addresses this combination.

\subsection{Technical Overview and Roadmap}

The appendix provides full proofs of correctness and termination for
the algorithms introduced in the main text. The central question
throughout is: \emph{given only the edges revealed so far, when can
we certify that the current top-$m$ vertices are correct with respect
to the unknown tournament $G^{*}$?} We answer this first for
transitive tournaments, then lift the answer to arbitrary tournaments.

\paragraph{Two-layer architecture.}\label{par:two-layer-architecture}

The proofs separate cleanly into two layers, and keeping this
separation in mind is a useful guide for reading the
appendix.

The first layer is a \emph{finalization engine} that works on any
directed graph and is completely agnostic to the existence of an
underlying tournament. It takes a partially revealed graph, sorts
vertices by in-reach into a rank spectrum, and identifies a prefix
of vertices whose relative positions are already forced by
reachability alone. Informally, this layer answers: \emph{given what
we have seen so far, which vertices can we already certify as
top-ranked?} The most interesting pieces apply to DAGs (subgraphs of transitive tournaments).  
In the general case, we will utilize condensation machinery (c.f. \S\ref{subsec:scc-and-condensation}) to reduce subgraphs of non-transitive 
tournaments to DAGs to lift the finalization engine to general directed graphs.

The second layer is a \emph{tournament bridge} that connects the
finalization engine back to the ground truth. It uses properties
specific to the underlying tournament to show that whenever the
engine certifies a prefix in the revealed graph, the same prefix is
correct in the full tournament. Informally: \emph{why does
certification in the subgraph imply correctness in $G^{*}$?}

\paragraph{Transitive versus non-transitive case.}

In the transitive case, the revealed graph $G_{t}$ at a round $t$  is always a DAG
(every subgraph of a transitive tournament is acyclic), so the
finalization engine is applied directly to $G_{t}$ and the bridge
bounds discovered in-reach by true in-degree.

In the general case, $G_{t}$ may contain cycles, so the engine
cannot be applied to $G_{t}$ directly. Instead, we move to the
condensation $\condense{G_{t}}$, which collapses cycles into tied
tiers and is always a DAG. The finalization engine is reused without
modification on $\condense{G_{t}}$, but the bridge must be rebuilt:
a projection map induced by SCC refinement relates
$\condense{G_{t}}$ to $\condense{G^{*}}$ and transfers rank
certificates from the discovered graph to the true tournament.

\paragraph{Roadmap.}

The appendix proceeds as follows.
\begin{itemize}
\item \textbf{Section~\ref{sec:graph-prelims}} develops the reusable
  graph-theoretic machinery: the rank spectrum, known relationships,
  condensation, and the finalization engine. Nothing here assumes a
  tournament.
\item \textbf{Section~\ref{sec:transitive-analysis}} builds the
  tournament bridge for transitive tournaments, derives the greedy
  query schedule, and proves correctness and termination of the
  transitive algorithm.
\item \textbf{Section~\ref{sec:Non-Transitive-Tournaments}} lifts the
  same engine to condensation graphs, rebuilds the bridge via SCC
  refinement and projection, and proves correctness and termination
  for arbitrary tournaments.
\item \textbf{Section~\ref{sec:stronger-finalization-general}}
  introduces the resolution-based stopping rule used by the practical
  algorithm (\tournamentsort{} -- Algorithm~\ref{alg:tournament-sort-main}) and proves its correctness and
  termination.
\item \textbf{Section~\ref{sec:query-complexity}} discusses query
  complexity bounds.
\end{itemize}
Readers interested primarily in the algorithm may begin with
Section~\ref{sec:transitive-analysis} and return to the
non-transitive extension afterward; Section~\ref{sec:graph-prelims}
serves as a reference for both.
\section{Finalization Framework}\label{sec:graph-prelims}
This section builds the graph-theoretic objects that constitute the
finalization engine described in the roadmap
(\S\ref{par:two-layer-architecture}). Nearly everything here is defined for arbitrary directed graphs; the
few results that specialize to tournaments are marked. The
section is organized in four parts:
\begin{itemize}
\item \textbf{Reachability framework}
  (\S\ref{subsec:rank-spectrum}): transitive closure, in-reach, and
  the rank spectrum.
\item \textbf{Condensation}
  (\S\ref{subsec:scc-and-condensation}): strongly connected
  components, the condensation graph, and their interaction with
  in-reach.
\item \textbf{Known relationships}
  (\S\ref{subsec:Known-Relationships}): the resolved-vertex notion
  used by the practical algorithm's stopping rule.
\item \textbf{Finalization engine}
  (\S\ref{subsec:structural-properties-finalization}): the
  finalization threshold, finalized set, and their structural
  properties.
\end{itemize}

\subsection{Reachability Framework}

\paragraph{Transitive closure.}

Given a graph $G=(V,E)$, its transitive closure $G^{+}=(V,E^{+})$
is constructed by adding an edge $(u,v)$ to $E^{+}$ if $u\leadsto_{G}v$
i.e., 
\[
E^{+}:=\{(u,v)\in V\times V\mid u\neq v\text{ and }u\leadsto_{G}v\}.
\]

Some observations: 
\begin{enumerate}
\item Under transitive closure, in-reach and out-reach reduce to in-neighborhood
and out-neighborhood, respectively. 
\item If $G$ is a DAG, then $\inreach_{G}(v)\cap\outreach_{G}(v)=\emptyset$.
Otherwise, the intersection is not necessarily empty if $G$ is not
a DAG. 
\end{enumerate}
\begin{lem}[Downward Closure of In-Reach]
\label{lem:inreach-downward-closure} Let $G=(V,E)$ be a directed
graph. If $x\in\inreach_{G}(y)$, then $\inreach_{G}(x)\cup\left\{ x\right\} \subseteq\inreach_{G}(y)$. 
\end{lem}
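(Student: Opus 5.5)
The plan is a direct path-concatenation argument using only the definition of in-reach in \eqref{eq:inreach-def} and the definition of $\leadsto_{G}$ as the existence of a directed walk of length $k\ge 0$. We assume $x\in\inreach_{G}(y)$, so $x\neq y$ and there is a directed path $P_{1}$ from $x$ to $y$ in $G$. We must show two inclusions: $x\in\inreach_{G}(y)$, and $\inreach_{G}(x)\subseteq\inreach_{G}(y)$.

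\textbf{Step 1 (the singleton $\{x\}$).} The inclusion $\{x\}\subseteq\inreach_{G}(y)$ is exactly the hypothesis, so nothing needs to be done here.

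\textbf{Step 2 (the set $\inreach_{G}(x)$).} Fix $z\in\inreach_{G}(x)$. Then $z\neq x$ and there is a path $P_{2}$ from $z$ to $x$. Concatenating $P_{2}$ and $P_{1}$ at $x$ gives a directed walk $z=v_{0},\dots,x,\dots,v_{\ell}=y$, with every consecutive pair an edge of $E$. The definition of $\leadsto_{G}$ only asks for a sequence of vertices with consecutive edges, not a simple path, so this walk already witnesses $z\leadsto_{G}y$ and no shortcutting is needed. To conclude $z\in\inreach_{G}(y)$ it remains to have $z\in V\setminus\{y\}$.

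\textbf{Main obstacle: the side condition $z\neq y$.} This is the only step that is not purely mechanical, and the place I expect the difficulty. If $G$ contains no directed path from $y$ back to $x$, then $z\neq y$: otherwise $z=y\in\inreach_{G}(x)$ would give exactly such a path. In that situation the inclusion holds verbatim, and this covers every acyclic $G$. If instead $y\leadsto_{G}x$, then $x$ and $y$ lie in a common strongly connected component and $y\in\inreach_{G}(x)$. The element $z=y$ is then the only candidate that could fail the membership test, since every other $z$ is handled by Step 2.

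What I would try first is to check, in the places the lemma is later invoked on cyclic graphs (for example through the condensation and SCC arguments), whether the statement is meant to be read with $y$ counted as in-reaching itself when it lies on a cycle through $x$. That reading matches the length-zero path allowed in the definition of $\leadsto_{G}$.

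If the strict reading of \eqref{eq:inreach-def}, which excludes $v$ itself, is intended, a two-vertex cycle $x\to y\to x$ makes the literal inclusion fail at $z=y$. In that case I would record that every $z\neq y$ is covered by the argument above, and flag that the statement must be read modulo this point.
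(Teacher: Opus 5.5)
Your argument is the same path concatenation the paper uses. Your worry about the side condition is justified. The paper passes from $z\leadsto_{G}y$ directly to $z\in\inreach_{G}(y)$ without checking $z\neq y$. Your two-cycle $x\to y\to x$ shows that the lemma, as stated for arbitrary directed graphs, is false, since $\inreach_{G}(x)\cup\{x\}=\{x,y\}\not\subseteq\{x\}=\inreach_{G}(y)$.

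Your case analysis gives the right repair. The inclusion holds exactly when $y\not\leadsto_{G}x$, so in particular in every DAG. In general only $\inreach_{G}(x)\cup\{x\}\subseteq\inreach_{G}(y)\cup\{y\}$ survives. This still yields $|\inreach_{G}(x)|\le|\inreach_{G}(y)|$: strictly if $y\not\leadsto_{G}x$, and with equality by Lemma~\ref{lem:scc-equal-inreach} otherwise.

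The alternative reading you float is not available. \eqref{eq:inreach-def} removes $y$ from its own in-reach explicitly, regardless of the length-zero path in the definition of $\leadsto_{G}$. One must instead add the hypothesis $y\not\leadsto_{G}x$ or weaken the conclusion as above.

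Your proposed audit of the later uses does turn something up.
\begin{itemize}
\item The DAG uses (Lemmas~\ref{lem:inreach-strict-decrease} and~\ref{lem:finalized-reach-candidates}) are fine.
\item Lemma~\ref{lem:discovered-respects-true-rank} is fine, since it invokes the lemma only after $v\not\leadsto_{G}u$ is established.
\item Lemma~\ref{lem:exact-inreach-finalized} is fine, since it needs only the cardinality bound.
\item Lemma~\ref{lem:loss-excludes-inreach} applies the lemma to general graphs and is itself false. On a revealed $3$-cycle $a\to b\to c\to a$, the vertex $u=a$ is resolved and $|\inreach_{G}(a)|=|\inreach_{G}(b)|=2$, yet $b\in\inreach_{G}(a)$.
\end{itemize}
Corollary~\ref{cor:resolved-inreaches-are-true} still holds. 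The repaired lemma shows that $v\in\inreach_{G}(u)$ together with $|\inreach_{G}(u)|\le|\inreach_{G}(v)|$ forces $u\sim_{G}v$. Hence $u\sim_{G^{*}}v$ by Lemma~\ref{lem:scc-refinement}, and the true in-reaches are equal by Lemma~\ref{lem:scc-equal-inreach}.
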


\begin{proof}
Let $z\in\inreach_{G}(x)$. By definition, $z\leadsto_{G}x$. Since
$x\in\inreach_{G}(y)$, we have $x\leadsto_{G}y$. Concatenating these
paths gives $z\leadsto_{G}y$, so $z\in\inreach_{G}(y)$. 
\end{proof}
The in-reach has useful properties when the graph $G$ under consideration
is a directed acyclic graph (DAG).
\begin{lem}[Strict Decrease of In-Reach Along Edges]
\label{lem:inreach-strict-decrease} Let $G=(V,E)$ be a DAG. If
$x\in\inreach_{G}(y)$, then $|\inreach_{G}(x)|<|\inreach_{G}(y)|$. 
\end{lem}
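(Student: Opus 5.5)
We prove the strict inequality by showing that $\inreach_{G}(x)$ is a proper subset of $\inreach_{G}(y)$. The inclusion comes from Lemma~\ref{lem:inreach-downward-closure}, and acyclicity supplies a witness separating the two sets.

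\textbf{Step 1 (inclusion).} Since $x\in\inreach_{G}(y)$, Lemma~\ref{lem:inreach-downward-closure} gives $\inreach_{G}(x)\cup\{x\}\subseteq\inreach_{G}(y)$. In particular $\inreach_{G}(x)\subseteq\inreach_{G}(y)$ and $x\in\inreach_{G}(y)$.

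\textbf{Step 2 (witness).} We claim $x\notin\inreach_{G}(x)$. This holds directly from the definition in \eqref{eq:inreach-def}, which only ranges over $u\in V\setminus\{x\}$. So $x$ belongs to $\inreach_{G}(y)$ but not to $\inreach_{G}(x)$, the inclusion is proper, and since $V$ is finite, $|\inreach_{G}(x)|<|\inreach_{G}(y)|$.

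\textbf{Where acyclicity is used.} Read literally, the argument above does not use the DAG hypothesis, so we should check it is not vacuous. If $G$ had a cycle through $x$ and $y$, then $y\in\inreach_{G}(x)$ while $y\notin\inreach_{G}(y)$. The sets could then coincide in size: $\inreach_{G}(x)$ gains $y$ while $\inreach_{G}(y)$ gains $x$. The two-element set $\inreach_{G}(x)\cup\{x\}$ alone does not rule this out, because the inclusion of Step~1 involves $\inreach_{G}(x)\cup\{x\}$, and $y$ may lie in $\inreach_{G}(x)$ but not in $\inreach_{G}(y)$.

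So the main obstacle is making the inclusion $\inreach_{G}(x)\subseteq\inreach_{G}(y)$ valid. Lemma~\ref{lem:inreach-downward-closure} asserts it, but its proof silently assumes $z\neq y$ when concluding $z\in\inreach_{G}(y)$. Acyclicity is exactly what guarantees this. If $z=y$ with $y\leadsto_{G}x$ and $x\leadsto_{G}y$ (where $x\neq y$), there would be a directed cycle, contradicting that $G$ is a DAG. Hence $y\notin\inreach_{G}(x)$, every $z\in\inreach_{G}(x)$ satisfies $z\neq y$ and $z\leadsto_{G}y$, and the inclusion holds.

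Combining this with the witness $x$ from Step~2 gives the strict inequality.
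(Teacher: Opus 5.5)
Your proof is correct and follows the paper's argument. Both get $\inreach_G(x)\subseteq\inreach_G(y)$ from Lemma~\ref{lem:inreach-downward-closure}, use $x$ as the witness for strictness, and use acyclicity to rule out $y\in\inreach_G(x)$. Your observation that acyclicity is what makes that inclusion valid is accurate: the downward-closure proof tacitly needs $z\neq y$, and that lemma fails for $x\to y\to x$. This places the hypothesis more precisely than the paper, which mentions $y\notin\inreach_G(x)$ only in its strictness sentence.
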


\begin{proof}
By Lemma~\ref{lem:inreach-downward-closure}, $\inreach_{G}(x)\subseteq\inreach_{G}(y)$.
Since $x\in\inreach_{G}(y)$ but $x\notin\inreach_{G}(x)$ and $y\not\in\inreach_{G}(x)$
(due to otherwise being cyclic), the inclusion is strict. 
\end{proof}

\subsubsection{Rank Spectrum }\label{subsec:rank-spectrum}

We now define the central objects of our framework.
By analogy with the singular value decomposition, where the singular
values $\sigma_{1}(A)\geq\sigma_{2}(A)\geq\cdots$ are uniquely determined
while their associated singular vectors may admit rotational freedom
within tied singular subspaces, we pair in-reach values with their
corresponding vertices. The rank spectrum $R(G)$ is uniquely determined
regardless of tie-breaking, while the rank basis $V(G)$ is unique
only up to permutation of vertices sharing the same in-reach.
\begin{defn}[Rank Spectrum]
\label{def:rank-spectrum} Let $G=(V,E)$ be a directed graph on
$n$ vertices. An \emph{in-reach ordering} of $G$ is a sequence $v^{(1)}(G),v^{(2)}(G),\ldots,v^{(n)}(G)$
of all vertices in $V$ satisfying 
\[
r^{(1)}(G)\leq r^{(2)}(G)\leq\cdots\leq r^{(n)}(G),\quad\text{where }r^{(i)}(G):=|\inreach_{G}(v^{(i)}(G))|.
\]
We call the sequence $R(G):=(r^{(1)}(G),\ldots,r^{(n)}(G))$ the \emph{rank
spectrum} of $G$. Similarly, we call the sequence $V(G)=(v^{(1)}(G),\ldots,v^{(n)}(G))$
the \emph{rank basis} of $G$. When ties exist, the ordering among
tied vertices may be broken arbitrarily; the rank spectrum $R(G)$
is uniquely determined regardless of tie-breaking. 
\end{defn}

The rank spectrum forms crucial connections from subgraphs of tournaments
to the true underlying tournament.
\begin{lem}[Rank Spectrum Upper Bound for DAGs]
\label{lem:dag-rank-upperbound} Let $G=(V,E)$ be a DAG on $n$
vertices. Then 
\[
r^{(i)}(G)\leq i-1,\quad\text{for all }i\in[n].
\]
\end{lem}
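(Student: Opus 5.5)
Fix an in-reach ordering $v^{(1)},\ldots,v^{(n)}$ of the DAG $G$. The claim $r^{(i)}(G)\le i-1$ says that for each $i$, at least $n-i+1$ vertices have in-reach at least $r^{(i)}$, yet $r^{(i)}$ is at most $i-1$. I would argue by contradiction using Lemma~\ref{lem:inreach-downward-closure} and Lemma~\ref{lem:inreach-strict-decrease}, in a pigeonhole-style argument.

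Suppose $r^{(i)}(G)\ge i$ for some $i$. Take $y=v^{(i)}$, so $|\inreach_G(y)|\ge i$. By Lemma~\ref{lem:inreach-strict-decrease}, every $x\in\inreach_G(y)$ satisfies $|\inreach_G(x)|<|\inreach_G(y)|=r^{(i)}$. Because the ordering is sorted by in-reach, any vertex with in-reach strictly less than $r^{(i)}$ must appear at a position strictly before the first position holding value $r^{(i)}$, hence among $v^{(1)},\ldots,v^{(i-1)}$. Thus $\inreach_G(y)\subseteq\{v^{(1)},\ldots,v^{(i-1)}\}$, giving $|\inreach_G(y)|\le i-1$. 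This contradicts $|\inreach_G(y)|\ge i$.

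The only step that needs care is the claim that vertices with strictly smaller in-reach occupy positions before $i$. This holds regardless of tie-breaking: if $v^{(j)}$ has $r^{(j)}<r^{(i)}$, then monotonicity of the spectrum forces $j<i$. Acyclicity enters solely through Lemma~\ref{lem:inreach-strict-decrease}, which is what guarantees that the members of the in-reach are strictly smaller and so cannot include $y$ itself or any vertex tied with it. As a sanity check, a transitive tournament attains equality $r^{(i)}=i-1$, so the bound is tight.
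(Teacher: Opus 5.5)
Your proposal is correct and takes essentially the paper's approach: both use Lemma~\ref{lem:inreach-strict-decrease} to show that every member of $\inreach_G(v^{(i)})$ has strictly smaller in-reach, so it must sit at a position before $i$, which gives $r^{(i)}(G)\le i-1$. The difference is cosmetic. The contradiction framing is unnecessary, since your containment $\inreach_G(v^{(i)})\subseteq\{v^{(1)},\ldots,v^{(i-1)}\}$ already proves the bound directly; the paper instead counts the $r$ vertices with smaller in-reach to place $v^{(i)}$ at position at least $r+1$.
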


\begin{proof}
Let $v=v^{(i)}(G)$ with in-reach $r:=r^{(i)}(G)=|\inreach_{G}(v)|$.
By Lemma~\ref{lem:inreach-strict-decrease}, every vertex in $\inreach_{G}(v)$
has in-reach strictly less than~$r$. Since $|\inreach_{G}(v)|=r$,
at least $r$ vertices have in-reach less than~$r$, so $v$ occupies
position $i\geq r+1$ in the non-decreasing in-reach ordering. Hence
$r^{(i)}(G)=r\leq i-1$. 
\end{proof}

\subsection{Condensation Graphs}\label{subsec:scc-and-condensation}

For non-transitive tournaments, condensation plays a crucial role as
in Section~\ref{sec:Non-Transitive-Tournaments}, where it allows reduction
from the general directed graph case to the DAG/transitive case by
collapsing cycles into equivalence classes. 

\paragraph{Strongly Connected Components (SCC).}

Let $G=(V,E)$ be a directed graph. A \emph{strongly connected component}
(SCC) of $G$ is a maximal subset $C\subseteq V$ such that for all
$u,v\in C$, there exist directed paths $u\path_{G}v$ and $v\path_{G}u$.
We define the equivalence relation\footnote{It is well-known that SCC form an equivalence relation. See for example
\href{https://en.wikipedia.org/wiki/Strongly_connected_component}{Wikipedia's SCC page}.} $\sim_{G}$ on $V$ by 
\[
u\sim_{G}v\quad\Longleftrightarrow\quad u\text{ and }v\text{ are in the same SCC of }G.
\]
For $u\in V$, we denote by $[u]_{G}$ the equivalence class of $u$
under $\sim_{G}$.

This equivalence is important in the context of Problem \ref{prob:main-inreach}
where equivalence implies equal in-reach.
\begin{lem}[Equal In-Reach Within SCCs]
\label{lem:scc-equal-inreach} Let $G=(V,E)$ be a directed graph.
If $u\sim_{G}v$, then $|\inreach_{G}(u)|=|\inreach_{G}(v)|$. 
\end{lem}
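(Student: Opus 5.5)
The plan is to show the stronger set equality $\inreach_{G}(u)\cup\{u\}=\inreach_{G}(v)\cup\{v\}$ whenever $u\sim_{G}v$. The cardinality claim then follows immediately.

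First I would dispose of the trivial case $u=v$. For $u\neq v$, the definition of $\sim_{G}$ gives both $u\leadsto_{G}v$ and $v\leadsto_{G}u$. In particular $u\in\inreach_{G}(v)$ and $v\in\inreach_{G}(u)$. Applying Lemma~\ref{lem:inreach-downward-closure} with $x=u$, $y=v$ yields $\inreach_{G}(u)\cup\{u\}\subseteq\inreach_{G}(v)$. Applying it symmetrically with $x=v$, $y=u$ yields $\inreach_{G}(v)\cup\{v\}\subseteq\inreach_{G}(u)$.

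Next I combine the two inclusions. Adjoining $v$ to both sides of the first gives $\inreach_{G}(u)\cup\{u\}\cup\{v\}\subseteq\inreach_{G}(v)\cup\{v\}$. Since $v\in\inreach_{G}(u)$, the left side is just $\inreach_{G}(u)\cup\{u\}$. So $\inreach_{G}(u)\cup\{u\}\subseteq\inreach_{G}(v)\cup\{v\}$, and the symmetric argument gives the reverse inclusion. This establishes the set equality.

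For the counting step, note that $u\notin\inreach_{G}(u)$ and $v\notin\inreach_{G}(v)$ by definition, since in-reach excludes the vertex itself. Hence $|\inreach_{G}(u)|+1=|\inreach_{G}(v)|+1$, which gives the claim.

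The only point needing care is this self-exclusion. Even when $G$ has a cycle through $u$, so that $u\leadsto_{G}u$ by a nontrivial path, $u$ is still excluded from $\inreach_{G}(u)$ by definition, so the $+1$ bookkeeping is exact. I would not expect any genuine obstacle beyond this.
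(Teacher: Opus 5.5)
Your final identity $\inreach_{G}(u)\cup\{u\}=\inreach_{G}(v)\cup\{v\}$ is true, and your self-exclusion count from it is correct. The problem is the intermediate step you use to get there. Applying Lemma~\ref{lem:inreach-downward-closure} with $x=u$, $y=v$ is supposed to give $\inreach_{G}(u)\cup\{u\}\subseteq\inreach_{G}(v)$. But, as you note one line later, $v\in\inreach_{G}(u)$, while $v\notin\inreach_{G}(v)$ by definition, so this inclusion is false whenever $u\neq v$. The $2$-cycle already shows it: with $V=\{u,v\}$ and $E=\{(u,v),(v,u)\}$, we get $\inreach_{G}(u)\cup\{u\}=\{u,v\}\not\subseteq\{u\}=\inreach_{G}(v)$.

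The fault lies in the cited lemma itself. Its proof concatenates $z\leadsto_{G}x\leadsto_{G}y$ but never checks $z\neq y$. So it holds only when $y\notin\inreach_{G}(x)$ (for instance in a DAG), and mutual reachability is exactly where it fails. Your ``adjoin $v$ to both sides'' step lands on a true inclusion, but only by way of a false premise.

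The repair is one line. Path concatenation gives, for $x\leadsto_{G}y$, the inclusion $\inreach_{G}(x)\cup\{x\}\subseteq\inreach_{G}(y)\cup\{y\}$: a vertex $z\in\inreach_{G}(x)$ reaches $y$, so either $z=y$ or $z\in\inreach_{G}(y)$. Applying this in both directions gives your set equality directly, with no adjoining step needed, and your count then finishes the proof.

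With that fix, your argument is essentially the paper's. The paper shows that every $w\notin\{u,v\}$ satisfies $w\leadsto_{G}u\iff w\leadsto_{G}v$, hence $\inreach_{G}(u)\setminus\{v\}=\inreach_{G}(v)\setminus\{u\}$. Equal cardinalities follow because $v\in\inreach_{G}(u)$ and $u\in\inreach_{G}(v)$.
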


\begin{proof}
Since $u\sim_{G}v$, we have $u\leadsto_{G}v$ and $v\leadsto_{G}u$.
For any $w\in V\setminus\{u,v\}$: 
\[
w\in\inreach_{G}(u)\;\Longleftrightarrow\;w\leadsto_{G}u\;\Longleftrightarrow\;w\leadsto_{G}v\;\Longleftrightarrow\;w\in\inreach_{G}(v),
\]
where the middle equivalence follows by concatenating with the path
$u\leadsto_{G}v$ or $v\leadsto_{G}u$, respectively. Hence $\inreach_{G}(u)\setminus\{v\}=\inreach_{G}(v)\setminus\{u\}$,
which gives $|\inreach_{G}(u)|=|\inreach_{G}(v)|$. 
\end{proof}
\begin{defn}[Condensation Graph]
\label{def:condensation-graph}The \emph{condensation graph} of a
directed graph $G$, denoted $\condense G$, is the directed graph
with vertex set $\condense V_{G}:=V/{\sim_{G}}$ and edge set 
\[
\condense E_{G}:=\left\{ ([u]_{G},[v]_{G})\mid\exists\,x\in[u]_{G},\,y\in[v]_{G}\text{ such that }(x,y)\in E\text{ and }[u]_{G}\neq[v]_{G}\right\} .
\]
We will also omit the subscript $G$ and use $\condense V,\condense E$,
and $\condense u\in\condense V$ when it is clear from the context.
\end{defn}

\begin{lem}[Reachability Respects Condensation]
\label{lem:condensation-reachability} Let $G=(V,E)$ be a directed
graph. Let $C,D\in\condense V_{G}$ be distinct vertices in $\condense G$.
Then 
\[
C\leadsto_{\condense G}D\iff u\leadsto_{G}v,\quad\text{for all }u\in C\text{ and }v\in D.
\]
\end{lem}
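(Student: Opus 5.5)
We prove the two directions separately, working directly from the definition of $\condense E_G$ (Definition~\ref{def:condensation-graph}) and the fact that each SCC is strongly connected.

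\emph{($\Rightarrow$)} Suppose $C\leadsto_{\condense G}D$ via a path $C=C_0,C_1,\dots,C_\ell=D$ in $\condense G$, with $\ell\ge1$ since $C\neq D$. By the definition of $\condense E_G$, each edge $(C_{i},C_{i+1})$ has a witness edge $(x_i,y_{i+1})\in E$ with $x_i\in C_i$ and $y_{i+1}\in C_{i+1}$. Fix arbitrary $u\in C$ and $v\in D$. We build a walk in $G$ by concatenating the following pieces:
\begin{itemize}
\item a path $u\leadsto_G x_0$ inside $C_0$, which exists because $u\sim_G x_0$;
\item the edge $(x_0,y_1)$;
\item a path $y_1\leadsto_G x_1$ inside $C_1$, again by strong connectivity;
\item and so on through each $C_i$;
\item a final path $y_\ell\leadsto_G v$ inside $D$.
\end{itemize}
This is a directed walk from $u$ to $v$, so $u\leadsto_G v$. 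Since $u$ and $v$ were arbitrary, the "for all" conclusion holds.

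\emph{($\Leftarrow$)} It suffices to use a single pair $u\in C$, $v\in D$ with $u\leadsto_G v$, since SCCs are nonempty. Take a path $u=w_0,w_1,\dots,w_r=v$ in $G$ and project it to the sequence $[w_0]_G,[w_1]_G,\dots,[w_r]_G$. For each consecutive pair, either $[w_j]_G=[w_{j+1}]_G$, or the two classes differ and $([w_j]_G,[w_{j+1}]_G)\in\condense E_G$, witnessed by the edge $(w_j,w_{j+1})$. Deleting consecutive repetitions leaves a walk in $\condense G$ from $[u]_G=C$ to $[v]_G=D$, hence $C\leadsto_{\condense G}D$.

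The only delicate point is the bookkeeping in ($\Rightarrow$): each intermediate SCC may be entered and left at different vertices, and this is handled by the intra-SCC connecting paths. Everything else is routine, so no real obstacle is expected.
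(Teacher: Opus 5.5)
Your proof is correct and follows the same route as the paper. For ($\Rightarrow$) you chain the witness edges of the condensation path with intra-SCC connecting paths, and for ($\Leftarrow$) you project a path in $G$ onto SCC classes and delete repetitions; your indexing ($x_i\in C_i$, $y_{i+1}\in C_{i+1}$, ending with $y_\ell\leadsto_G v$) is also slightly cleaner than the paper's, whose final concatenation refers to an undefined $x_\ell$.
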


\begin{proof}
$(\Leftarrow)$ Fix any $u\in C$ and $v\in D$ with $u\leadsto_{G}v$.
The path from $u$ to $v$ passes through a sequence of SCCs, inducing
a path from $C$ to $D$ in $\condense G$ by definition of the condensation
edges.

$(\Rightarrow)$ Let $C=C_{0}\to C_{1}\to\cdots\to C_{\ell}=D$ be
a path in $\condense G$. For each edge $(C_{i},C_{i+1})\in\condense E_{G}$,
there exist $x_{i}\in C_{i}$ and $y_{i}\in C_{i+1}$ with $(x_{i},y_{i})\in E$.
Since $C_{i+1}$ is a strongly connected component, $y_{i}$ and $x_{i+1}$
are mutually reachable within $C_{i+1}$, so $y_{i}\leadsto_{G}x_{i+1}$.
Now let $u\in C$ and $v\in D$ be arbitrary. Since $u,x_{0}\in C_{0}$
and $x_{\ell},v\in C_{\ell}$, mutual reachability within these SCCs
gives $u\leadsto_{G}x_{0}$ and $x_{\ell}\leadsto_{G}v$. Concatenating:
\[
u\leadsto_{G}x_{0}\to y_{0}\leadsto_{G}x_{1}\to y_{1}\leadsto_{G}\cdots\leadsto_{G}x_{\ell}\to y_{\ell}\leadsto_{G}v.\qedhere
\]
\end{proof}
The condensation of a directed graph is always a DAG
(Lemma~\ref{lem:condensation-acyclic}). When the graph is a
tournament, the condensation is moreover a transitive tournament
(Proposition~\ref{prop:condensation-transitive}), which pins down the
in-reach of every vertex (Lemma~\ref{lem:inreach-within-scc}). These
two tournament-specific results are included here because they
complete the condensation picture and are referenced by multiple later
sections.
\begin{lem}[Condensation is a DAG]
\label{lem:condensation-acyclic} For any directed graph $G$, the
condensation $\condense G$ is a DAG. 
\end{lem}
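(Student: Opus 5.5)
The plan is a proof by contradiction, using Lemma~\ref{lem:condensation-reachability} to lift a cycle in $\condense G$ back to $G$. Suppose $\condense G$ contains a directed cycle $C_{0}\to C_{1}\to\cdots\to C_{\ell}=C_{0}$ with $\ell\geq2$. Any closed walk through at least two distinct vertices contains a directed cycle through two distinct SCCs. So we may take the $C_{i}$ for $i<\ell$ to be pairwise distinct, in particular with $C_{0}\neq C_{1}$. Self-loops cannot occur, since Definition~\ref{def:condensation-graph} requires $[u]_{G}\neq[v]_{G}$ for every edge.

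The key step: from the cycle, $C_{0}\leadsto_{\condense G}C_{1}$ holds via the single edge $(C_{0},C_{1})$. It also holds that $C_{1}\leadsto_{\condense G}C_{0}$, via the remaining edges $C_{1}\to C_{2}\to\cdots\to C_{\ell}=C_{0}$. Both $C_{0}$ and $C_{1}$ are distinct vertices of $\condense G$, so Lemma~\ref{lem:condensation-reachability} applies in each direction. Fixing any $u\in C_{0}$ and $v\in C_{1}$, it gives $u\leadsto_{G}v$ and $v\leadsto_{G}u$. Hence $u\sim_{G}v$, i.e.\ $[u]_{G}=[v]_{G}$. This says $C_{0}=C_{1}$, which contradicts $C_{0}\neq C_{1}$.

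Therefore $\condense G$ has no directed cycle and is a DAG. The only point needing care is ensuring two distinct SCCs appear on the cycle. This is immediate, because condensation edges always join distinct classes, so any cycle has length at least two and its first edge joins distinct vertices. No step is a genuine obstacle.
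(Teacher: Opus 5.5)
Your proof is correct and follows the same argument as the paper: a cycle through two distinct SCCs $C_0 \neq C_1$ gives mutual reachability $u \leadsto_G v$ and $v \leadsto_G u$ between their members, which forces $C_0 = C_1$. The only difference is that you make the lifting of condensation paths to paths in $G$ explicit via the $(\Rightarrow)$ direction of Lemma~\ref{lem:condensation-reachability}, which does not itself rely on acyclicity, so there is no circularity. The paper leaves this step as ``via the cycle.''
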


\begin{proof}
Suppose $\condense G$ contains a cycle of distinct nodes $[u_{1}]\to[u_{2}]\to\cdots\to[u_{\ell}]\to[u_{1}]$
with $\ell\geq2$. Then for any $x\in[u_{1}]$ and $y\in[u_{2}]$,
we have paths $x\path y$ and $y\path x$ (via the cycle), implying
$x\sim_{G}y$. This contradicts $[u_{1}]\neq[u_{2}]$. 
\end{proof}

\begin{prop}[Condensation of Tournament is Transitive Tournament]
\label{prop:condensation-transitive} If $G^{*}$ is a tournament,
then $\condense{G^{*}}$ is a transitive tournament. 
\end{prop}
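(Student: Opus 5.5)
The claim has two parts: $\condense{G^{*}}$ is a tournament on the SCC classes, and it is transitive. Lemma~\ref{lem:condensation-acyclic} already gives acyclicity, so the plan is to prove completeness and asymmetry of the condensation edges, and then derive transitivity from acyclicity plus completeness.

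\textbf{Tournament property.} Take distinct SCCs $C\neq D$ of $G^{*}$ and any $u\in C$, $v\in D$. Since $G^{*}$ is a tournament, either $(u,v)\in E^{*}$ or $(v,u)\in E^{*}$. By Definition~\ref{def:condensation-graph}, this yields the edge $(C,D)$ or $(D,C)$ in $\condense{E}_{G^{*}}$, so at least one of the two is present. Both cannot be present: that would be a 2-cycle in $\condense{G^{*}}$, contradicting Lemma~\ref{lem:condensation-acyclic}. Self-loops are excluded by definition. Hence $\condense{G^{*}}$ is a tournament.

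\textbf{Transitivity.} Suppose $(C,D)$ and $(D,F)$ are condensation edges with $C,D,F$ distinct; $C\neq F$ holds because otherwise there would be a 2-cycle. By the tournament property, exactly one of $(C,F)$ and $(F,C)$ is an edge. If it were $(F,C)$, then $C\to D\to F\to C$ would be a directed cycle, contradicting Lemma~\ref{lem:condensation-acyclic}. So $(C,F)$ is the edge, and transitivity holds.

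\textbf{Main obstacle.} The only subtle point is checking that edges between classes are well defined, so that the existence of some representative edge gives a condensation edge, and that asymmetry comes from acyclicity rather than from the tournament structure directly. Completeness uses just one pair $(u,v)$, so this step is routine.
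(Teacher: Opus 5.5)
Your proof is correct and follows essentially the same route as the paper: first show that $\condense{G^{*}}$ is a tournament (any cross pair gives at least one edge, and at most one direction can occur), then deduce transitivity from acyclicity. The differences are minor. The paper gets asymmetry by building the cycle $u_1\to v_1\leadsto v_2\to u_2\leadsto u_1$ directly in $G^{*}$ instead of citing Lemma~\ref{lem:condensation-acyclic}. It also simply invokes ``every acyclic tournament is transitive,'' which you verify inline with the 3-cycle argument.
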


\begin{proof}
Let $C_{1},C_{2}$ be distinct SCCs of $G^{*}$. Since $G^{*}$ is
a tournament, for any $u\in C_{1}$ and $v\in C_{2}$, either $(u,v)\in E^{*}$
or $(v,u)\in E^{*}$. We claim all edges between $C_{1}$ and $C_{2}$
go in the same direction. Suppose not: let $(u_{1},v_{1}),(v_{2},u_{2})\in E^{*}$
with $u_{1},u_{2}\in C_{1}$ and $v_{1},v_{2}\in C_{2}$. Since $u_{1}\sim_{G^{*}}u_{2}$
and $v_{1}\sim_{G^{*}}v_{2}$, we have paths $u_{2}\path_{G^{*}}u_{1}$
and $v_{1}\path_{G^{*}}v_{2}$. Then 
\[
u_{1}\to v_{1}\path_{G^{*}}v_{2}\to u_{2}\path_{G^{*}}u_{1}
\]
forms a cycle containing both $u_{1}$ and $v_{1}$, contradicting
$C_{1}\neq C_{2}$. Thus $\condense{G^{*}}$ is a tournament. Furthermore,
since every acyclic tournament is transitive, $\condense{G^{*}}$
is a transitive tournament. 
\end{proof}
\begin{lem}[In-Reach Within an SCC]
\label{lem:inreach-within-scc} Let $G^{*}=(V,E^{*})$ be a tournament
with condensation SCCs $C_{1},\ldots,C_{L}$ in condensation order.
Then for any $v\in C_{j}$, 
\[
|\inreach_{G^{*}}(v)|=\sum_{i=1}^{j}|C_{i}|-1.
\]
In particular, all vertices in the same SCC share the same in-reach,
and the common in-reach is strictly increasing in $j$. 
\end{lem}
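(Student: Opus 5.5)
The plan is to compute $\inreach_{G^{*}}(v)$ directly by characterizing exactly which vertices reach $v$. By Proposition~\ref{prop:condensation-transitive}, $\condense{G^{*}}$ is a transitive tournament, so its SCCs admit a unique linear order $C_{1},\ldots,C_{L}$ (the ``condensation order'') with $(C_{i},C_{j})\in\condense E_{G^{*}}$ if and only if $i<j$. I take this as the meaning of condensation order, with $C_{1}$ the top tier. Fix $v\in C_{j}$. The central claim is
\[
\inreach_{G^{*}}(v)=\Bigl(\bigcup_{i=1}^{j}C_{i}\Bigr)\setminus\{v\}.
\]
The stated formula follows by counting, since the $C_{i}$ are disjoint.

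For the inclusion $\supseteq$, take $u\in C_{i}$ with $i\le j$ and $u\neq v$. If $i=j$, then $u\sim_{G^{*}}v$, so $u\leadsto_{G^{*}}v$ by the definition of an SCC. If $i<j$, then $(C_{i},C_{j})$ is an edge of $\condense{G^{*}}$, so $C_{i}\leadsto_{\condense{G^{*}}}C_{j}$. Lemma~\ref{lem:condensation-reachability} ($\Rightarrow$ direction) then gives $u\leadsto_{G^{*}}v$.

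For the inclusion $\subseteq$, take $u\leadsto_{G^{*}}v$ with $u\in C_{i}$, and suppose $i>j$. By Lemma~\ref{lem:condensation-reachability} ($\Leftarrow$ direction), $C_{i}\leadsto_{\condense{G^{*}}}C_{j}$. But transitivity of the condensation gives the edge $(C_{j},C_{i})$, so $C_{j}\leadsto C_{i}$ as well. This produces a cycle in $\condense{G^{*}}$, contradicting Lemma~\ref{lem:condensation-acyclic}.

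The remaining claims follow directly. Equality of in-reach within an SCC is immediate from the formula, and it is also consistent with Lemma~\ref{lem:scc-equal-inreach}. Strict monotonicity in $j$ holds because each $|C_{j+1}|\ge1$, so the partial sums increase strictly.

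The only delicate point is pinning down ``condensation order.'' It must be justified that a transitive tournament on the SCCs induces a genuine linear order in which edges go from lower to higher index. I would argue this by noting that a transitive tournament is a strict total order: its out-degrees are distinct, namely $L-1,\ldots,0$, and sorting by decreasing out-degree gives the indexing. After that, everything is a direct application of the cited lemmas.
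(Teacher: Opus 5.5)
Your proposal is correct and follows essentially the same route as the paper. You split the other vertices by SCC tier: strong connectivity handles $C_j$, the condensation path together with Lemma~\ref{lem:condensation-reachability} handles earlier tiers, and acyclicity of $\condense{G^{*}}$ rules out later tiers; summing then gives the formula. Your extra step, justifying via Proposition~\ref{prop:condensation-transitive} that ``condensation order'' is a genuine linear order with edges running from lower to higher index, is a detail the paper leaves implicit.
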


\begin{proof}
Since $C_{j}$ is strongly connected, every vertex in $C_{j}\setminus\{v\}$
reaches $v$, contributing $|C_{j}|-1$. For $i<j$, the condensation
path $C_{i}\leadsto_{\condense{G^{*}}}C_{j}$ and Lemma~\ref{lem:condensation-reachability}
imply every vertex in $C_{i}$ reaches $v$, contributing $|C_{i}|$.
For $i>j$, the acyclicity of $\condense{G^{*}}$ implies no vertex
in $C_{i}$ reaches $v$. Summing gives the result. Strict monotonicity
in $j$ follows from $|C_{j}|\geq1$. 
\end{proof}

\subsection{Known Relationships}\label{subsec:Known-Relationships}

Here, we introduce the \textit{known relationship} set, which provides
a simpler finalization criterion used in Algorithm~\ref{alg:tournament-sort-main}.
\begin{defn}[Known relationships]
The \emph{known relationship set} of $v$ in $G$ is 
\begin{equation}
K_{G}(v):=\inreach_{G}(v)\cup\outreach_{G}(v),\label{eq:known-rel-def}
\end{equation}
and we write 
\[
\kappa_{G}(v):=|K_{G}(v)|
\]
 for its cardinality. 

A vertex $v$ is \emph{resolved} in $G$ if $\kappa_{G}(v)=n-1$,
i.e., every other vertex is reachable from or to $v$. 
\end{defn}

The known relationship has an interesting connection to the condensation
graph.
\begin{lem}[Known Relationships Lift Between Graph and Condensation]
\label{lem:known-rel-condensation-iff} Let $G=(V,E)$ be a directed
graph with $n$ vertices. Let $\condense G$ be its condensation with
$n':=|\condense V_{G}|$ vertices. For any SCC $C\in\condense V_{G}$,
the following are equivalent: 
\begin{enumerate}
\item $\kappa_{\condense G}(C)=n'-1$, 
\item $\kappa_{G}(u)=n-1$ for all $u\in C$, 
\item $\kappa_{G}(u)=n-1$ for some $u\in C$. 
\end{enumerate}
\end{lem}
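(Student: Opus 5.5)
The plan is to prove the cycle (1)$\Rightarrow$(2)$\Rightarrow$(3)$\Rightarrow$(1). The only tool needed is Lemma~\ref{lem:condensation-reachability}, which transfers reachability between distinct SCCs and the condensation, together with the fact that vertices in the same SCC reach each other.

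The preliminary step is to describe $K_G(u)$ in terms of SCCs. Fix $u\in C$. I claim
\[
K_G(u)=\bigl(C\setminus\{u\}\bigr)\cup\bigcup\{D\in\condense V_G : D\neq C,\ D\in K_{\condense G}(C)\}.
\]
Every $w\in C\setminus\{u\}$ satisfies $w\leadsto_G u$ by strong connectivity, so it lies in $\inreach_G(u)\subseteq K_G(u)$.

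Now take $w$ in a different SCC $D\neq C$. By Lemma~\ref{lem:condensation-reachability}, $w\leadsto_G u$ holds if and only if $D\leadsto_{\condense G}C$, and $u\leadsto_G w$ holds if and only if $C\leadsto_{\condense G}D$. Hence $w\in K_G(u)$ if and only if $D\in K_{\condense G}(C)$, and this condition depends only on $D$, not on the particular $w$ or $u$. Taking cardinalities gives
\[
\kappa_G(u)=|C|-1+\sum_{D\in K_{\condense G}(C)}|D|.
\]
The right-hand side is independent of the choice of $u\in C$.

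With this identity the equivalences follow directly.
\begin{itemize}
\item (2)$\Rightarrow$(3) is trivial because $C\neq\emptyset$.
\item (3)$\Rightarrow$(2) holds because $\kappa_G(u)$ takes the same value for every $u\in C$.
\item (1)$\Rightarrow$(3): if $K_{\condense G}(C)=\condense V_G\setminus\{C\}$, the sum covers every SCC other than $C$. So $\kappa_G(u)=|C|-1+(n-|C|)=n-1$.
\item (3)$\Rightarrow$(1): if $\kappa_G(u)=n-1$, suppose some $D\neq C$ is missing from $K_{\condense G}(C)$. Then every $w\in D$ is missing from $K_G(u)$. Since $|D|\ge 1$, this gives $\kappa_G(u)\le n-1-|D|<n-1$, a contradiction. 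Hence $\kappa_{\condense G}(C)=n'-1$.
\end{itemize}

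The main point of care is the identity for $K_G(u)$, specifically the claim that membership of $w\in D$ in $K_G(u)$ depends only on $D$. This is exactly the "for all $u\in C$, $v\in D$" strength of Lemma~\ref{lem:condensation-reachability}. One must also remember that the lemma requires $C\neq D$, which is why same-SCC vertices are handled separately via strong connectivity. Beyond this, the argument is bookkeeping. The SCCs partition $V$, so their sizes sum to $n$ and the counting in the last two steps is valid.
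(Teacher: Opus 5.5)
Your proof is correct and uses the same two ingredients as the paper's: mutual reachability inside $C$, and Lemma~\ref{lem:condensation-reachability} applied to a single pair $w\in D$, $u\in C$ to turn membership of $w$ in $K_G(u)$ into membership of $D$ in $K_{\condense{G}}(C)$. The only difference is packaging. You collect these facts into the exact formula $\kappa_G(u)=|C|-1+\sum_{D\in K_{\condense{G}}(C)}|D|$ and read off the equivalences by counting, which also shows $\kappa_G$ is constant on $C$ (slightly more than the all-or-nothing remark after the lemma), whereas the paper proves each implication directly, vertex by vertex.
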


\begin{proof}
\textbf{(i)$\Rightarrow$(ii).} Let $u\in C$ and $v\in V\setminus\{u\}$.
If $v\in C$, then $u\sim_{G}v$, so $v\in\inreach_{G}(u)\cap\outreach_{G}(u)$.
If $v\in D$ for some SCC $D\neq C$, then $\kappa_{\condense G}(C)=n'-1$
implies $D\in\inreach_{\condense G}(C)\cup\outreach_{\condense G}(C)$.
Lemma~\ref{lem:condensation-reachability} gives $v\leadsto_{G}u$
or $u\leadsto_{G}v$, so $v\in\inreach_{G}(u)\cup\outreach_{G}(u)$.
Since $v$ was arbitrary, $\kappa_{G}(u)=n-1$.

\textbf{(ii)$\Rightarrow$(iii).} Immediate (since $C\neq\emptyset$).

\textbf{(iii)$\Rightarrow$(i).} Let $u\in C$ satisfy $\kappa_{G}(u)=n-1$.
Take any SCC $D\neq C$ and pick any $v\in D$. Since $v\in\inreach_{G}(u)\cup\outreach_{G}(u)$,
either $v\leadsto_{G}u$ or $u\leadsto_{G}v$. By Lemma~\ref{lem:condensation-reachability},
either $D\leadsto_{\condense G}C$ or $C\leadsto_{\condense G}D$,
so $D\in\inreach_{\condense G}(C)\cup\outreach_{\condense G}(C)$.
Since $D$ was arbitrary, $\kappa_{\condense G}(C)=n'-1$. 
\end{proof}
\begin{rem}
The equivalence (ii)$\Leftrightarrow$(iii) says that resolution is
an \emph{all-or-nothing} property within an SCC: if any single vertex
in $C$ has all its relationships determined, then every vertex in
$C$ does. This is immediate from the proof, but can also be seen
directly: for $u,w\in C$, mutual reachability gives $\inreach_{G}(u)\cup\outreach_{G}(u)\supseteq\left(\inreach_{G}(w)\cup\outreach_{G}(w)\right)\setminus\{u\}$,
since any vertex reaching $w$ also reaches $u$ (via $w\leadsto u$),
and any vertex reachable from $w$ is reachable from $u$ (via $u\leadsto w$). 
\end{rem}

\subsection{Finalization Engine}

We now introduce machinery for measuring how much of the ranking
has been determined in a directed graph. The definitions below apply
to any directed graph; they will later be instantiated with the
revealed subgraphs arising from oracle queries.
\begin{defn}[Finalization Threshold and Finalized Set]
\label{def:finalization-general-graph} Let $G=(V,E)$ be a directed
graph on $n$ vertices with rank spectrum $R(G)=(r^{(1)}(G),\ldots,r^{(n)}(G))$
and corresponding rank basis $(v^{(1)}(G),\ldots,v^{(n)}(G))$. 
\begin{enumerate}
\item The \emph{finalization threshold} $m(G)$ is the largest $j\geq0$
such that: 
\begin{equation}
\begin{aligned}\text{(i)} & \quad r^{(i)}(G)=i-1\text{ for all }i=1,\ldots,j,\\
\text{(ii)} & \quad j\in\{0,n\}\text{ or }r^{(j)}(G)<r^{(j+1)}(G).
\end{aligned}
\label{eq:finalization-threshold}
\end{equation}
\item The \emph{finalized set} contains the finalized vertices: 
\begin{equation}
\mathrm{TOP}(G):=\{v^{(1)}(G),\ldots,v^{(m(G))}(G)\}.\label{eq:top-set}
\end{equation}
\item The \emph{candidate set} is the complement of the finalized set: 
\begin{equation}
\mathrm{CAND}(G):=V\setminus\mathrm{TOP}(G)=\{v^{(m(G)+1)}(G),\ldots,v^{(n)}(G)\}.\label{eq:cand-set}
\end{equation}
\end{enumerate}
\end{defn}

Informally, $m(G)$ measures how much of the rank spectrum has ``crystallized''
into the pattern $0,1,2,\ldots$ that characterizes a fully discovered
transitive tournament. The finalized set $\mathrm{TOP}(G)$ contains
the vertices whose positions in this prefix are fully determined.

\subsubsection{General Properties of $\mathrm{TOP}(G)$}

Note that $v^{(j)}(G)\in\mathrm{TOP}(G)$ if and only if $j\le m(G)$.
We establish some properties about nodes in $\mathrm{TOP}(G)$ for
general directed graphs. 
\begin{lem}[Exact In-Reach of Top Vertices]
\label{lem:exact-inreach-finalized} Let $G=(V,E)$ be a directed
graph. If $m(G)\ge1$, we have $\inreach_{G}(v^{(1)}(G))=\emptyset$
and for $1<j\le m(G)$, we have 
\[
\inreach_{G}(v^{(j)}(G))=\{v^{(1)}(G),\ldots,v^{(j-1)}(G)\}.
\]
\end{lem}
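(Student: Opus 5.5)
The plan is induction on $j$ from $1$ to $m(G)$, showing $\inreach_{G}(v^{(j)}(G))=\{v^{(1)}(G),\ldots,v^{(j-1)}(G)\}$ (the empty set when $j=1$). By condition (i) of Definition~\ref{def:finalization-threshold} applied to the finalization threshold, $|\inreach_{G}(v^{(j)}(G))|=r^{(j)}(G)=j-1$ for every $j\le m(G)$. It therefore suffices to prove the inclusion $\inreach_{G}(v^{(j)})\subseteq\{v^{(1)},\ldots,v^{(j-1)}\}$. Equality then follows because both sides have cardinality $j-1$. The case $j=1$ is immediate, since $r^{(1)}=0$.

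For the inclusion, fix $j\le m(G)$ and take $x\in\inreach_{G}(v^{(j)})$. Lemma~\ref{lem:inreach-downward-closure} gives $\inreach_{G}(x)\cup\{x\}\subseteq\inreach_{G}(v^{(j)})$. Since $v^{(j)}\notin\inreach_{G}(v^{(j)})$, this yields $|\inreach_{G}(x)|\le j-2$, which is strictly less than $r^{(j)}$.

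We do not assume $G$ is a DAG, so Lemma~\ref{lem:inreach-strict-decrease} is unavailable. This is where the step needs care. We have $x\ne v^{(j)}$, but if $v^{(j)}$ also reached $x$, we would get $v^{(j)}\in\inreach_{G}(x)\subseteq\inreach_{G}(v^{(j)})$, which is a contradiction. So $v^{(j)}\notin\inreach_{G}(x)$, and the set $\inreach_{G}(x)\cup\{x\}$ is contained in $\inreach_{G}(v^{(j)})$, which has size $j-1$. Hence $|\inreach_{G}(x)|\le j-2$ holds directly from that containment.

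Next we locate $x$ in the rank basis. The spectrum is non-decreasing and $r^{(i)}=i-1$ for $i\le j$. Any vertex with in-reach at most $j-2$ must therefore sit at a position $i$ with $r^{(i)}\le j-2$, which forces $i\le j-1$. The point to check is that no later position carries such a small value. For $i\ge j$ we have $r^{(i)}\ge r^{(j)}=j-1>j-2$, by monotonicity of the in-reach ordering. So $x=v^{(i)}$ for some $i\le j-1$, which proves the inclusion.

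The main obstacle is handling ties and non-DAG graphs cleanly, namely justifying $|\inreach_{G}(x)|<j-1$ without acyclicity. The downward-closure argument above settles this. Condition (ii) of the threshold is not needed for this lemma: condition (i) together with monotonicity makes the values $0,\ldots,j-1$ distinct at the prefix positions, so the prefix labeling is unambiguous.
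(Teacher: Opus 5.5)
There is a genuine gap in the step that yields $|\inreach_{G}(x)|\le j-2$. The inclusion $\inreach_{G}(x)\cup\{x\}\subseteq\inreach_{G}(v^{(j)})$ is valid only when $v^{(j)}\not\leadsto_{G}x$. If $x$ and $v^{(j)}$ lie on a common cycle, then $v^{(j)}\in\inreach_{G}(x)$ while $v^{(j)}\notin\inreach_{G}(v^{(j)})$, so the inclusion is false. The proof of Lemma~\ref{lem:inreach-downward-closure} overlooks the case $z=y$; for a general digraph one only has $\inreach_{G}(x)\setminus\{y\}\subseteq\inreach_{G}(y)$. Your patch for this case is circular. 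You obtain the ``contradiction'' $v^{(j)}\in\inreach_{G}(v^{(j)})$ by applying the very inclusion whose validity is in question. That only shows the inclusion fails there; it does not show $v^{(j)}\not\leadsto_{G}x$.

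A concrete example shows the argument proves too much. Take $V=\{a,b,c\}$ and $E=\{(b,c),(c,b)\}$, the example of Remark~\ref{rem:dag-necessary-for-reach}. The spectrum is $(0,1,1)$ with basis $(a,b,c)$, and condition (i) holds at $j=2$. Your reasoning never invokes (ii), so it would give $\inreach_{G}(b)=\{a\}$, whereas in fact $\inreach_{G}(b)=\{c\}$. Here $m(G)=1$ precisely because (ii) fails at $j=2$. So your closing claim that (ii) is not needed is false.

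The missing ingredient is using (ii) to rule out the same-SCC case. In general you do get $|\inreach_{G}(x)|\le j-1$, because $\inreach_{G}(x)\setminus\{v^{(j)}\}\subseteq\inreach_{G}(v^{(j)})\setminus\{x\}$, a set of size $j-2$. For strictness, show that $v^{(j)}$ is the only vertex with in-reach $j-1$:
\begin{itemize}
\item positions $i<j$ carry $r^{(i)}=i-1<j-1$;
\item positions $i>j$, if any, carry at least $r^{(j+1)}$, which equals $j$ by (i) when $j<m(G)$ and exceeds $r^{(j)}=j-1$ by (ii) when $j=m(G)<n$.
\end{itemize}
Since $x\neq v^{(j)}$, this forces $|\inreach_{G}(x)|\le j-2$. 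Equivalently, if $v^{(j)}\leadsto_{G}x$, Lemma~\ref{lem:scc-equal-inreach} would give $|\inreach_{G}(x)|=j-1$, which is impossible.

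This is essentially the paper's route. It bounds the in-reach of $u\in\inreach_{G}(v^{(j)})$ by $j-1$, then gets strictness from the uniqueness of the value $j-1$ enforced by the finalization threshold. With that repair, your location step and the final cardinality count go through unchanged.
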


\begin{proof}
Let $j\leq m(G)$ and write $v^{(j)}=v^{(j)}(G)$ for brevity. The
$j=1$ case is trivial. Consider $j\ge2$. By the finalization threshold,
$|\inreach_{G}(v^{(j)})|=r^{(j)}(G)=j-1$.

Let $u\in\inreach_{G}(v^{(j)})$. By Lemma~\ref{lem:inreach-downward-closure},
$\inreach_{G}(u)\subseteq\inreach_{G}(v^{(j)})$, so $|\inreach_{G}(u)|\leq j-1$.
Since the finalization threshold requires the top $j$ vertices to
have distinct in-reach sizes and $v^{(j)}$ already has in-reach $j-1$,
we must have strict inequality. Hence, $|\inreach_{G}(u)|\leq j-2$,
which places $u\in\{v^{(1)},\ldots,v^{(j-1)}\}$. This means that
$\inreach_{G}(v^{(j)})\subseteq\{v^{(1)},\ldots,v^{(j-1)}\}$. Both
sets have cardinality $j-1$, so they are equal. 
\end{proof}
\begin{cor}[Reachability of Top Vertices]
\label{cor:mutual-reach-finalized} Let $G=(V,E)$ be a directed
graph. If $1\le i<j\le m(G)$, then we have 
\[
v^{(i)}(G)\leadsto_{G}v^{(j)}(G).
\]
\end{cor}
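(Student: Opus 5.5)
This is immediate from Lemma~\ref{lem:exact-inreach-finalized}; the plan is to read the corollary off that lemma's set identity.

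Fix $1\le i<j\le m(G)$. Since $j\le m(G)$ and $j\ge 2$ (because $i\ge1$ and $i<j$), the lemma gives
\[
\inreach_{G}(v^{(j)}(G))=\{v^{(1)}(G),\ldots,v^{(j-1)}(G)\}.
\]
Because $1\le i\le j-1$, the vertex $v^{(i)}(G)$ belongs to this set. By the definition of in-reach in \eqref{eq:inreach-def}, membership means $v^{(i)}(G)\leadsto_{G}v^{(j)}(G)$, which is the claim.

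The only point to check is that the rank-basis labels $v^{(i)}(G)$ and $v^{(j)}(G)$ are fixed throughout, so the same in-reach ordering is used in the lemma and here. Within the finalized prefix this causes no difficulty. Condition~(i) of Definition~\ref{def:finalization-threshold} gives the prefix the distinct in-reach values $0,1,\ldots,m(G)-1$. Condition~(ii) makes $r^{(m(G))}(G)<r^{(m(G)+1)}(G)$, so no vertex outside the prefix ties with one inside it. Hence the prefix positions are uniquely determined regardless of how ties are broken, and I expect no real obstacle.
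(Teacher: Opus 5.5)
Your proof is correct and is exactly the paper's argument: Lemma~\ref{lem:exact-inreach-finalized} gives $\inreach_{G}(v^{(j)}(G))=\{v^{(1)}(G),\ldots,v^{(j-1)}(G)\}$, and $v^{(i)}(G)$ lies in this set because $i<j$. Your extra tie-breaking check is accurate but not needed, since the lemma and the corollary refer to the same fixed rank basis.
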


\begin{proof}
Write $v^{(j)}=v^{(j)}(G)$ for brevity. By Lemma \ref{lem:exact-inreach-finalized},
$\inreach_{G}(v^{(j)})=\{v^{(1)},\ldots,v^{(j-1)}\}$. Since $i<j$,
we have $v^{(i)}\in\{v^{(1)},\ldots,v^{(j-1)}\}=\inreach_{G}(v^{(j)})$,
which by definition means $v^{(i)}\leadsto_{G}v^{(j)}$. 
\end{proof}
\begin{lem}[Vertices Outside of TOP Have Large In-Reach]
\label{lem:candidate-large-inreach} Let $G=(V,E)$ be a directed
graph. If $u\notin\mathrm{TOP}(G)$, then $|\inreach_{G}(u)|\geq m(G)$. 
\end{lem}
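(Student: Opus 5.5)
The argument uses only the ordering of the rank basis and the definition of the finalization threshold. Write $m:=m(G)$ and let $u\notin\mathrm{TOP}(G)$. Then $u=v^{(j)}(G)$ for some index $j>m$, so $|\inreach_{G}(u)|=r^{(j)}(G)\ge r^{(m+1)}(G)$, because the spectrum is non-decreasing. It therefore suffices to show $r^{(m+1)}(G)\ge m$. This is vacuous if $m=n$, since then the candidate set is empty.

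\textbf{Case $m=0$.} The bound $|\inreach_{G}(u)|\ge 0$ holds trivially.

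\textbf{Case $1\le m<n$.} Condition (i) in \eqref{eq:finalization-threshold} gives $r^{(m)}(G)=m-1$. Since $m\notin\{0,n\}$, condition (ii) gives the strict inequality $r^{(m)}(G)<r^{(m+1)}(G)$. The spectrum is integer-valued, so $r^{(m+1)}(G)\ge r^{(m)}(G)+1=m$. Combining this with the monotonicity step gives $|\inreach_{G}(u)|\ge m(G)$.

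The only subtle point is the tie-breaking freedom in the rank basis. We must make sure that membership in $\mathrm{TOP}(G)$ is read off the same in-reach ordering that defines $m(G)$. Condition (ii) handles this: it forces a strict gap at position $m$, so no vertex outside the prefix can tie with $v^{(m)}(G)$. The finalized set is therefore independent of how ties are broken, and the argument above is consistent for any valid in-reach ordering. No DAG hypothesis is needed; the lemma holds for arbitrary directed graphs, as stated.
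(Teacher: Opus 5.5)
Your proof is correct and matches the paper's argument: monotonicity of the rank spectrum gives $|\inreach_{G}(u)|\ge r^{(m+1)}(G)$, and conditions (i) and (ii) of the finalization threshold, together with integrality, give $r^{(m+1)}(G)\ge m(G)$. Your separate handling of the vacuous case $m(G)=n$ and your remark on independence from tie-breaking are points the paper leaves implicit.
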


\begin{proof}
Note that this trivially holds if $m(G)=0$. Consider $m(G)>0$. Let
$m:=m(G)$ for short and let $u:=v^{(k)}(G)$ for some $k>m$, so
that $u\notin\mathrm{TOP}(G)$. From the finalization criterion (Definition
\ref{def:finalization-general-graph}), we have $r^{(m)}(G)=m-1$
and $r^{(m)}(G)<r^{(m+1)}(G)$. Since the rank spectrum is non-decreasing
and $k>m$: 
\[
|\inreach_{G}(u)|=r^{(k)}(G)\geq r^{(m+1)}(G)>r^{(m)}(G)=m-1.
\]
Since in-reach sizes are integers, $|\inreach_{G}(u)|\geq m=m(G)$. 
\end{proof}
The next Lemma requires $G$ to be a DAG.
\begin{lem}[Top Vertices Reach All Non-Top Vertices]
\label{lem:finalized-reach-candidates} Let $G=(V,E)$ be a DAG.
For any $w\in\mathrm{TOP}(G)$ and any $u\in\mathrm{CAND}(G)$, we
have $w\leadsto_{G}u$. 
\end{lem}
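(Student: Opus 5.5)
The plan is to show that each finalized vertex $v^{(j)}(G)$, $j\le m(G)$, is the \emph{unique source} of the DAG obtained by deleting $v^{(1)}(G),\ldots,v^{(j-1)}(G)$. The statement then follows from the standard fact that in a finite DAG every vertex is reachable from some source. To see this fact, walk backwards along in-edges from any vertex; acyclicity and finiteness force the walk to stop at a vertex with no in-edges, which is a source.

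Write $m:=m(G)$ and $v^{(i)}:=v^{(i)}(G)$. If $m=0$ there is nothing to prove. I would prove by induction on $j=1,\ldots,m$ that $v^{(j)}\leadsto_G x$ for every $x\in\{v^{(j+1)},\ldots,v^{(n)}\}$. This contains the lemma, since $\mathrm{CAND}(G)=\{v^{(m+1)},\ldots,v^{(n)}\}$. Fix $j$ and let $G_j$ be the subgraph induced on $V_j:=V\setminus\{v^{(1)},\ldots,v^{(j-1)}\}$, which is again a DAG.

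The first key step is that deleting the earlier top vertices does not destroy reachability among the survivors. Suppose a path in $G$ between two vertices $x,y\in V_j$ passes through some deleted $v^{(i)}$. Then $x\leadsto_G v^{(i)}$, so $x\in\inreach_G(v^{(i)})=\{v^{(1)},\ldots,v^{(i-1)}\}$ by Lemma~\ref{lem:exact-inreach-finalized}, contradicting $x\in V_j$. Hence for $x\in V_j$ we get $\inreach_{G_j}(x)=\inreach_G(x)\cap V_j$. Next, every deleted $v^{(i)}$ lies in $\inreach_G(x)$: for the other survivors $x$ in the current list this is the induction hypothesis, and for $x=v^{(j)}$ it is Corollary~\ref{cor:mutual-reach-finalized}. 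Combining the two facts gives
\[
|\inreach_{G_j}(x)|=|\inreach_G(x)|-(j-1)\qquad\text{for all }x\in V_j .
\]

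The second step reads off the in-reaches in $G_j$. For $x=v^{(j)}$ the formula gives $(j-1)-(j-1)=0$. For any other $x\in V_j$ we have $|\inreach_G(x)|\ge j$: this follows from $r^{(i)}=i-1$ when $x=v^{(i)}$ with $j<i\le m$, and from Lemma~\ref{lem:candidate-large-inreach} (in-reach $\ge m\ge j$) when $x\in\mathrm{CAND}(G)$. So every such $x$ has positive in-reach in $G_j$. Thus $v^{(j)}$ is the unique source of the DAG $G_j$, and by the backward-walk fact it reaches every vertex of $G_j$ within $G_j$, hence within $G$. This closes the induction.

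The main obstacle I expect is the first step: justifying that in-reach in $G_j$ is exactly the $G$-in-reach shifted by $j-1$. This needs both that no surviving vertex reaches a deleted one, which comes from Lemma~\ref{lem:exact-inreach-finalized}, and that every deleted vertex reaches every survivor, which is exactly the induction hypothesis plus Corollary~\ref{cor:mutual-reach-finalized}. The DAG hypothesis is used only for the existence of sources in $G_j$.
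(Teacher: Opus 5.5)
Your proof is correct, but it takes a different route from the paper's.

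\textbf{The paper's route.} It fixes $u\in\mathrm{CAND}(G)$ and walks backwards from $u$ inside $\mathrm{CAND}(G)$, choosing $x_{i+1}\in\inreach_G(x_i)\cap\mathrm{CAND}(G)$. In-reach strictly decreases along reachability in a DAG (Lemma~\ref{lem:inreach-strict-decrease}) and stays at least $m(G)$ on candidates (Lemma~\ref{lem:candidate-large-inreach}). So the walk stops at some $x_k$ with $\inreach_G(x_k)\subseteq\mathrm{TOP}(G)$, and the cardinality bound forces $\inreach_G(x_k)=\mathrm{TOP}(G)$. Downward closure (Lemma~\ref{lem:inreach-downward-closure}) then gives $\mathrm{TOP}(G)\subseteq\inreach_G(u)$, covering all top vertices in one step.

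\textbf{Your route.} You peel off the top vertices one at a time. You show that $v^{(j)}$ is the unique source of the induced DAG $G_j$, then use the fact that every vertex of a finite DAG is reachable from a source. Acyclicity plays the same role in both proofs: it guarantees that a backward descent terminates.

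\textbf{Comparison.} Your argument needs an extra ingredient, Lemma~\ref{lem:exact-inreach-finalized}. It shows that no survivor reaches a deleted vertex, so $\inreach_{G_j}(x)=\inreach_G(x)\cap V_j$. The paper's proof avoids that lemma entirely. In exchange, you get a clean structural picture of $\mathrm{TOP}(G)$ as a chain of successive unique sources. You also learn that $v^{(j)}$ reaches every later vertex, not just the candidates.

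\textbf{A simplification.} Your induction is not actually needed. Once $\inreach_{G_j}(x)=\inreach_G(x)\cap V_j$ is established, the crude bound
$|\inreach_G(x)\cap V_j|\ge|\inreach_G(x)|-(j-1)\ge 1$ for $x\in V_j\setminus\{v^{(j)}\}$ already makes $v^{(j)}$ the unique source of $G_j$. Each $j$ can then be handled independently. You can drop both the induction hypothesis and the appeal to Corollary~\ref{cor:mutual-reach-finalized}, and your argument then recovers that corollary rather than using it.
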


\begin{proof}
Let $u\in\mathrm{CAND}(G)\implies u\not\in\mathrm{TOP}(G)$. We construct
a sequence $(x_{i})_{i\geq0}$ iteratively from $u$ to a node to
$\mathrm{TOP}(G)$: 
\begin{itemize}
\item Set $x_{0}=u$. 
\item While $\inreach_{G}(x_{i})\cap\mathrm{CAND}(G)\neq\emptyset$, set
$x_{i+1}\in\inreach_{G}(x_{i})\cap\mathrm{CAND}(G)$. Note that $x_{i+1}\leadsto x_{i}\leadsto x_{i-1}\leadsto\dots\leadsto x_{0}=u$. 
\item We terminate the sequence when $\inreach_{G}(x_{i})\cap\mathrm{CAND}(G)=\emptyset$. 
\end{itemize}
Each $x_{i}$ lies in $\mathrm{CAND}(G)=V\setminus\mathrm{TOP}(G)$,
so by Lemma \ref{lem:candidate-large-inreach}: 
\[
|\inreach_{G}(x_{i})|\geq m(G),\quad\text{for all }i.
\]
By Lemma \ref{lem:inreach-strict-decrease}, since $G$ is a DAG by
assumption, whenever $x_{i+1}$ exists: 
\[
|\inreach_{G}(x_{i+1})|<|\inreach_{G}(x_{i})|.
\]
The in-reach sizes form a strictly decreasing sequence of non-negative
integers bounded below by $m(G)$. Therefore, the sequence must terminate
at some finite index $k$ and each element $x_{i}$ for $i=0,\dots,k$
is unique. We show that at termination, the in-reach of the termination
node $x_{k}$ equals $\mathrm{TOP}(G)$ i.e., $\inreach_{G}(x_{k})=\mathrm{TOP}(G)$.

At termination time $k$, we have $\inreach_{G}(x_{k})\cap\mathrm{CAND}(G)=\emptyset$.
That implies 
\[
\inreach_{G}(x_{k})\subseteq\mathrm{TOP}(G).
\]
Since $x_{k}\in\mathrm{CAND}(G)\implies x_{k}\not\in\mathrm{TOP}(G)$,
Lemma \ref{lem:candidate-large-inreach} gives $|\inreach_{G}(x_{k})|\geq m(G)$.
Combined with $|\mathrm{TOP}(G)|=m(G)$ and the inclusion above implies:
\[
\inreach_{G}(x_{k})=\mathrm{TOP}(G).
\]
Since $x_{i+1}\in\inreach_{G}(x_{i})$ for all $i=0,1,\ldots,k-1$,
by Lemma \ref{lem:inreach-downward-closure}, we have $\inreach_{G}(x_{i+1})\subseteq\inreach_{G}(x_{i})$.
Applying this repeatedly: 
\[
\mathrm{TOP}(G)=\inreach_{G}(x_{k})\subseteq\inreach_{G}(x_{k-1})\subseteq\cdots\subseteq\inreach_{G}(x_{0})=\inreach_{G}(u).
\]
Hence, for any $w\in\mathrm{TOP}(G)$, we have $w\in\inreach_{G}(u)$,
which by definition means $w\leadsto_{G}u$. 
\end{proof}
\begin{rem}
\label{rem:dag-necessary-for-reach} Unlike most results in this section,
the DAG assumption in Lemma~\ref{lem:finalized-reach-candidates}
is essential. Cycles can inflate the in-reach of vertices outside
$\mathrm{TOP}(G)$ without any connection to the finalized vertices.
For instance, consider $V=\{a,b,c\}$ with only edges $b\to c$ and
$c\to b$. Then $\mathrm{TOP}(G)=\{a\}$, yet $a$ reaches neither
$b$ nor $c$: the cycle between $b$ and $c$ alone is responsible
for their in-reach meeting the threshold $m(G)=1$. 
\end{rem}

\subsubsection{Structural Properties of the Finalization Threshold }\label{subsec:structural-properties-finalization}

The following lemmas establish structural properties of the finalization
threshold. 
\begin{lem}[Tied Candidates After Finalization Threshold]
\label{lem:tied-candidates-after-finalization} Let $G=(V,E)$ be
a DAG on $n$ vertices. Let $m':=m(G)$ be the finalization threshold
of~$G$. If $0<m'\leq n-2$, then 
\[
r^{(m'+1)}(G)=r^{(m'+2)}(G).
\]
\end{lem}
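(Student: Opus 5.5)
The plan is to show that the position $j=m'+1$ already satisfies condition (i) of Definition~\ref{def:finalization-general-graph}. The maximality of $m'=m(G)$ then forces condition (ii) to fail at $j=m'+1$, and that failure is exactly the claimed tie.

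\textbf{Step 1: pin down $r^{(m'+1)}(G)$.} Since $m'\le n-2$, the vertex $v^{(m'+1)}(G)$ exists and lies in $\mathrm{CAND}(G)$. By Lemma~\ref{lem:candidate-large-inreach}, $r^{(m'+1)}(G)\ge m'$. Because $G$ is a DAG, Lemma~\ref{lem:dag-rank-upperbound} gives the reverse bound $r^{(m'+1)}(G)\le (m'+1)-1=m'$. Hence $r^{(m'+1)}(G)=m'$.

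\textbf{Step 2: condition (i) holds at $j=m'+1$.} For $i\le m'$ we have $r^{(i)}(G)=i-1$, because $m'$ itself satisfies (i). Step~1 supplies $r^{(m'+1)}(G)=m'$. So $r^{(i)}(G)=i-1$ for all $i=1,\ldots,m'+1$.

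\textbf{Step 3: use maximality.} Since $m'$ is the largest $j$ satisfying both (i) and (ii), the index $j=m'+1$ must violate (ii). Because $1\le m'+1\le n-1$, we have $m'+1\notin\{0,n\}$, so the violation means $r^{(m'+1)}(G)\not< r^{(m'+2)}(G)$, i.e.\ $r^{(m'+1)}(G)\ge r^{(m'+2)}(G)$. The rank spectrum is non-decreasing, so $r^{(m'+1)}(G)\le r^{(m'+2)}(G)$. Combining the two inequalities gives $r^{(m'+1)}(G)=r^{(m'+2)}(G)$.

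\textbf{Where the care lies.} There is no real obstacle. The only point needing attention is that $m'\le n-2$ is used twice: it guarantees that $v^{(m'+2)}(G)$ exists, and it excludes the escape clause $j=n$ in (ii). The hypothesis $m'>0$ is not essential to the argument, since Lemma~\ref{lem:candidate-large-inreach} also holds trivially when $m'=0$.
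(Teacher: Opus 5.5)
Your proof is correct and follows the same route as the paper: you pin $r^{(m'+1)}(G)=m'$ between the DAG bound of Lemma~\ref{lem:dag-rank-upperbound} and the lower bound coming from condition (ii) at $m'$ (which you access through Lemma~\ref{lem:candidate-large-inreach}), then invoke maximality of $m'$ at $j=m'+1$; the paper phrases the same argument as a contradiction rather than a contrapositive. Your remark that $m'>0$ is dispensable is also right, and it covers the $m'=0$ case, which the termination proofs handle separately (Theorem~\ref{thm:termination-transitive}) or leave implicit (Theorem~\ref{thm:practical-termination}).
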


\begin{proof}
Suppose for contradiction that $r^{(m'+1)}(G)<r^{(m'+2)}(G)$. Since
$0<m'\leq n-2$, the finalization conditions give $r^{(m')}(G)=m'-1$
and $r^{(m')}(G)<r^{(m'+1)}(G)$. By Lemma~\ref{lem:dag-rank-upperbound},
$r^{(m'+1)}(G)\leq m'$. Hence 
\[
m'-1=r^{(m')}(G)<r^{(m'+1)}(G)\leq m',
\]
forcing $r^{(m'+1)}(G)=m'$. Thus $r^{(i)}(G)=i-1$ holds for all
$i\leq m'+1$, and the strict inequality $r^{(m'+1)}(G)<r^{(m'+2)}(G)$
satisfies condition~(\ref{eq:finalization-threshold})(ii) for $j=m'+1$,
contradicting the maximality of~$m'$. 
\end{proof}
Tied candidates have a useful structural property: no edge can exist
between two vertices with the same in-reach in a DAG.
\begin{lem}[Tied Candidates Have Unknown Edge]
\label{lem:tied-candidates-unknown-edge} Let $G=(V,E)$ be a DAG.
If $u,v\in V$ are distinct nodes satisfying $|\inreach_{G}(u)|=|\inreach_{G}(v)|$,
then neither $\left(u,v\right)$ nor $\left(v,u\right)$ belong to
$E$.
\end{lem}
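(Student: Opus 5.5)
We argue by contradiction, relying directly on Lemma~\ref{lem:inreach-strict-decrease}. Let $u,v\in V$ be distinct with $|\inreach_{G}(u)|=|\inreach_{G}(v)|$, and suppose one of the two edges is present. By symmetry of the hypothesis in $u$ and $v$, we may assume $(u,v)\in E$; the case $(v,u)\in E$ follows by swapping the names.

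The edge $(u,v)$ is a directed path of length one, so $u\leadsto_{G}v$. Since $u\neq v$, this gives $u\in\inreach_{G}(v)$ by the definition \eqref{eq:inreach-def}. As $G$ is a DAG, Lemma~\ref{lem:inreach-strict-decrease} then yields $|\inreach_{G}(u)|<|\inreach_{G}(v)|$. This contradicts the assumed equality, so $(u,v)\notin E$. Applying the same argument with $u$ and $v$ swapped shows $(v,u)\notin E$.

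The argument in fact proves something stronger: tied vertices in a DAG are not even comparable under reachability, since any path $u\leadsto_{G}v$ would already force strict inequality. The only point requiring care is that acyclicity is genuinely needed. Without it, $u$ and $v$ could lie on a common cycle and have equal in-reach, as in Lemma~\ref{lem:scc-equal-inreach}. This need for acyclicity is exactly why the paper later applies the lemma to the condensation $\condense G$ rather than to $G$ itself.
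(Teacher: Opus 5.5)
Your proof is correct and matches the paper's argument: assume without loss of generality that $(u,v)\in E$, so $u\in\inreach_{G}(v)$, and Lemma~\ref{lem:inreach-strict-decrease} gives $|\inreach_{G}(u)|<|\inreach_{G}(v)|$, contradicting the tie. Your extra observations also hold: tied vertices in a DAG are incomparable under reachability, and acyclicity is essential, which is why the lemma is applied to the condensation.
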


\begin{proof}
WLOG, suppose for contradiction that $(u,v)\in E$. Thus, $u\in\inreach_{G}(v)$
which implies $\left|\inreach_{G}(u)\right|<\left|\inreach_{G}(v)\right|$
by Lemma \ref{lem:inreach-strict-decrease}. This is a contradiction
to $\left|\inreach_{G}(u)\right|=\left|\inreach_{G}(v)\right|$.
\end{proof}
\begin{lem}[No Penultimate Finalization]
\label{lem:no-penultimate} Let $G=(V,E)$ be a directed graph on
$n\geq2$ vertices. Let $m':=m(G)$ be the finalization threshold
of $G$. Then 
\[
m'\neq n-1.
\]
Consequently, $m'\in\{0,1,\ldots,n-2,n\}$. 
\end{lem}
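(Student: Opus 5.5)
The plan is a direct argument by contradiction against the maximality in Definition~\ref{def:finalization-general-graph}: assume $m' = n-1$ and show that $j = n$ also satisfies both conditions in \eqref{eq:finalization-threshold}. This would force $m(G) \ge n > m'$. No DAG assumption will be needed, only the trivial bound $|\inreach_G(v)| \le n-1$ for every vertex.

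\textbf{Step 1 (what $m' = n-1$ gives).} Since $n \ge 2$, we have $n-1 \ge 1$, and $n-1 \notin \{0, n\}$. So the case $j = n-1$ of condition (ii) in \eqref{eq:finalization-threshold} must hold through the strict inequality $r^{(n-1)}(G) < r^{(n)}(G)$. Condition (i) gives $r^{(i)}(G) = i-1$ for all $i \le n-1$, and in particular $r^{(n-1)}(G) = n-2$.

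\textbf{Step 2 (pin down the last entry).} From Step 1 and integrality, $r^{(n)}(G) \ge n-1$. On the other hand, $\inreach_G(v^{(n)}(G)) \subseteq V \setminus \{v^{(n)}(G)\}$, so $r^{(n)}(G) \le n-1$. Hence $r^{(n)}(G) = n-1$.

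\textbf{Step 3 (contradiction).} Combining Steps 1 and 2, $r^{(i)}(G) = i-1$ holds for every $i = 1, \ldots, n$, so condition (i) holds at $j = n$. Condition (ii) holds at $j = n$ trivially, since $j \in \{0, n\}$. Therefore $j = n$ is admissible, which contradicts the maximality of $m' = n-1$. The ``consequently'' clause then follows because $m(G) \in \{0, \ldots, n\}$ by definition.

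There is no real obstacle here. The only point needing care is recognizing that condition (ii) at $j = n-1$ must be met through the strict inequality rather than through the exceptional set $\{0, n\}$, and that is exactly what the hypothesis $n \ge 2$ guarantees.
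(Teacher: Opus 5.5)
Your proof is correct and follows essentially the same route as the paper's. Both assume $m'=n-1$ and use $n\ge 2$ to force condition (ii) to hold through the strict inequality $r^{(n-1)}(G)<r^{(n)}(G)$. Both then pin down $r^{(n)}(G)=n-1$ via the trivial bound $|V\setminus\{v^{(n)}(G)\}|=n-1$, and contradict maximality by showing that $j=n$ is admissible.
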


\begin{proof}
Suppose for contradiction that $m'=n-1$. Since $n\geq2$, we have
$n-1\notin\{0,n\}$, so Definition~\ref{def:finalization-general-graph}
requires: 
\begin{enumerate}
\item $r^{(i)}(G)=i-1$ for all $i\in[n-1]$, and 
\item $r^{(n-1)}(G)<r^{(n)}(G)$. 
\end{enumerate}
From (i), $r^{(n-1)}(G)=n-2$, so (ii) gives $r^{(n)}(G)\geq n-1$.
Since $r^{(n)}(G)=|\inreach_{G}(v^{(n)}(G))|\leq|V\setminus\{v^{(n)}(G)\}|=n-1$,
we obtain $r^{(n)}(G)=n-1$. Hence $r^{(i)}(G)=i-1$ for all $i\in[n]$,
and $j=n$ satisfies the finalization conditions in~(\ref{eq:finalization-threshold})
(since $n\in\{0,n\}$), contradicting the maximality of $m'=n-1$. 
\end{proof}

\section{Transitive Tournaments }\label{sec:transitive-analysis}
We analyze Problem \ref{prob:main-inreach} for the special case of
\textit{transitive tournaments}: A graph $G^*=(V,E^*)$ is a transitive
tournament if it is a tournament and it is transitive i.e., if $(u,v)\in E^{*}$
and $(v,w)\in E^{*}$, then $(u,w)\in E^{*}$. We provide an algorithm
for this special case with correctness guarantees.

\subsection{Algorithm Framework }\label{subsec:algorithmic-framework}

We adopt an iterative approach: at each round, we query a carefully
chosen subset of vertices, update our knowledge of the graph, and
terminate once we have sufficient information to identify the top-$m$
vertices.

\textbf{Notation.} At each round $t=1,2,\ldots,T$: 
\begin{itemize}
\item $Q_{t}\in\binom{V}{k}$ denotes the queried $k$-subset. 
\item $\hat{E}_{t}:=O_{G^{*}}(Q_{t})$ denotes the edges revealed by the
oracle. 
\item $E_{t}:=E_{t-1}\cup\hat{E}_{t}$ denotes the cumulative set of revealed
edges, with $E_{0}:=\emptyset$. 
\item $G_{t}:=(V,E_{t})$ denotes the \emph{revealed graph} at round $t$. 
\end{itemize}
By construction, $\emptyset=E_{0}\subseteq E_{1}\subseteq\cdots\subseteq E_{T}\subseteq E^{*}$.
A good algorithm should ideally produce a strictly increasing chain.

The rank spectrum (Definition \ref{def:rank-spectrum}) of the revealed
graph at round $t$ is the main object of interest.

\textbf{Discovered Rankings.} At round $t$, we order the vertices
by non-decreasing in-reach in $G_{t}$, breaking ties arbitrarily:
\[
v_{t}^{(1)},v_{t}^{(2)},\ldots,v_{t}^{(n)}\quad\text{with}\quad r_{t}^{(1)}\leq r_{t}^{(2)}\leq\cdots\leq r_{t}^{(n)},
\]
where $r_{t}^{(i)}:=|\inreach_{G_{t}}(v_{t}^{(i)})|$. The sequence
$R_{t}:=(r_{t}^{(1)},\ldots,r_{t}^{(n)})$ is the \emph{discovered
rank sequence} at round $t$. We use these as notational shorthands for
Definition \ref{def:finalization-general-graph}, where 
\[
v_{t}^{(i)}:=v^{(i)}(G_{t})\quad\text{and}\quad r_{t}^{(i)}:=r^{(i)}(G_{t}).
\]

Initially, $R_{0}=(0,0,\ldots,0)$ since no edges have been revealed. 

\subsection{Basic Facts about Transitive Tournaments  }\label{subsec:basic-facts-transitive}

We review basic facts about transitive tournaments here. 
\begin{rem}[Transitive Tournaments are Acyclic]
\label{rem:transitive-acyclic} A transitive tournament contains
no directed cycles. Indeed, if $v_{1}\to v_{2}\to\cdots\to v_{\ell}\to v_{1}$
were a cycle with $\ell\geq2$, transitivity would give $(v_{1},v_{\ell})\in E^{*}$,
while the cycle requires $(v_{\ell},v_{1})\in E^{*}$, contradicting
the tournament property. In particular, every transitive tournament
is a DAG. 
\end{rem}

\begin{lem}[Transitive Tournaments are their own Transitive Closure]
\label{lem:transitive-self-closure} Let $G^{*}$ be a transitive
tournament. Then $(G^{*})^{+}=G^{*}$; that is, $u\leadsto_{G^{*}}v$
if and only if $(u,v)\in E^{*}$. 
\end{lem}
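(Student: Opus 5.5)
The plan is to prove the two directions of the equivalence $u\leadsto_{G^{*}}v\iff(u,v)\in E^{*}$ separately, for distinct $u,v$. The closure $(G^{*})^{+}$ is defined using only pairs with $u\neq v$, so this equivalence is exactly the statement $E^{+}=E^{*}$.

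The reverse direction is immediate. If $(u,v)\in E^{*}$, then the single-edge sequence $v_{0}=u$, $v_{1}=v$ is a directed path, so $u\leadsto_{G^{*}}v$.

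For the forward direction, I will take a shortest directed path $u=v_{0}\to v_{1}\to\cdots\to v_{\ell}=v$ in $G^{*}$. Since $u\neq v$, we have $\ell\geq1$. I then induct on $\ell$.
\begin{itemize}
\item If $\ell=1$, the path is itself the edge $(u,v)\in E^{*}$.
\item If $\ell\geq2$, the vertices $v_{0},v_{1},v_{2}$ are distinct, because a shortest path repeats no vertex. Transitivity applied to $(v_{0},v_{1}),(v_{1},v_{2})\in E^{*}$ gives $(v_{0},v_{2})\in E^{*}$. This yields a path from $u$ to $v$ of length $\ell-1$, contradicting minimality.
\end{itemize}
Hence every shortest path has length $1$, so $(u,v)\in E^{*}$.

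An equivalent route avoids minimality: induct on path length, shortcutting the first two edges each time. This needs the intermediate endpoints to be distinct, so that the shortcut edge is a genuine non-loop edge. The only real care point is therefore the distinctness of $v_{0}$ and $v_{2}$. Tournaments have no loops, and transitivity as stated does not on its own rule out $v_{0}=v_{2}$. Using a shortest path (or simple paths) handles this cleanly. Alternatively, Remark~\ref{rem:transitive-acyclic} shows that $G^{*}$ has no 2-cycles, so $v_{0}=v_{2}$ is impossible anyway. I expect no step to be a real obstacle; the lemma is essentially definitional.
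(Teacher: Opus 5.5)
Your proof is correct and follows essentially the same route as the paper. The paper also treats the backward direction as immediate and proves the forward direction by repeatedly applying transitivity along the path, using $(w_0,w_j),(w_j,w_{j+1})\in E^{*}\Rightarrow(w_0,w_{j+1})\in E^{*}$; your shortest-path contradiction is a repackaging of that induction, and your explicit restriction to $u\neq v$ and check that the shortcut endpoints are distinct are sound details the paper leaves implicit.
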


\begin{proof}
The backward direction is immediate. For the forward direction, suppose
$u\leadsto_{G^{*}}v$ via a path $u=w_{0}\to w_{1}\to\cdots\to w_{\ell}=v$.
Repeated application of transitivity ($(w_{0},w_{1})\in E^{*}$ and
$(w_{1},w_{2})\in E^{*}$ give $(w_{0},w_{2})\in E^{*}$; then $(w_{0},w_{2})$
and $(w_{2},w_{3})$ give $(w_{0},w_{3})$; and so on) yields $(u,v)\in E^{*}$. 
\end{proof}
\begin{cor}[In-Reach Equals In-Neighborhood]
\label{cor:inreach-equals-indeg} In a transitive tournament $G^{*}$,
\[
\inreach_{G^{*}}(v)=N_{G^{*}}^{-}(v):=\{u\in V\setminus\{v\}:(u,v)\in E^{*}\}
\]
for every vertex $v$. In particular, $|\inreach_{G^{*}}(v)|=\indeg_{G^{*}}(v)$. 
\end{cor}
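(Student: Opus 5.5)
The corollary follows by comparing the two sets element-wise and then taking cardinalities. Both sets live in $V\setminus\{v\}$ by definition (the in-neighborhood excludes $v$ because a tournament has no loops), so it suffices to show that for every $u\neq v$,
\[
u\in\inreach_{G^{*}}(v)\iff (u,v)\in E^{*}.
\]

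By the definition of in-reach in \eqref{eq:inreach-def}, $u\in\inreach_{G^{*}}(v)$ means exactly $u\neq v$ and $u\leadsto_{G^{*}}v$. Lemma~\ref{lem:transitive-self-closure} states that in a transitive tournament $u\leadsto_{G^{*}}v$ holds if and only if $(u,v)\in E^{*}$. Substituting this equivalence gives
\[
\inreach_{G^{*}}(v)=\{u\in V\setminus\{v\}:(u,v)\in E^{*}\}=N^{-}_{G^{*}}(v).
\]

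The only care needed is the degenerate case $u=v$. A zero-length path gives $v\leadsto v$ trivially, and Lemma~\ref{lem:transitive-self-closure} is meant for distinct vertices. This case is already excluded on both sides, since $v\notin\inreach_{G^{*}}(v)$ by definition and $(v,v)\notin E^{*}$.

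Taking cardinalities then gives $|\inreach_{G^{*}}(v)|=|N^{-}_{G^{*}}(v)|=\indeg_{G^{*}}(v)$, using the definition of in-degree from the Neighborhoods and Degrees paragraph. There is no real obstacle: all the substance is in Lemma~\ref{lem:transitive-self-closure}, and this corollary is a direct restatement of it.
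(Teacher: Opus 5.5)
Your proposal is correct and follows the paper's own argument: both derive the chain $u\in\inreach_{G^{*}}(v)\iff u\leadsto_{G^{*}}v\iff(u,v)\in E^{*}\iff u\in N^{-}_{G^{*}}(v)$ from Lemma~\ref{lem:transitive-self-closure} and then take cardinalities. Your explicit treatment of the $u=v$ case is a helpful extra, since that lemma as literally stated fails for $u=v$ because of the zero-length path.
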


\begin{proof}
Immediate from Lemma~\ref{lem:transitive-self-closure}: $u\in\inreach_{G^{*}}(v)$
iff $u\leadsto_{G^{*}}v$ iff $(u,v)\in E^{*}$ iff $u\in N_{G^{*}}^{-}(v)$. 
\end{proof}
\begin{lem}[Strict In-Degree Separation in Transitive Tournaments]
\label{lem:strict-indeg-separation} Let $G^{*}$ be a transitive
tournament. If $(u,v)\in E^{*}$, then $\indeg_{G^{*}}(v)\geq\indeg_{G^{*}}(u)+1$.
In particular, no two vertices share the same in-degree. 
\end{lem}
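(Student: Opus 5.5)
The plan is a direct set-inclusion argument on in-neighborhoods, using Corollary~\ref{cor:inreach-equals-indeg} to treat in-reach in $G^{*}$ as the in-neighborhood $N^{-}_{G^{*}}$. Suppose $(u,v)\in E^{*}$. We will show that $N^{-}_{G^{*}}(u)\cup\{u\}\subseteq N^{-}_{G^{*}}(v)$, and that $u\notin N^{-}_{G^{*}}(u)$, so that the union is disjoint. This gives $\indeg_{G^{*}}(v)\ge\indeg_{G^{*}}(u)+1$.

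For the inclusion, take any $w\in N^{-}_{G^{*}}(u)$, so $(w,u)\in E^{*}$. Transitivity applied to $(w,u)$ and $(u,v)$ yields $(w,v)\in E^{*}$. We need $w\neq v$ for this to be a genuine in-neighbor. This holds because $w=v$ would force both $(v,u)$ and $(u,v)$ to lie in $E^{*}$, contradicting the tournament property (equivalently, Remark~\ref{rem:transitive-acyclic}). Also $u\in N^{-}_{G^{*}}(v)$ by assumption, and $u\notin N^{-}_{G^{*}}(u)$ since there are no self-loops. This completes the counting. Alternatively, the same conclusion follows immediately from Lemma~\ref{lem:inreach-strict-decrease}, since $G^{*}$ is a DAG and $u\in\inreach_{G^{*}}(v)$, combined with Corollary~\ref{cor:inreach-equals-indeg}.

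For the ``in particular'' clause, take distinct $u,v$. Since $G^{*}$ is a tournament, exactly one of $(u,v)$ or $(v,u)$ lies in $E^{*}$. Applying the first part in the appropriate direction gives a strict inequality between their in-degrees, so they differ.

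There is no real obstacle here. The only point needing care is excluding $w=v$ in the inclusion step, which is exactly where the tournament (antisymmetry) property enters.
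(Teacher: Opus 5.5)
Your proposal is correct and matches the paper's proof: the inclusion $N^{-}_{G^{*}}(u)\subseteq N^{-}_{G^{*}}(v)$ via transitivity, strictness from $u\in N^{-}_{G^{*}}(v)\setminus N^{-}_{G^{*}}(u)$, and the tournament property for the uniqueness clause. Your explicit check that $w\neq v$ is a sound detail the paper leaves implicit, and your alternative route via Lemma~\ref{lem:inreach-strict-decrease} also works; the appeal to Corollary~\ref{cor:inreach-equals-indeg} is not needed for the main argument, which works directly with in-neighborhoods.
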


\begin{proof}
Suppose $(u,v)\in E^{*}$. For any in-neighbor $w$ of $u$, we have
$(w,u)\in E^{*}$ and $(u,v)\in E^{*}$, so transitivity gives $(w,v)\in E^{*}$.
Hence $N_{G^{*}}^{-}(u)\subseteq N_{G^{*}}^{-}(v)$. Since $u\in N_{G^{*}}^{-}(v)\setminus N_{G^{*}}^{-}(u)$,
the inclusion is strict, giving $\indeg_{G^{*}}(v)\geq\indeg_{G^{*}}(u)+1$.

For uniqueness: given any two distinct vertices $u\neq v$, the tournament
property forces $(u,v)\in E^{*}$ or $(v,u)\in E^{*}$, so the above
gives $\indeg_{G^{*}}(u)\neq\indeg_{G^{*}}(v)$. 
\end{proof}
The following is a classical result; see, e.g., Chapter~1 of \cite{moon2015topics}.

\begin{cor}[In-Degree Sequence of Transitive Tournaments]
\label{cor:indeg-sequence} A transitive tournament on $n$ vertices
has in-degree multiset $\{0,1,\ldots,n-1\}$. 
\end{cor}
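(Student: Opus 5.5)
The plan is to reduce the statement to Lemma~\ref{lem:strict-indeg-separation} by a counting (pigeonhole) argument. Let $G^{*}$ be a transitive tournament on $n$ vertices.

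First, every in-degree lies in $\{0,1,\ldots,n-1\}$. This holds because $N_{G^{*}}^{-}(v)\subseteq V\setminus\{v\}$ for every vertex $v$.

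Second, Lemma~\ref{lem:strict-indeg-separation} says that no two distinct vertices share the same in-degree. So the map $v\mapsto\indeg_{G^{*}}(v)$ is an injection from $V$, a set of size $n$, into $\{0,1,\ldots,n-1\}$, also a set of size $n$.

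Third, an injection between finite sets of equal cardinality is a bijection. Hence the in-degree multiset is exactly $\{0,1,\ldots,n-1\}$, with each value attained once.

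No step is a real obstacle, since the substantive work is already done in Lemma~\ref{lem:strict-indeg-separation}. The only care needed is the edge case $n=1$: there the set $\{0\}$ is matched trivially by the single vertex, which has in-degree $0$, so no tournament edges are needed and the argument goes through unchanged.

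A constructive proof is also available. Lemma~\ref{lem:strict-indeg-separation} orders the vertices as a chain $u_1\to u_2\to\cdots\to u_n$ with strictly increasing in-degrees. By transitivity, $N^{-}_{G^*}(u_j)=\{u_1,\ldots,u_{j-1}\}$, which gives $\indeg_{G^*}(u_j)=j-1$. The pigeonhole version above is shorter, and it is the one I would write.
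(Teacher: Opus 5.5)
Your proposal is correct and follows the paper's own proof: pairwise distinctness from Lemma~\ref{lem:strict-indeg-separation}, in-degrees confined to $\{0,1,\ldots,n-1\}$, and a pigeonhole argument to conclude. Your justification of the range via $N_{G^{*}}^{-}(v)\subseteq V\setminus\{v\}$ and your explicit injection-to-bijection step spell out what the paper states in one line.
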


\begin{proof}
By Lemma~\ref{lem:strict-indeg-separation}, the $n$ in-degrees
are pairwise distinct. Since each in-degree lies in $\{0,1,\ldots,n-1\}$
by the tournament property, the in-degrees must be exactly $\{0,1,\ldots,n-1\}$. 
\end{proof}

\subsection{Rank Spectrum of Transitive Tournaments}\label{subsec:rank-spectrum-transitive} 
The rank spectrum (Definition
\ref{def:rank-spectrum}) of a transitive tournament admits some special
properties.
\begin{prop}[Rank Spectrum of Transitive Tournaments]
\label{prop:rank-spectrum-of-transitive-tournaments} Let $G^{*}$
be a transitive tournament of $n$ nodes. We have: 
\[
r^{(i)}(G^{*})=i-1,\quad\text{for }i=1,2,\dots,n.
\]
\end{prop}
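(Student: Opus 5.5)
The plan is to combine Corollary~\ref{cor:inreach-equals-indeg} with Corollary~\ref{cor:indeg-sequence}. By Corollary~\ref{cor:inreach-equals-indeg}, for every vertex $v$ of the transitive tournament $G^{*}$ we have $|\inreach_{G^{*}}(v)|=\indeg_{G^{*}}(v)$. So the multiset of in-reach sizes, which is the multiset underlying the rank spectrum $R(G^{*})$, equals the in-degree multiset.

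Corollary~\ref{cor:indeg-sequence} identifies this multiset as exactly $\{0,1,\ldots,n-1\}$, each value occurring once. By Definition~\ref{def:rank-spectrum}, the rank spectrum is this multiset listed in non-decreasing order. Since the values are pairwise distinct, there are no ties, and the sorted sequence is forced to be $(0,1,\ldots,n-1)$. This gives $r^{(i)}(G^{*})=i-1$ for every $i\in[n]$.

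Alternatively, one could argue via Lemma~\ref{lem:dag-rank-upperbound}. Remark~\ref{rem:transitive-acyclic} shows $G^{*}$ is a DAG, which gives the upper bound $r^{(i)}(G^{*})\le i-1$. Distinctness of the values (Lemma~\ref{lem:strict-indeg-separation}) then gives $r^{(i)}(G^{*})\ge i-1$ by induction, since the sorted sequence is strictly increasing and starts at a value $\ge 0$. The first route is shorter, so I would present that one.

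There is no real obstacle. The only point needing care is noting that distinct values make the sorted order unique, so the arbitrary tie-breaking in the rank basis does not affect the conclusion.
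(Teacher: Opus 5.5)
Your primary argument is correct and is exactly the paper's proof: Corollary~\ref{cor:inreach-equals-indeg} identifies in-reach with in-degree, Corollary~\ref{cor:indeg-sequence} pins the multiset of these values to $\{0,1,\ldots,n-1\}$, and sorting gives $r^{(i)}(G^{*})=i-1$. Your alternative route, the upper bound from Lemma~\ref{lem:dag-rank-upperbound} plus distinctness of the values, is also valid but not needed.
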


\begin{proof}
By Corollary~\ref{cor:inreach-equals-indeg}, $|\inreach_{G^{*}}(v)|=\indeg_{G^{*}}(v)$
for every $v$. By Corollary~\ref{cor:indeg-sequence}, the in-degrees
are exactly $\{0,1,\ldots,n-1\}$. Sorting vertices by non-decreasing
in-reach (Definition~\ref{def:rank-spectrum}) therefore gives $r^{(i)}(G^{*})=i-1$. 
\end{proof}
\textbf{Notation.} Since the transitive tournament $G^{*}$ is fixed
throughout this section, we write 
\begin{align}
r_{*}^{(i)}:=r^{(i)}(G^{*})=i-1\quad\text{and}\quad v_{*}^{(i)}:=v^{(i)}(G^{*})\label{eq:rank-spectrum-of-transitive-notation}
\end{align}
for the rank spectrum and rank basis of $G^{*}$, respectively. 

The rank spectrum (Definition \ref{def:rank-spectrum}) allows us
to make important connections between subgraphs (e.g. our discovered
graph) of tournament graphs to the potential underlying unknown tournament
graph. 
\begin{lem}
\label{lem:subgraph-inreach-upperbound-by-transitive-tournament-degree}
Let $G^{*}=(V,E^{*})$ be a transitive tournament. Let $G'=(V,E')$
be a subgraph of $G^{*}$ where $E'\subseteq E^{*}$. Then for all
$v\in V$: 
\[
|\inreach_{G'}(v)|\leq\indeg_{G^{*}}(v).
\]
\end{lem}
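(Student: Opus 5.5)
The plan is to prove the set inclusion $\inreach_{G'}(v)\subseteq N_{G^{*}}^{-}(v)$ and then take cardinalities.

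\textbf{Step 1: reachability lifts from $G'$ to $G^{*}$.} Fix $v\in V$ and let $u\in\inreach_{G'}(v)$. By definition $u\neq v$ and $u\leadsto_{G'}v$, so there is a directed path $u=w_{0}\to w_{1}\to\cdots\to w_{\ell}=v$ with every edge in $E'$. Since $E'\subseteq E^{*}$, the same path lies in $G^{*}$. Hence $u\leadsto_{G^{*}}v$, which gives $\inreach_{G'}(v)\subseteq\inreach_{G^{*}}(v)$. This step is monotonicity of reachability under adding edges, and it needs no transitivity.

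\textbf{Step 2: identify the true in-reach.} Invoke Corollary~\ref{cor:inreach-equals-indeg}, which itself rests on Lemma~\ref{lem:transitive-self-closure}. Transitivity of $G^{*}$ collapses any path into a single edge, so $\inreach_{G^{*}}(v)=N_{G^{*}}^{-}(v)$ and $|\inreach_{G^{*}}(v)|=\indeg_{G^{*}}(v)$.

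\textbf{Step 3: conclude.} Combining the two steps,
\[
|\inreach_{G'}(v)|\leq|\inreach_{G^{*}}(v)|=\indeg_{G^{*}}(v).
\]
Since $v$ was arbitrary, the bound holds for all $v\in V$.

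There is no real obstacle. The only substantive ingredient is the identification of in-reach with in-degree in Step 2, and that is already supplied by the earlier corollary.
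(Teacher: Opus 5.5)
Your proof is correct and matches the paper's proof step for step. You use monotonicity of reachability under $E'\subseteq E^{*}$ to get $\inreach_{G'}(v)\subseteq\inreach_{G^{*}}(v)$, and then Corollary~\ref{cor:inreach-equals-indeg} to identify $|\inreach_{G^{*}}(v)|$ with $\indeg_{G^{*}}(v)$; you just write out the path argument that the paper states in one line.
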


\begin{proof}
Since $E'\subseteq E^{*}$, we have $\inreach_{G'}(v)\subseteq\inreach_{G^{*}}(v)$
for all $v\in V$. Since $G^{*}$ is a transitive tournament, in-reach
coincides with in-neighborhood by Corollary \ref{cor:inreach-equals-indeg},
hence $|\inreach_{G^{*}}(v)|=\indeg_{G^{*}}(v)$. Combining these
inequalities yields the result: 
\[
\left|\inreach_{G'}(v)\right|\leq\left|\inreach_{G^{*}}(v)\right|=\indeg_{G^{*}}(v).\qedhere
\]
\end{proof}
Another connection we can make is by applying Lemma~\ref{lem:dag-rank-upperbound}
to the rank spectrum of subgraphs of the transitive tournament. 
\begin{cor}[Rank Upper Bound for Subgraphs of Transitive Tournaments]
\label{cor:transitive-tournament-rank-upperbounds-subgraph-rank}
Let $G^{*}=(V,E^{*})$ be a transitive tournament on $n$ vertices.
Let $G'=(V,E')$ be a subgraph of $G^{*}$ where $E'\subseteq E^{*}$.
Then 
\[
r^{(i)}(G')\leq r_{*}^{(i)}=i-1,\quad\text{for all }i\in[n].
\]
\end{cor}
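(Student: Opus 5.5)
The plan is to derive this corollary from Lemma~\ref{lem:dag-rank-upperbound} together with Proposition~\ref{prop:rank-spectrum-of-transitive-tournaments}.

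\emph{Step 1: $G'$ is a DAG.} Any directed cycle in $G'$ uses only edges of $E'\subseteq E^{*}$, so it would also be a directed cycle in $G^{*}$. By Remark~\ref{rem:transitive-acyclic}, $G^{*}$ has no directed cycles, so $G'$ is a DAG.

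\emph{Step 2: apply the DAG bound.} Since $G'$ is a DAG on the same $n$ vertices, Lemma~\ref{lem:dag-rank-upperbound} gives $r^{(i)}(G')\leq i-1$ for every $i\in[n]$. Proposition~\ref{prop:rank-spectrum-of-transitive-tournaments} identifies $i-1$ with $r_{*}^{(i)}$ (see \eqref{eq:rank-spectrum-of-transitive-notation}), which yields the chain $r^{(i)}(G')\leq i-1=r_{*}^{(i)}$.

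\emph{Alternative route.} One could instead argue through Lemma~\ref{lem:subgraph-inreach-upperbound-by-transitive-tournament-degree}. That lemma gives the pointwise bound $|\inreach_{G'}(v)|\leq \indeg_{G^{*}}(v)$. A standard order-statistics argument then shows that the $i$-th smallest entry of a pointwise-dominated sequence is at most the $i$-th smallest entry of the dominating one. Combined with Corollary~\ref{cor:indeg-sequence}, this gives the same inequality.

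There is no real obstacle here. The only point that needs care is that the rank spectrum does not depend on tie-breaking (Definition~\ref{def:rank-spectrum}), so the inequality holds for any choice of rank basis. I would present the first route because it is shorter.
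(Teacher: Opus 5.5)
Your first route is correct and is exactly the paper's proof: $G'$ inherits acyclicity from $G^{*}$ (Remark~\ref{rem:transitive-acyclic}), Lemma~\ref{lem:dag-rank-upperbound} gives $r^{(i)}(G')\leq i-1$, and Proposition~\ref{prop:rank-spectrum-of-transitive-tournaments} identifies $i-1$ with $r_{*}^{(i)}$. Your alternative route is also valid: pointwise domination by $\indeg_{G^{*}}$, combined with the order-statistics comparison, does give the same bound, but the paper does not need it.
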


\begin{proof}
By Remark~\ref{rem:transitive-acyclic}, $G^{*}$ is acyclic. Since
$E'\subseteq E^{*}$, the subgraph $G'$ is also acyclic. The bound
$r^{(i)}(G')\leq i-1$ then follows from Lemma~\ref{lem:dag-rank-upperbound},
and $r_{*}^{(i)}=i-1$ is Proposition~\ref{prop:rank-spectrum-of-transitive-tournaments}. 
\end{proof}

\subsection{Eliminations and Finalization }\label{subsec:transitive-eliminations-and-finalization}
\begin{rem}[A Filtering Criterion]
\label{rem:elimination-transitive}Let $G^{*}=(V,E^{*})$ be a transitive
tournament. Let $G'=(V,E')$ be a subgraph of $G^{*}$ where $E'\subseteq E^{*}$.
If a vertex $v\in V$ satisfies $|\inreach_{G'}(v)|\geq m$, then
$v$ is \textit{not} in the final top $m$ vertices of $G^{*}$ i.e.,
$v\notin\{v_{*}^{(1)},\ldots,v_{*}^{(m)}\}$.
\end{rem}

\begin{proof}
In a transitive tournament on $n$ vertices, the true in-degrees are
exactly $\{0,1,\ldots,n-1\}$, so $r_{*}^{(m)}=m-1$ (Proposition
\ref{prop:rank-spectrum-of-transitive-tournaments}). By Corollary
\ref{cor:transitive-tournament-rank-upperbounds-subgraph-rank}, $|\inreach_{G'}(v)|\leq\indeg_{G^{*}}(v)$.
Thus $|\inreach_{G'}(v)|\geq m$ implies $\indeg_{G^{*}}(v)\geq m>r_{*}^{(m)}$,
so $v$ cannot be among the top $m$ vertices. 
\end{proof}
We say that a vertex $v$ in a subgraph $G'$ of an underlying tournament
$G^{*}$ is \emph{finalized} when we have sufficient information to
confirm $v$'s membership in the top-$m$ set of $G^{*}$. The following
proposition characterizes when this occurs.
\begin{prop}[Top-$j$ Finalization Criterion in Transitive Tournaments]
\label{prop:top-j-finalization-criterion-in-transitive-tournaments}
Let $G^{*}=(V,E^{*})$ be a transitive tournament. Let $G'=(V,E')$
be a subgraph of $G^{*}$ where $E'\subseteq E^{*}$. Suppose there
exists $j\in[n-1]$ such that $r^{(1)}(G')<r^{(2)}(G')<\cdots<r^{(j)}(G')<r^{(j+1)}(G')$.
Then the top $j$ rank spectrum of $G'$ aligns with $G^{*}$ i.e., 
\begin{enumerate}
\item $r^{(i)}(G')=i-1$ for all $i\leq j$, and 
\item $\left(v^{(1)}(G'),v^{(2)}(G'),\ldots,v^{(j)}(G')\right)=\left(v_{*}^{(1)},v_{*}^{(2)},\ldots,v_{*}^{(j)}\right)$. 
\end{enumerate}
\end{prop}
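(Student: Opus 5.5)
Part (i) follows by a squeeze between two bounds. From below, the strict chain $r^{(1)}(G')<\cdots<r^{(j)}(G')$ consists of non-negative integers, so induction gives $r^{(i)}(G')\ge i-1$ for $i\le j$. From above, Corollary~\ref{cor:transitive-tournament-rank-upperbounds-subgraph-rank} gives $r^{(i)}(G')\le i-1$. Hence $r^{(i)}(G')=i-1$ for all $i\le j$.

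The strict inequality $r^{(j)}(G')<r^{(j+1)}(G')$ also gives $r^{(j+1)}(G')\ge j$. Since the spectrum is non-decreasing, every vertex outside $\{v^{(1)}(G'),\ldots,v^{(j)}(G')\}$ has discovered in-reach at least $j$. This is the hypothesis used in the next step.

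For part (ii), the plan is to show that $\indeg_{G^{*}}(v^{(i)}(G'))=i-1$ for each $i\le j$. Then Corollary~\ref{cor:indeg-sequence} identifies $v^{(i)}(G')$ with $v_*^{(i)}$, since in-degrees in $G^{*}$ are distinct.

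\begin{itemize}
\item \textbf{Upper bound.} Lemma~\ref{lem:subgraph-inreach-upperbound-by-transitive-tournament-degree} gives $\indeg_{G^{*}}(v^{(i)}(G'))\ge |\inreach_{G'}(v^{(i)}(G'))| = i-1$, which is a lower bound on the true degree. To cap it, I would argue by contradiction. Suppose $\indeg_{G^{*}}(v^{(i)}(G'))\ge i$. Then some true in-neighbor $w$ of $v^{(i)}(G')$ lies outside $\inreach_{G'}(v^{(i)}(G'))$.
\item \textbf{Using the structure of $G'$.} By Lemma~\ref{lem:exact-inreach-finalized} applied with threshold at least $j$ (conditions (i)–(ii) of Definition~\ref{def:finalization-general-graph} hold at $j$, so $m(G')\ge j$), we have $\inreach_{G'}(v^{(i)}(G'))=\{v^{(1)}(G'),\ldots,v^{(i-1)}(G')\}$. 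Therefore either $w=v^{(\ell)}(G')$ with $i<\ell\le j$, or $w$ lies among the vertices outside the prefix.
\item \textbf{Outside the prefix.} Here I would use Lemma~\ref{lem:finalized-reach-candidates}: $G'$ is a DAG, so $v^{(i)}(G')\leadsto_{G'}w$. Hence $(v^{(i)}(G'),w)\in E^*$, contradicting $(w,v^{(i)}(G'))\in E^*$ by the tournament property.
\item \textbf{Inside the prefix.} If $w=v^{(\ell)}(G')$ with $i<\ell\le j$, Corollary~\ref{cor:mutual-reach-finalized} gives $v^{(i)}(G')\leadsto_{G'}v^{(\ell)}(G')$, and the same contradiction follows.
\end{itemize}

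Thus $\indeg_{G^{*}}(v^{(i)}(G'))=i-1$, and Corollary~\ref{cor:indeg-sequence} forces $v^{(i)}(G')=v_*^{(i)}$.

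The step I expect to be the main obstacle is making sure these finalization lemmas apply when $j$ need not equal $m(G')$. It suffices that $m(G')\ge j$. Then $\mathrm{TOP}(G')\supseteq\{v^{(1)}(G'),\ldots,v^{(j)}(G')\}$, and every vertex outside the prefix is either in $\mathrm{TOP}(G')$ at a later position, where Corollary~\ref{cor:mutual-reach-finalized} applies, or in $\mathrm{CAND}(G')$, where Lemma~\ref{lem:finalized-reach-candidates} applies. Either way $v^{(i)}(G')$ reaches every vertex outside the prefix, which is all the argument needs.
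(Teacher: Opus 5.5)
Your proof is correct. Part (i) is the same squeeze the paper uses: its induction is exactly your lower bound from strictness combined with $r^{(i)}(G')\le i-1$ from Corollary~\ref{cor:transitive-tournament-rank-upperbounds-subgraph-rank}.

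Part (ii) takes a genuinely different route.
\begin{itemize}
\item The paper argues by elimination on the complement. Every vertex outside $\{v^{(1)}(G'),\ldots,v^{(j)}(G')\}$ has discovered in-reach at least $j$, hence true in-degree at least $j$, so it cannot be any of $v_*^{(1)},\ldots,v_*^{(j)}$. Counting then gives set equality. The internal order is handled by ``matching ranks,'' which is made rigorous by reapplying the set equality to each shorter prefix $i\le j$; these prefixes also satisfy the strict-chain hypothesis.
\item You instead show that each prefix vertex is resolved in $G'$. Its true in-degree therefore equals its discovered in-reach $i-1$, and you identify it through distinctness of in-degrees.
\end{itemize}

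The paper's route is lighter on machinery for $G'$. Beyond the one-sided bound $|\inreach_{G'}(v)|\le\indeg_{G^{*}}(v)$, it needs no reachability information \emph{out of} prefix vertices. In particular it avoids the DAG-specific Lemma~\ref{lem:finalized-reach-candidates}.

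Your route needs the extra check $m(G')\ge j$ and more lemmas. In exchange, it identifies $v^{(i)}(G')=v_*^{(i)}$ elementwise with no nested-prefix step. It also exposes the mechanism that later drives the correctness of \tournamentsort{}: a resolved vertex's discovered in-reach is its true one. Your bullets in fact re-derive Proposition~\ref{prop:known-relationship-finalization}. You could cite it directly, since it rests only on results from Section~\ref{sec:graph-prelims}.

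Two small slips:
\begin{itemize}
\item The bullet labelled ``Upper bound'' actually proves the lower bound.
\item The ``Outside the prefix'' bullet, as first written, applies Lemma~\ref{lem:finalized-reach-candidates} to vertices that may still lie in $\mathrm{TOP}(G')$ at positions $j<\ell\le m(G')$. Your closing paragraph correctly repairs this with Corollary~\ref{cor:mutual-reach-finalized}.
\end{itemize}
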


\begin{proof}
We prove (a) by induction. For the base case, by Corollary \ref{cor:transitive-tournament-rank-upperbounds-subgraph-rank},
$r^{(1)}(G')\leq r_{*}^{(1)}=0$. Furthermore, $r^{(1)}(G')=|\inreach_{G'}(v^{(1)}(G'))|\geq0$
implies that $r^{(1)}(G')=0$. For the inductive step, assuming $r^{(i-1)}(G')=i-2$,
the strict inequality $r^{(i)}(G')>r^{(i-1)}(G')=i-2$ combined with
$r^{(i)}(G')\leq r_{*}^{(i)}=i-1$ from Corollary \ref{cor:transitive-tournament-rank-upperbounds-subgraph-rank}
forces $r^{(i)}(G')=i-1$.

For (b), since $r^{(j+1)}(G')>r^{(j)}(G')=j-1$ from (a), we have
$r^{(j+1)}(G')\geq j$. All vertices $v^{(j+1)}(G'),\ldots,v^{(n)}(G')$
have discovered in-reach at least $r^{(j+1)}(G')\geq j$. Hence, vertices
$v^{(j+1)}(G'),\ldots,v^{(n)}(G')$ cannot be among the top $j$ in
$G^{*}$ following Proposition \ref{prop:rank-spectrum-of-transitive-tournaments}.
This establishes the set equality. The ordering within the set follows
by matching ranks. 
\end{proof}
Proposition \ref{prop:top-j-finalization-criterion-in-transitive-tournaments}
provides the key bridge for transitive tournaments: whenever a
strictly increasing prefix appears in the discovered rank spectrum,
that prefix is correct with respect to the true tournament. Combined
with the finalization threshold (Definition~\ref{def:finalization-general-graph}),
which captures the maximal extent of this crystallization, this
drives both the algorithm design and its termination analysis.

\subsection{A Greedy Algorithm for Ranking Transitive Tournaments }\label{subsec:greedy-schedule}

\begin{algorithm}[tb]
\caption{Greedy Tournament Sort (Transitive Case)}
\label{alg:transitive-tournament-sort}
\begin{algorithmic}[1]
\REQUIRE Vertex set $V$ with $|V|=n$, oracle $O_{G^{*}}$ with query size $k$, target count $m\leq n$
\ENSURE Top-$m$ vertices $(v_{*}^{(1)},\ldots,v_{*}^{(m)})$ in ascending in-reach order
\STATE $E_{0}\leftarrow\emptyset$,\; $G_{0}\leftarrow(V,E_{0})$,\; $t\leftarrow 0$
\WHILE{true}
    \STATE Compute rank spectrum $R_t = (r_{t}^{(1)},\ldots,r_{t}^{(n)})$ of $G_t$ per Definition~\ref{def:rank-spectrum}
    \STATE Compute rank basis $(v_{t}^{(1)},\ldots,v_{t}^{(n)})$ of $G_t$ per Definition~\ref{def:rank-spectrum}
    \STATE Compute finalization threshold $m_{t}:=m(G_t)$ per Definition~\ref{def:finalization-general-graph}
    \IF{$m_t \ge m$} 
        \STATE \COMMENT{termination condition}
        \STATE \textbf{break}
    \ENDIF

    \STATE $k'\leftarrow\min(k,\;n-m_{t})$ \COMMENT{$n - m_t \geq 2$ by Lemma~\ref{lem:no-penultimate}}
    \STATE $S_{t}\leftarrow\{v_{t}^{(m_{t}+1)},\ldots,v_{t}^{(m_{t}+k')}\}$ \COMMENT{$k'$ candidates with smallest in-reach}
    \IF{$k' < k$}
        \STATE Pad: $Q_t \leftarrow S_t \cup P_t$, where $P_t \subseteq \mathrm{TOP}(G_t)$ with $|P_t| = k - k'$
    \ELSE
        \STATE $Q_t \leftarrow S_t$
    \ENDIF
    \STATE Query oracle: $\hat{E}_{t}\leftarrow O_{G^{*}}(Q_{t})$
    \STATE Update: $E_{t+1}\leftarrow E_{t}\cup\hat{E}_{t}$,\; $G_{t+1}\leftarrow(V,E_{t+1})$
    \STATE $t \leftarrow t + 1$
\ENDWHILE
\STATE \textbf{return} $(v_{t}^{(1)},\ldots,v_{t}^{(m)})$
\end{algorithmic}
\end{algorithm}

\begin{rem}
\label{rem:algorithm-details} When $|\mathrm{CAND}(G_{t})|<k$, the
padding vertices in $P_{t}\subseteq\mathrm{TOP}(G_{t})$ are already
finalized; edges revealed involving $P_{t}$ are redundant by Corollary~\ref{cor:mutual-reach-finalized}
and Lemma~\ref{lem:finalized-reach-candidates}, so padding preserves
correctness while satisfying the oracle's requirement that $|Q_{t}|=k$. 
\end{rem}

We now present Algorithm \ref{alg:transitive-tournament-sort} that
iteratively queries carefully and adaptively chosen sets of $k$ vertices
until the top-$m$ vertices are finalized. The key insight is to always
query candidates with the smallest discovered in-reach to guarantee
that we always add an edge to make progress, where we terminate when
there is enough information to finalize all top candidates. The correctness
proof is presented in Theorem \ref{thm:correctness-transitive}.

To establish correctness, we show that Algorithm \ref{alg:transitive-tournament-sort}
adds at least one new edge every round (for at most $\binom{n}{2}$
rounds), guaranteeing that we do not get stuck and that we terminate
eventually when we have enough information.
\begin{thm}[Termination of Algorithm \ref{alg:transitive-tournament-sort}]
\label{thm:termination-transitive} Algorithm \ref{alg:transitive-tournament-sort}
terminates in at most $\binom{n}{2}$ rounds.
\end{thm}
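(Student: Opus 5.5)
The plan is to show that every non-terminating round reveals at least one edge not already in $E_t$. Since $E_0=\emptyset\subseteq E_1\subseteq\cdots\subseteq E^{*}$ and $|E^{*}|=\binom{n}{2}$, the number of non-terminating rounds is then at most $\binom{n}{2}$: strictly increasing chains of subsets of $E^{*}$ have at most $\binom{n}{2}$ strict steps. Once $E_t=E^{*}$, we have $G_t=G^{*}$. By Proposition~\ref{prop:rank-spectrum-of-transitive-tournaments}, $r_t^{(i)}=i-1$ for all $i$, so $m_t=n\ge m$ and the loop breaks. Hence the run lasts at most $\binom{n}{2}$ rounds.

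For the progress claim, fix a round $t$ with $m_t<m\le n$. By Lemma~\ref{lem:no-penultimate}, $m_t\le n-2$, so $k'=\min(k,n-m_t)\ge 2$ and $S_t$ contains both $v_t^{(m_t+1)}$ and $v_t^{(m_t+2)}$. The revealed graph $G_t$ is a subgraph of the transitive tournament $G^{*}$, so it is a DAG by Remark~\ref{rem:transitive-acyclic}.

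The key step is to show that these two vertices are tied, $r_t^{(m_t+1)}=r_t^{(m_t+2)}$. If $m_t>0$, this is exactly Lemma~\ref{lem:tied-candidates-after-finalization}. The case $m_t=0$ is the main obstacle, because that lemma assumes $m'>0$, so I would handle it directly. If $r_t^{(1)}<r_t^{(2)}$, then Lemma~\ref{lem:dag-rank-upperbound} gives $r_t^{(1)}\le 0$, hence $r_t^{(1)}=0$. Then $j=1$ satisfies both conditions of Definition~\ref{def:finalization-general-graph}, contradicting the maximality of $m_t=0$. So the two vertices are tied in this case as well.

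With the tie established, Lemma~\ref{lem:tied-candidates-unknown-edge} shows that neither $(v_t^{(m_t+1)},v_t^{(m_t+2)})$ nor its reverse lies in $E_t$. Because $G^{*}$ is a tournament, exactly one of these two edges is in $E^{*}$. That edge is returned by $O_{G^{*}}(Q_t)$, since both endpoints lie in $S_t\subseteq Q_t$; padding with $P_t$ only adds vertices to the query. Therefore $E_{t+1}\supsetneq E_t$, which is the progress claim.
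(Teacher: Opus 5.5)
Your proposal is correct and follows the paper's proof essentially step for step: Lemma~\ref{lem:no-penultimate} gives $m_t\le n-2$, Lemma~\ref{lem:tied-candidates-after-finalization} (or a direct argument when $m_t=0$) ties positions $m_t+1$ and $m_t+2$, and Lemma~\ref{lem:tied-candidates-unknown-edge} then guarantees a new edge every non-terminal round. Your $m_t=0$ case is slightly more careful than the paper's: the paper simply asserts that $r_t^{(1)}<r_t^{(2)}$ must fail, whereas you note via Lemma~\ref{lem:dag-rank-upperbound} that condition (i) automatically holds at $j=1$, so the failure must come from (ii).
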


\begin{proof}
At a non-terminal round $t$, we have $m_{t}<m\leq n$. By Lemma~\ref{lem:no-penultimate},
$m_{t}\neq n-1$, so $m_{t}\leq n-2$ and thus $|\mathrm{CAND}(G_{t})|=n-m_{t}\geq2$.
We claim $r_{t}^{(m_{t}+1)}=r_{t}^{(m_{t}+2)}$. If $m_{t}>0$, Lemma
\ref{lem:tied-candidates-after-finalization} implies that $r_{t}^{(m_{t}+1)}=r_{t}^{(m_{t}+2)}$.
If $m_{t}=0$, then the finalization condition $r_{t}^{(1)}<r_{t}^{(2)}$
fails, so $r_{t}^{(1)}=r_{t}^{(2)}$ (since ranks are non-decreasing).
In any case, $r_{t}^{(m_{t}+1)}=r_{t}^{(m_{t}+2)}$ implies that $v_{t}^{(m_{t}+1)}$
and $v_{t}^{(m_{t}+2)}$ are tied candidates i.e., $|\inreach_{G_{t}}(v_{t}^{(m_{t}+1)})|=|\inreach_{G_{t}}(v_{t}^{(m_{t}+2)})|$.
By Lemma~\ref{lem:tied-candidates-unknown-edge}, the edge between
$v_{t}^{(m_{t}+1)}$ and $v_{t}^{(m_{t}+2)}$ is not in $E_{t}$.
Since both vertices are in $Q_{t}$, querying the oracle reveals this
edge, ensuring $|E_{t+1}|>|E_{t}|$.
\end{proof}
The algorithm terminates and outputs something. We now prove that its
output correctly solves Problem \ref{prob:main-inreach}.
\begin{thm}[Correctness of Algorithm \ref{alg:transitive-tournament-sort}]
\label{thm:correctness-transitive} Algorithm \ref{alg:transitive-tournament-sort}
correctly identifies the top-$m$ vertices of a transitive tournament
graph $G^{*}$. 
\end{thm}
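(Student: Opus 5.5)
By Theorem~\ref{thm:termination-transitive}, Algorithm~\ref{alg:transitive-tournament-sort} exits the loop at some finite round $T$. It exits only through the termination condition, so $m_T := m(G_T) \ge m$. I will translate this certificate on $G_T$ into the two requirements of Problem~\ref{prob:main-inreach}, namely \eqref{eq:sol-set-satisfies-internal-ordering} and \eqref{eq:rank-dominance-of-sol-set}. I will do this by invoking Proposition~\ref{prop:top-j-finalization-criterion-in-transitive-tournaments} with $G' = G_T$. Its hypothesis holds because $E_T \subseteq E^*$ by construction.

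\textbf{Case $m_T \le n-1$.} I take $j := m_T$, which lies in $[n-1]$ because $m_T \ge m \ge 1$. I need the strictly increasing chain $r_T^{(1)} < \cdots < r_T^{(j)} < r_T^{(j+1)}$. Condition (i) of Definition~\ref{def:finalization-general-graph} gives $r_T^{(i)} = i-1$ for $i \le j$, so the first $j$ terms are strictly increasing. Condition (ii) gives $r_T^{(j)} < r_T^{(j+1)}$, since $j \notin \{0,n\}$. The proposition then yields $(v_T^{(1)},\ldots,v_T^{(j)}) = (v_*^{(1)},\ldots,v_*^{(j)})$. Since $m \le j$, the returned tuple $(v_T^{(1)},\ldots,v_T^{(m)})$ equals $(v_*^{(1)},\ldots,v_*^{(m)})$.

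\textbf{Case $m_T = n$.} Here condition (i) gives $r_T^{(i)} = i-1$ for every $i$. Let $j := n-1$. If $n-1 \ge 1$, then $r_T^{(1)} < \cdots < r_T^{(n)}$ is strictly increasing, and the proposition with $j = n-1$ identifies the first $n-1$ vertices. The last vertex is then forced, because both rank bases are permutations of $V$. The degenerate case $n=1$ is trivial.

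It remains to convert the identification with the true rank basis into the inequalities. By Proposition~\ref{prop:rank-spectrum-of-transitive-tournaments} and Corollary~\ref{cor:inreach-equals-indeg}, $|\inreach_{G^*}(v_*^{(i)})| = i-1$. So the output $v^{(i)} = v_*^{(i)}$ has true in-reach $i-1$ for $i \le m$. This gives \eqref{eq:sol-set-satisfies-internal-ordering} strictly. Every $u \notin S$ is some $v_*^{(\ell)}$ with $\ell > m$, so its in-reach is at least $m > m-1$, which gives \eqref{eq:rank-dominance-of-sol-set}.

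The main obstacle is the edge-case bookkeeping: checking that the proposition's hypothesis $j \in [n-1]$ together with a strict step at $j+1$ is available in both exit cases, which is why $m_T = n$ is treated separately. The only other point is that the tie-breaking in the rank basis is irrelevant, since all ranks in the certified prefix are distinct.
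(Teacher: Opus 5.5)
Your proof is correct and follows the paper's argument. Like the paper, you apply Proposition~\ref{prop:top-j-finalization-criterion-in-transitive-tournaments} at the terminal round to get $v_T^{(i)} = v_*^{(i)}$, then read off both conditions of Problem~\ref{prob:main-inreach} from $|\inreach_{G^*}(v_*^{(i)})| = i-1$. Your separate handling of $m_T = n$ (via $j = n-1$ and the permutation argument) actually closes a small gap in the paper, which applies the proposition with $j = m_T$ even though its hypothesis requires $j \in [n-1]$ and a strict step $r^{(j)} < r^{(j+1)}$.
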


\begin{proof}
At termination round $T$, $m_{T}\geq m$. We verify that the output
$(v_{T}^{(1)},\ldots,v_{T}^{(m)})$ satisfies the two conditions of
Problem~\ref{prob:main-inreach}. Since $m_{T}\geq m\geq1$, the
finalization threshold (Definition~\ref{def:finalization-general-graph})
provides: (i) $r_{T}^{(i)}=i-1$ for all $i\leq m_{T}$, and (ii)
$m_{T}=n$ or $r_{T}^{(m_{T})}<r_{T}^{(m_{T}+1)}$. Applying Proposition~\ref{prop:top-j-finalization-criterion-in-transitive-tournaments}
with $j=m_{T}$ (using (i), (ii), and the fact that $G_{T}$ is a
subgraph of the transitive tournament $G^{*}$) yields: 
\begin{equation}
v_{T}^{(i)}=v_{*}^{(i)}\quad\text{for all }i\leq m_{T}.\label{eq:alignment}
\end{equation}
In particular, the first $m\leq m_{T}$ output vertices satisfy $v_{T}^{(i)}=v_{*}^{(i)}$
for $i=1,\ldots,m$. By Corollary~\ref{cor:inreach-equals-indeg},
in-reach coincides with in-degree in a transitive tournament, and
by Proposition~\ref{prop:rank-spectrum-of-transitive-tournaments},
$|\inreach_{G^{*}}(v_{*}^{(i)})|=i-1$. Therefore:

\textbf{\textit{Internal ordering}}\textit{~(\ref{eq:sol-set-satisfies-internal-ordering}).}
For $i=1,\ldots,m$: 
\[
|\inreach_{G^{*}}(v_{T}^{(i)})|=|\inreach_{G^{*}}(v_{*}^{(i)})|=i-1,
\]
which is strictly increasing in $i$.

\noindent\textbf{\textit{Rank dominance}}\textit{~(\ref{eq:rank-dominance-of-sol-set}).}
Let $S=\{v_{T}^{(1)},\ldots,v_{T}^{(m)}\}$. The maximum in-reach
within $S$ is $|\inreach_{G^{*}}(v_{T}^{(m)})|=m-1$. For any $u\notin S$,
we have $u=v_{*}^{(j)}$ for some $j>m$ (by (\ref{eq:alignment})
and the uniqueness of rank basis in transitive tournaments), so $|\inreach_{G^{*}}(u)|=j-1\geq m>m-1$.
Hence 
\[
\max_{i\leq m}|\inreach_{G^{*}}(v_{T}^{(i)})|=m-1<m\leq\min_{u\notin S}|\inreach_{G^{*}}(u)|.\qedhere
\]
\end{proof}

\section{Non-Transitive Tournaments}\label{sec:Non-Transitive-Tournaments}

We now generalize Algorithm~\ref{alg:transitive-tournament-sort}
to arbitrary tournaments. When $G^{*}$ is not transitive, two difficulties
arise.
\begin{enumerate}
\item \textbf{Finalization breaks down.} The finalization threshold (Definition~\ref{def:finalization-general-graph})
requires the rank spectrum to form a strictly increasing prefix $0,1,2,\ldots$
-- the signature of a transitive tournament (Proposition~\ref{prop:rank-spectrum-of-transitive-tournaments}).
In a non-transitive tournament, vertices sharing an SCC have equal
in-reach (Lemma~ \ref{lem:scc-equal-inreach} and \ref{lem:inreach-within-scc}),
so ties are intrinsic and this prefix pattern need not appear in~$G_{t}$
itself.
\item \textbf{Scheduling breaks down.} Algorithm~\ref{alg:transitive-tournament-sort}
guarantees progress by querying tied candidates, which must have an
unknown edge between them (Lemma~\ref{lem:tied-candidates-unknown-edge}).
In the general case, tied vertices may already share a known edge
via a cycle, so the same scheduling rule can stall. 
\end{enumerate}
Both difficulties are resolved by a single observation: the condensation
$\condense G$ of any directed graph is a DAG (Lemma~\ref{lem:condensation-acyclic}),
and the condensation of any tournament is a transitive tournament
(Proposition~\ref{prop:condensation-transitive}). This lets us reuse
the finalization engine from Section~\ref{sec:graph-prelims}
without modification by applying it to~$\condense{G_{t}}$ rather
than to~$G_{t}$ directly.

The tournament bridge, however, must be rebuilt. In-reach in a subgraph
$G_{t}$ no longer directly bounds in-degree in~$G^{*}$, so we instead
relate the two condensations $\condense{G_{t}}$ and $\condense{G^{*}}$
through the natural projection map induced by SCC refinement (Lemma~\ref{lem:scc-refinement}).
This projection preserves reachability (Lemma~\ref{lem:projection-path-preservation})
and yields a new bridge lemma (Lemma~\ref{lem:discovered-respects-true-rank})
that transfers rank information from the revealed subgraph to the
true tournament via the condensation. Adding edges can only merge
SCCs and never split them, so the projection is monotone and the algorithm
makes steady progress.

\begin{algorithm}[tb]
\caption{Tournament Sort General}
\label{alg:non-transitive-tournament-graph-sort}
\begin{algorithmic}[1]
    \REQUIRE Vertex set $V$ with $|V|=n$, oracle $O_{G^{*}}$ with
    query size $k$, target count $m\leq n$ \ENSURE A sequence of top-$m$
    vertices $(v_{*}^{(1)},\ldots,v_{*}^{(m)})$ in non-decreasing in-reach order.
    \STATE $E_{0}\leftarrow\emptyset$,\; $G_{0}\leftarrow(V,E_{0})$,\; $t\leftarrow0$
    \WHILE{true}
        \STATE Compute the condensation graph $\condense{G_{t}}=\left(\condense{V_{t}},\condense{E_{t}}\right)$
        with $n_{t} := \left|\condense{V_{t}}\right|$ per Definition~\ref{def:condensation-graph}.
        \STATE Compute rank spectrum $\widetilde{R_{t}} := (r^{(1)}\left(\condense{G_{t}}\right),\ldots,r^{(n_{t})}\left(\condense{G_{t}}\right))$
        of $\condense{G_{t}}$ per Definition~\ref{def:rank-spectrum}
        \STATE Compute the corresponding rank basis $(C_{t}^{(1)},\ldots,C_{t}^{(n_{t})})$
        of $\condense{G_{t}}$ given $\widetilde{R_{t}}$ per Definition~\ref{def:rank-spectrum}
        where $r^{(i)}\left(\condense{G_{t}}\right):=\inreach_{\condense{G_{t}}}\left(C_{t}^{(i)}\right)$
        for $C_{t}^{(i)}\in\condense{V_{t}}$.
        \STATE Compute finalization threshold $\widetilde{m_{t}}:=m(\condense{G_{t}})$
        per (\ref{eq:finalization-threshold}).
        \STATE Compute finalized SCCs $\widetilde{\mathrm{TOP}_{t}}\leftarrow\left\{ C_{t}^{(1)},\dots,C_{t}^{(\widetilde{m_{t}})}\right\} =\mathrm{TOP}\left(\condense{G_{t}}\right)$
        per (\ref{eq:top-set}).
        \STATE $\mathrm{TOP}_{t}\leftarrow\bigcup_{i=1}^{\widetilde{m_{t}}}C_{t}^{(i)}$
        \COMMENT{lift finalized nodes to graph}
        \IF{$\left|\mathrm{TOP}_{t}\right|\ge m$}
            \STATE \textbf{break}
            \COMMENT{we found enough top nodes}
        \ENDIF
        \STATE $k'\leftarrow\min(k,\;n_{t}-\widetilde{m_{t}})$
        \STATE $\widetilde{\mathrm{CAND}_{t}}\leftarrow\{C_{t}^{(\widetilde{m_{t}}+1)},\ldots,C_{t}^{(\widetilde{m_{t}}+k')}\}$
        \COMMENT{$k'$ candidates with smallest in-reach}
        \IF{$k'<k$}
            \STATE Pad: $\widetilde{Q_{t}}\leftarrow\widetilde{\mathrm{CAND}_{t}}\cup P_{t}$,
            where $P_{t}\subseteq\mathrm{TOP}(\condense{G_{t}})$ with $|P_{t}|=k-k'$
        \ELSE
            \STATE $\widetilde{Q_{t}}\leftarrow\widetilde{\mathrm{CAND}_{t}}$
        \ENDIF
        \STATE Pick representative: $Q_{t}\leftarrow\left\{ \mathrm{rep}(C):C\in\widetilde{Q_{t}}\right\}$
        where $\mathrm{rep}$ picks an element from a set.
        \STATE Query oracle: $\hat{E}_{t}\leftarrow O_{G^{*}}(Q_{t})$
        \STATE Update: $E_{t+1}\leftarrow E_{t}\cup\hat{E}_{t}$,\; $G_{t+1}\leftarrow(V,E_{t+1})$
        \STATE $t\leftarrow t+1$
    \ENDWHILE
    \STATE Let $j^{*}\leftarrow\min\{j\leq\widetilde{m_{t}}:\sum_{i=1}^{j}|C_{t}^{(i)}|\geq m\}$
    \STATE Let $S_{j^{*}}$ be any subset of $C_{t}^{(j^{*})}$ with $|S_{j^{*}}|=m-\sum_{i=1}^{j^{*}-1}|C_{t}^{(i)}|$
    \STATE \textbf{return} the solution sequence $S:=(v^{(1)},\ldots,v^{(m)})$
    obtained by concatenating $C_{t}^{(1)},C_{t}^{(2)},\ldots,C_{t}^{(j^{*}-1)},S_{j^{*}}$
    in this order.
\end{algorithmic}
\end{algorithm}

Algorithm~\ref{alg:non-transitive-tournament-graph-sort} implements
this strategy. The remainder of the section develops the necessary
machinery: Section~\ref{subsec:scc-refinement-and-projection} establishes
SCC refinement and the projection map, Section~\ref{subsec:rank-spectrum-under-condensation}
connects the rank spectra of~$G_{t}$ and~$\condense{G_{t}}$ to
that of~$G^{*}$, and Section~\ref{subsec:general-termination-and-correctness}
proves termination and correctness.

\begin{rem}[Equivalence for Transitive Case]
When the underlying tournament $G^{*}$ is transitive, Algorithm
\ref{alg:non-transitive-tournament-graph-sort} reduces to Algorithm
\ref{alg:transitive-tournament-sort}. Specifically:
\begin{enumerate}
\item All SCCs are singletons: $[u]_{G_{t}}=\{u\}$ for all $u\in V$ and
round $t$. 
\item $\condense{G_{t}}\cong G_{t}$ canonically. 
\item In-reach in $\condense{G_{t}}$ equals in-reach in $G_{t}$. 
\end{enumerate}
\end{rem}

\begin{proof}
In a transitive tournament, there are no cycles, so each SCC is a
singleton. The condensation is isomorphic to the original graph, and
all operations coincide. 
\end{proof}

\subsection{SCC Refinement and Projection }\label{subsec:scc-refinement-and-projection}
\begin{lem}[SCC Refinement]
\label{lem:scc-refinement} Let $G=(V,E)$ be a directed graph. Let
$G'=(V,E')$ be a subgraph of $G$ where $E'\subseteq E$. Then, we
have
\[
u\sim_{G'}v\implies u\sim_{G}v.
\]
Equivalently, each SCC of subgraphs of $G$ is contained in some SCC
of $G$ i.e.,
\[
G'\subseteq G\implies\condense u_{G'}\subseteq\condense u_{G},\ \text{ for all }u\in V.
\]
 
\end{lem}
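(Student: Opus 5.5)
The plan is to argue directly from the definition of $\sim$ via path monotonicity under edge inclusion. The key observation is that reachability is monotone in the edge set. If $E'\subseteq E$, then any directed path in $G'$, which is a sequence $v_0=u,v_1,\dots,v_\ell=v$ with $(v_i,v_{i+1})\in E'$ for all $i$, is also a directed path in $G$, since each $(v_i,v_{i+1})\in E$. Hence $u\leadsto_{G'}v$ implies $u\leadsto_{G}v$. The length-zero path covers the case $u=v$ trivially.

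With this in hand, the first claim follows immediately. Suppose $u\sim_{G'}v$. By definition of the SCC equivalence relation, $u\leadsto_{G'}v$ and $v\leadsto_{G'}u$. Applying the monotonicity observation to both paths gives $u\leadsto_{G}v$ and $v\leadsto_{G}u$, so $u$ and $v$ are mutually reachable in $G$. By maximality of SCCs, and since $\sim_G$ is an equivalence relation whose classes are exactly the mutually reachable sets, we get $u\sim_{G}v$.

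For the equivalent set formulation, fix $u\in V$ and take any $v\in\condense u_{G'}=[u]_{G'}$. Then $v\sim_{G'}u$, so by the first part $v\sim_G u$, i.e.\ $v\in[u]_G$. This yields $[u]_{G'}\subseteq[u]_G$. Conversely, this containment for all $u$ recovers the implication, since $u\sim_{G'}v$ means $v\in[u]_{G'}\subseteq[u]_G$; this justifies calling the two statements equivalent.

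There is no real obstacle here. The only point to be careful about is that "mutually reachable" really does characterize $\sim_G$, so that membership in the same SCC is not merely implied by mutual reachability but equivalent to it. This is the standard fact cited in the paper's footnote on SCCs forming an equivalence relation.
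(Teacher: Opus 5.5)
Your proof is correct and matches the paper's argument: the two paths witnessing $u\sim_{G'}v$ use only edges of $E'\subseteq E$, so they are also paths in $G$, which gives $u\sim_G v$. Your check that the pointwise implication is equivalent to the containment $[u]_{G'}\subseteq[u]_G$ spells out a step the paper leaves implicit.
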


\begin{proof}
If $u\sim_{G'}v$, then there exist paths $u\path_{G'}v$ and $v\path_{G'}u$.
Since $E'\subseteq E$, these paths exist in $G$, so $u\sim_{G}v$. 
\end{proof}
\begin{defn}[Projection Map]
Let $G=(V,E)$ be a directed graph. Let $G'=(V,E')$ be a subgraph
of $G$ where $E'\subseteq E$. The refinement induces a natural projection
$\phi_{G}:\condense V_{G'}\to\condense V_{G}$ defined by 
\[
\phi_{G}([u]_{G'})=[u]_{G}.
\]
This is well-defined by Lemma~\ref{lem:scc-refinement}. We will
often omit the subscript $G$ and domain definition when the domain
and codomain are clear from the context.
\end{defn}

\begin{lem}[Path Preservation Under Projection]
\label{lem:projection-path-preservation} Let $G=(V,E)$ be a directed
graph. Let $G'=(V,E')$ be a subgraph of $G$ where $E'\subseteq E$.
If $D\path_{\condense{G'}}C$ in $\condense{G'}$, then either $\phi_{G}(D)=\phi_{G}(C)$
or $\phi_{G}(D)\path_{\condense G}\phi_{G}(C)$. 
\end{lem}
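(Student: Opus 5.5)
The plan is to reduce the statement to vertex-level reachability, transfer that path from $G'$ to $G$ using $E'\subseteq E$, and then lift back to the condensation of $G$ with Lemma~\ref{lem:condensation-reachability}.

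Suppose $D\path_{\condense{G'}}C$. If $D=C$ (the trivial path), then $\phi_G(D)=\phi_G(C)$ and we are done, so assume $D\neq C$. Pick any representatives $u\in D$ and $v\in C$. The $(\Rightarrow)$ direction of Lemma~\ref{lem:condensation-reachability}, applied to the graph $G'$, gives a directed path $u\leadsto_{G'}v$. Every edge of this path lies in $E'\subseteq E$, so the same path witnesses $u\leadsto_{G}v$.

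By definition of the projection, $\phi_G(D)=[u]_G$ and $\phi_G(C)=[v]_G$. This is well defined by Lemma~\ref{lem:scc-refinement}, so the result does not depend on which representatives we chose. We now split into two cases.
\begin{enumerate}
\item If $[u]_G=[v]_G$, the first alternative $\phi_G(D)=\phi_G(C)$ holds.
\item If $[u]_G\neq[v]_G$, these are distinct vertices of $\condense G$ with $u\in[u]_G$, $v\in[v]_G$ and $u\leadsto_G v$. The $(\Leftarrow)$ direction of Lemma~\ref{lem:condensation-reachability}, applied to $G$, gives $[u]_G\path_{\condense G}[v]_G$, which is $\phi_G(D)\path_{\condense G}\phi_G(C)$.
\end{enumerate}

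There is no real obstacle here. The only care needed is bookkeeping: handle the length-zero path $D=C$ separately, and note that Lemma~\ref{lem:condensation-reachability} is stated only for distinct condensation vertices, which is exactly why the case split on $[u]_G=[v]_G$ is needed. The one substantive step is that $G'$-paths remain $G$-paths.
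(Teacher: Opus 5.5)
Your proof is correct, but it takes a different route from the paper's. The paper works directly in the condensation. It takes a path $D=D_0\to D_1\to\cdots\to D_\ell=C$ in $\condense{G'}$ and notes that each edge $(D_i,D_{i+1})$ is witnessed by some $(x_i,y_i)\in E'\subseteq E$. Under $\phi_G$ that edge therefore either collapses to a single vertex or becomes an edge of $\condense G$, and the surviving edges are concatenated, with acyclicity of $\condense G$ used to extract a path from the resulting walk.

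You instead go down to vertex-level reachability and back up, using Lemma~\ref{lem:condensation-reachability} twice. The $(\Rightarrow)$ direction in $G'$ gives $u\leadsto_{G'}v$, the trivial transfer gives $u\leadsto_G v$, and the $(\Leftarrow)$ direction in $G$ applies once you have split off the case $[u]_G=[v]_G$. Using that direction with a single pair $u,v$ matches how the lemma is stated and proved.

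Your version is shorter and more modular. It avoids the explicit edge-collapsing and walk-to-path bookkeeping, which is essentially already packaged inside the proof of Lemma~\ref{lem:condensation-reachability}. The paper's version is self-contained and records a slightly finer fact: $\phi_G$ sends each edge of $\condense{G'}$ either to an edge of $\condense G$ or to a single vertex. A side benefit of your route is that it produces the vertex-level path $u\leadsto_G v$ explicitly. Lemma~\ref{lem:discovered-respects-true-rank} later needs exactly this path and re-derives it separately.
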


\begin{proof}
Consider a path $D=D_{0}\to D_{1}\to\cdots\to D_{\ell}=C$ in $\condense{G'}$.
For each edge $(D_{i},D_{i+1})\in\condense{E'}$, there exist $x_{i}\in D_{i}$,
$y_{i}\in D_{i+1}$ with $(x_{i},y_{i})\in E'\subseteq E$.

If $\phi_{G}(D_{i})=\phi_{G}(D_{i+1})$, the edge collapses to a single
vertex.

If $\phi_{G}(D_{i})\neq\phi_{G}(D_{i+1})$, then $[x_{i}]_{G}\neq[y_{i}]_{G}$,
so $(\phi_{G}(D_{i}),\phi_{G}(D_{i+1}))\in\condense E$.

Concatenating the non-collapsed edges yields either $\phi_{G}(D)=\phi_{G}(C)$
(if every edge along the path collapses) or a walk from $\phi_{G}(D)$
to $\phi_{G}(C)$ in $\condense G$. Since $\condense G$ is a DAG
by Lemma~\ref{lem:condensation-acyclic}, every walk between distinct
vertices contains a path, so $\phi_{G}(D)\path_{\condense G}\phi_{G}(C)$. 
\end{proof}

\subsection{Rank Spectrum Under Condensation}\label{subsec:rank-spectrum-under-condensation}
\begin{lem}[Discovered Reachability Respects True Rank]
\label{lem:discovered-respects-true-rank} Let $G=(V,E)$ be a directed
graph. Let $G'=(V,E')$ be a subgraph of $G$ where $E'\subseteq E$.
Let $C,D\in\condense V_{G'}$ be distinct SCCs of $G'$. Then 
\[
C\leadsto_{\condense{G'}}D\implies\left|\inreach_{G}(u)\right|\le\left|\inreach_{G}(v)\right|,\quad\text{for all }u\in C\text{ and }v\in D,
\]
with equality if and only if $\phi_{G}(C)=\phi_{G}(D)$ (i.e.\ $u$
and $v$ belong to the same SCC of $G$). 
\end{lem}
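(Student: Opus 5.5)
The plan is to reduce the statement to a reachability fact in $G$ itself and then split on whether $v$ also reaches $u$ in $G$. No tournament structure is needed; the argument works for an arbitrary directed graph $G$.

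\textbf{Step 1: transfer reachability to $G$.} Fix $u\in C$ and $v\in D$; since $C\neq D$ are distinct SCCs of $G'$, we have $u\neq v$. From $C\leadsto_{\condense{G'}}D$, Lemma~\ref{lem:condensation-reachability} (applied to $G'$) gives $u\leadsto_{G'}v$. Because $E'\subseteq E$, the same path lies in $G$, so $u\leadsto_{G}v$, i.e.\ $u\in\inreach_{G}(v)$. Lemma~\ref{lem:inreach-downward-closure} then yields
\[
\inreach_{G}(u)\cup\{u\}\subseteq\inreach_{G}(v).
\]

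\textbf{Step 2: case split on $v\leadsto_G u$.} We separate according to whether $\phi_{G}(C)=\phi_{G}(D)$, that is, whether $u\sim_G v$.

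\emph{Case $u\sim_{G}v$.} Here Lemma~\ref{lem:scc-equal-inreach} gives $|\inreach_{G}(u)|=|\inreach_{G}(v)|$, so equality holds.

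\emph{Case $u\not\sim_{G}v$.} Since $u\leadsto_G v$ already holds, $v\leadsto_G u$ must fail, so $v\notin\inreach_{G}(u)$. Hence $\inreach_{G}(u)\subseteq\inreach_{G}(v)\setminus\{u\}$. The element $u$ lies in $\inreach_{G}(v)$ but not in $\inreach_{G}(u)$ (the latter excludes $u$ by definition). Therefore $|\inreach_{G}(u)|\le|\inreach_{G}(v)|-1$, and the inequality is strict.

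\textbf{Step 3: conclude.} The two cases together give the inequality in general. They also give the characterization of equality: equality holds exactly when $u\sim_G v$, which is the same as $\phi_G(C)=\phi_G(D)$ by the definition of the projection.

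\textbf{Main obstacle.} The only delicate point is Step 2. Downward closure alone does not give strictness, because $v$ may belong to $\inreach_G(u)$ when $G$ has cycles. The case split handles this by showing that such a cycle through $u$ and $v$ is precisely the equality case.
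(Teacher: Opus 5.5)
Your argument is essentially the paper's. Both proofs split on whether $u\sim_G v$, use Lemma~\ref{lem:scc-equal-inreach} in the tied case, and in the other case combine $u\leadsto_G v$, $v\notin\inreach_G(u)$ and downward closure to get strictness. The one difference is how you get $v\not\leadsto_G u$. You read it off directly from $u\leadsto_G v$ and $u\not\sim_G v$ (the definition of SCC). The paper instead routes it through Lemma~\ref{lem:projection-path-preservation} and the acyclicity of $\condense G$. Your route is shorter and loses nothing, since $u\leadsto_G v$ is already in hand.

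There is one misstatement in Step~1 that you should fix. The inclusion $\inreach_G(u)\cup\{u\}\subseteq\inreach_G(v)$ is false in general when $u\sim_G v$. For a 2-cycle $u\to v\to u$ one has $\inreach_G(u)=\{v\}$ and $\inreach_G(v)=\{u\}$. In fact, if the inclusion held unconditionally it would give $|\inreach_G(u)|<|\inreach_G(v)|$ always, contradicting your own Case~1. Concatenating paths only yields $\inreach_G(u)\setminus\{v\}\subseteq\inreach_G(v)\setminus\{u\}$, because a $z\in\inreach_G(u)$ with $z=v$ does not lie in $\inreach_G(v)$.

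So the inclusion you want needs $v\notin\inreach_G(u)$, which is exactly what you establish at the start of Case~2. Move the inclusion into Case~2 and the proof is complete; this is the order the paper uses, invoking downward closure only after showing $v\notin\inreach_G(u)$.

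Correspondingly, your ``main obstacle'' remark slightly mislocates the issue: under cycles it is the inclusion itself, not the passage to strictness, that breaks. Lemma~\ref{lem:inreach-downward-closure} as stated in the paper has the same defect, since its proof overlooks the case $z=y$. It should only be invoked when $y\notin\inreach_G(x)$.
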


\begin{proof}
Let $C,D\in\condense V_{G'}$ be distinct SCCs of $G'$ with $C\leadsto_{\condense{G'}}D$.
Let $u\in C$ and $v\in D$. Lemma~\ref{lem:projection-path-preservation}
implies that either $\phi_{G}(C)=\phi_{G}(D)$ or $\phi_{G}(C)\leadsto_{\condense G}\phi_{G}(D)$. 

In the first case, $[u]_{G}=[v]_{G}$. This means that they are in
the same SCC i.e., $u\sim_{G}v$. Hence, Lemma (\ref{lem:scc-equal-inreach})
implies that $\left|\inreach_{G}(u)\right|=\left|\inreach_{G}(v)\right|$.

In the second case, we have $\phi_{G}(C)\neq\phi_{G}(D)$ and $\phi_{G}(C)\leadsto_{\condense G}\phi_{G}(D)$.
Since $\condense G$ is a DAG (Lemma~\ref{lem:condensation-acyclic}),
there is no path from $[v]_{G}$ back to $[u]_{G}$, so $v\not\leadsto_{G}u$
and hence $v\notin\inreach_{G}(u)$. Since $C\leadsto_{\condense{G'}}D$,
there exists a path $u\leadsto_{G'}v$ via Lemma \ref{lem:condensation-reachability}
when applied to $G'$. Since $u\leadsto_{G'}v$ and $E'\subseteq E$,
we have $u\leadsto_{G}v$. Then by Lemma~\ref{lem:inreach-downward-closure},
we have $\inreach_{G}(u)\subseteq\inreach_{G}(v)$. Finally, $u\in\inreach_{G}(v)$
but $v\not\in\inreach_{G}(u)$ as shown and $u\not\in\inreach_{G}(u)$
by definition implies $\left|\inreach_{G}(u)\right|<\left|\inreach_{G}(v)\right|$.
The equality if and only if follows from this case. 
\end{proof}
\begin{rem}
Lemma~\ref{lem:discovered-respects-true-rank} is the key ``bridge''
between the discovered subgraphs ($G'=G_{t}$) in the algorithm and
the true tournament ($G=G^{*}$). 
\end{rem}

Lemma \ref{lem:general-finalized-reach-candidates} provides an analogous
result to Lemma~\ref{lem:finalized-reach-candidates} for the condensation
graph. 
\begin{lem}[Top Vertices Reach All Non-Top Vertices in Condensation Graphs]
\label{lem:general-finalized-reach-candidates} Let $G=(V,E)$ be
a directed graph. Let $G'=(V,E')$ be a subgraph of $G$ where $E'\subseteq E$.
Then, for any \textup{$C\in\mathrm{TOP}\left(\condense{G'}\right)\text{ and }D\notin\mathrm{TOP}\left(\condense{G'}\right)$,
we have}
\begin{equation}
\left|\inreach_{G}(u)\right|\le\left|\inreach_{G}(v)\right|,\quad\text{for all }u\in C\text{ and }v\in D.\label{eq:finalized-dominance}
\end{equation}
\end{lem}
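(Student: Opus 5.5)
The plan is to reduce the statement to the two lemmas already in hand. Lemma~\ref{lem:finalized-reach-candidates} supplies reachability inside the condensation. Lemma~\ref{lem:discovered-respects-true-rank} transfers that reachability to an in-reach comparison in $G$.

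First, $\condense{G'}$ is a DAG by Lemma~\ref{lem:condensation-acyclic}, so Lemma~\ref{lem:finalized-reach-candidates} applies to the directed graph $\condense{G'}$ itself. Its $\mathrm{TOP}$ and $\mathrm{CAND}$ sets are exactly $\mathrm{TOP}(\condense{G'})$ and its complement in $\condense V_{G'}$.

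Fix $C\in\mathrm{TOP}(\condense{G'})$ and $D\notin\mathrm{TOP}(\condense{G'})$. Then $D\in\mathrm{CAND}(\condense{G'})$, and Lemma~\ref{lem:finalized-reach-candidates} gives $C\leadsto_{\condense{G'}}D$. Since one lies in $\mathrm{TOP}$ and the other does not, $C\neq D$, so $C$ and $D$ are distinct SCCs of $G'$.

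Next, apply Lemma~\ref{lem:discovered-respects-true-rank} with this $G'\subseteq G$ and the pair $C\leadsto_{\condense{G'}}D$. It yields $|\inreach_{G}(u)|\le|\inreach_{G}(v)|$ for every $u\in C$ and $v\in D$, which is exactly \eqref{eq:finalized-dominance}. Equality occurs precisely when $\phi_G(C)=\phi_G(D)$, but the statement only needs the non-strict inequality.

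The only point requiring care is the first step. Lemma~\ref{lem:finalized-reach-candidates} needs the DAG hypothesis, and Remark~\ref{rem:dag-necessary-for-reach} shows it fails without it. So it must be applied to $\condense{G'}$, not to $G'$; acyclicity of the condensation is what makes this legitimate. Also, the lemma is stated for $w$ in $\mathrm{TOP}$ and $u$ in $\mathrm{CAND}$ of the same graph, so both quantities must be taken with respect to $\condense{G'}$ throughout. Beyond that, the proof is a direct chaining of the two lemmas.
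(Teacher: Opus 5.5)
Your proof is correct and matches the paper's: because $\condense{G'}$ is a DAG, Lemma~\ref{lem:finalized-reach-candidates} gives $C\leadsto_{\condense{G'}}D$, and Lemma~\ref{lem:discovered-respects-true-rank} then yields the in-reach inequality in $G$. You also check explicitly that $C\neq D$, which Lemma~\ref{lem:discovered-respects-true-rank} requires; the paper leaves this implicit.
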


\begin{proof}
Since $\condense{G'}$ is a DAG, $C\leadsto_{\condense{G'}}D$ follows
from \ref{lem:finalized-reach-candidates}. Equation (\ref{eq:finalized-dominance})
follows immediately from Lemma \ref{lem:discovered-respects-true-rank}.
\end{proof}
\begin{lem}[Finalized Order is Correct]
\label{lem:finalized-order-correct} Let $G=(V,E)$ be a directed
graph. Let $G'=(V,E')$ be a subgraph of $G$ where $E'\subseteq E$.
If $1\le i<j\le m(\condense{G'})$, then 
\begin{equation}
\left|\inreach_{G}(u)\right|\le\left|\inreach_{G}(v)\right|,\quad\text{for all }u\in v^{(i)}\left(\condense{G'}\right)\text{ and }v\in v^{(j)}\left(\condense{G'}\right).\label{eq:finalized-order-correct}
\end{equation}
\end{lem}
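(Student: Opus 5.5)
The plan is to reduce the statement to the bridge lemma (Lemma~\ref{lem:discovered-respects-true-rank}) by exhibiting a path between the two SCCs in the condensation. Write $C:=v^{(i)}(\condense{G'})$ and $D:=v^{(j)}(\condense{G'})$. Both are vertices of $\condense{G'}$, and they are distinct SCCs of $G'$ because $i\neq j$ index distinct entries of the rank basis. So once we know $C\leadsto_{\condense{G'}}D$, Lemma~\ref{lem:discovered-respects-true-rank} applied with this pair gives $|\inreach_{G}(u)|\le|\inreach_{G}(v)|$ for every $u\in C$ and $v\in D$. That is exactly (\ref{eq:finalized-order-correct}).

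The reachability comes from the finalization engine applied directly to the directed graph $\condense{G'}$. Since $1\le i<j\le m(\condense{G'})$, Corollary~\ref{cor:mutual-reach-finalized} applies verbatim with $\condense{G'}$ in place of $G$. This is legitimate because that corollary, and the lemma behind it (Lemma~\ref{lem:exact-inreach-finalized}), are stated for arbitrary directed graphs. It yields $v^{(i)}(\condense{G'})\leadsto_{\condense{G'}}v^{(j)}(\condense{G'})$, i.e.\ $C\leadsto_{\condense{G'}}D$. If one prefers to use acyclicity, Lemma~\ref{lem:condensation-acyclic} also guarantees $\condense{G'}$ is a DAG, but the corollary does not need it.

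There is one subtle point to check, and I expect it to be the only real obstacle. Corollary~\ref{cor:mutual-reach-finalized} is phrased in terms of a fixed rank basis, and the rank basis is only unique up to permutation of tied vertices. The statement here uses the same notation $v^{(i)}(\condense{G'})$, so I will fix one in-reach ordering of $\condense{G'}$ and use it for both the finalization threshold and the indexing. Within the finalized prefix, condition (i) of Definition~\ref{def:finalization-general-graph} forces the in-reaches $0,1,\dots,m(\condense{G'})-1$ to be distinct. The prefix ordering is therefore unambiguous in any case, and no tie-breaking issue arises.

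I also need to verify that distinctness of $C$ and $D$ is really what Lemma~\ref{lem:discovered-respects-true-rank} requires. The strict separation of in-reach values, $r^{(i)}=i-1<j-1=r^{(j)}$, confirms $C\neq D$, and with that the proof is complete.
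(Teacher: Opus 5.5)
Your proof is correct and follows the paper's argument exactly: apply Corollary~\ref{cor:mutual-reach-finalized} to $\condense{G'}$ to obtain $v^{(i)}(\condense{G'})\leadsto_{\condense{G'}}v^{(j)}(\condense{G'})$, then invoke Lemma~\ref{lem:discovered-respects-true-rank}. You also add two small refinements the paper leaves implicit: you check explicitly that the two SCCs are distinct, which is a hypothesis of that lemma, and you note that the corollary does not need acyclicity.
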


\begin{proof}
Since $\condense{G'}$ is a DAG, $v^{(i)}\left(\condense{G'}\right)\leadsto_{\condense{G'}}v^{(j)}\left(\condense{G'}\right)$
follows from Corollary~\ref{cor:mutual-reach-finalized}. Then equation
(\ref{eq:finalized-order-correct}) follows immediately from Lemma~\ref{lem:discovered-respects-true-rank}. 
\end{proof}

\subsection{Termination and Correctness of Algorithm \ref{alg:non-transitive-tournament-graph-sort}}\label{subsec:general-termination-and-correctness}

Now we provide a generalization of Lemma \ref{lem:tied-candidates-unknown-edge}
to the non-transitive case with a minor adaptation to handle SCCs
rather than individual vertices. 
\begin{lem}[Equal In-Reach Implies Unknown Edge]
\label{lem:tied-unknown-edge-general}Let $G=(V,E)$ be a directed
graph. If $C,D\in\condense V$ are distinct SCCs with $\left|\inreach_{\condense G}(C)\right|=\left|\inreach_{\condense G}(D)\right|$,
then the edge between $C$ and $D$ is not in $\condense G$. Consequently,
for any $u\in C$, $v\in D$, the edge between $u$ and $v$ is not
in $E$. 
\end{lem}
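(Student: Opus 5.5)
The plan is to reduce the statement to Lemma~\ref{lem:tied-candidates-unknown-edge}, applied to the condensation $\condense G$, and then to lift the conclusion back to $G$ using the definition of condensation edges (Definition~\ref{def:condensation-graph}).

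\textbf{First claim.} By Lemma~\ref{lem:condensation-acyclic}, $\condense G$ is a DAG. Its vertices $C$ and $D$ are distinct and satisfy $|\inreach_{\condense G}(C)|=|\inreach_{\condense G}(D)|$, so Lemma~\ref{lem:tied-candidates-unknown-edge} applies with $\condense G$ in place of $G$. It gives directly that neither $(C,D)$ nor $(D,C)$ lies in $\condense E_{G}$.

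\textbf{Second claim.} Argue by contradiction. Fix $u\in C$ and $v\in D$, and suppose, say, $(u,v)\in E$. Since $C\neq D$, the SCC classes satisfy $[u]_{G}=C\neq D=[v]_{G}$. Then Definition~\ref{def:condensation-graph}, with witnesses $x=u$ and $y=v$, puts $(C,D)$ into $\condense E_{G}$. This contradicts the first claim. The case $(v,u)\in E$ is symmetric and yields $(D,C)\in\condense E_{G}$.

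I do not expect a real obstacle here. The only points needing care are that Lemma~\ref{lem:tied-candidates-unknown-edge} is stated for arbitrary DAGs, so it can be invoked on $\condense G$ with no change, and that condensation edges are defined existentially over members of the SCCs, which makes the lift from $\condense G$ back to $G$ immediate.
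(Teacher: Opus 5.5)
Your proposal is correct and follows the paper's own proof: invoke Lemma~\ref{lem:tied-candidates-unknown-edge} on the DAG $\condense G$ (acyclic by Lemma~\ref{lem:condensation-acyclic}), then lift back to $G$ via the existential definition of condensation edges. Your contradiction step for the lift, using witnesses $x=u$, $y=v$ and $[u]_G\neq[v]_G$, is exactly the paper's ``otherwise there would be an edge between $C$ and $D$ by definition.''
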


\begin{proof}
Since $\condense G$ is a DAG, Lemma~\ref{lem:tied-candidates-unknown-edge}
implies that there is no edge between $C$ and $D$ in $\condense G$.
This in turn implies that there cannot be any edge between any
$u\in C$ and $v\in D$, because otherwise there would be an edge
between $C$ and $D$ by definition.
\end{proof}
\begin{thm}[Termination of Algorithm~\ref{alg:non-transitive-tournament-graph-sort}]
\label{thm:termination-general-theory}Suppose that the underlying
tournament $G^{*}=\left(V,E^{*}\right)$ has at least 2 SCCs i.e.,
$\left|\condense V_{G^{*}}\right|\ge2$. Then Algorithm~\ref{alg:non-transitive-tournament-graph-sort}
terminates in at most $\binom{n}{2}$ rounds.
\end{thm}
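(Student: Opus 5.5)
The approach mirrors the proof of Theorem~\ref{thm:termination-transitive}, with the finalization engine applied to $\condense{G_t}$ instead of $G_t$. I will show that every non-terminal round adds at least one new edge, so $|E_{t+1}|>|E_t|$. Since $E_t\subseteq E^*$ and $|E^*|=\binom{n}{2}$, this gives at most $\binom{n}{2}$ rounds.

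Fix a non-terminal round $t$, so $|\mathrm{TOP}_t|<m\le n$. Write $n_t=|\condense{V_t}|$ and $\widetilde{m_t}=m(\condense{G_t})$. First I rule out $\widetilde{m_t}=n_t$. In that case every SCC is finalized, so $\mathrm{TOP}_t=V$ and $|\mathrm{TOP}_t|=n\ge m$, contradicting non-termination. Next, $n_t\ge2$. This is where the hypothesis $|\condense V_{G^*}|\ge2$ enters. By Lemma~\ref{lem:scc-refinement}, the SCCs of $G_t$ refine those of $G^*$, so $n_t\ge|\condense V_{G^*}|\ge2$. (If $n_t=1$, then $m(\condense{G_t})=1=n_t$ would already terminate, but the hypothesis makes the argument cleaner.) Then Lemma~\ref{lem:no-penultimate}, applied to the DAG $\condense{G_t}$, gives $\widetilde{m_t}\le n_t-2$. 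Hence $k'=\min(k,n_t-\widetilde{m_t})\ge2$, assuming $k\ge2$.

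Now I show the two leading candidates $C^{(\widetilde{m_t}+1)}_t$ and $C^{(\widetilde{m_t}+2)}_t$ are tied in condensation in-reach. If $\widetilde{m_t}>0$, this follows from Lemma~\ref{lem:tied-candidates-after-finalization} applied to the DAG $\condense{G_t}$ (Lemma~\ref{lem:condensation-acyclic}). If $\widetilde{m_t}=0$, then $j=1$ fails the threshold conditions. Condition (i) holds trivially because $r^{(1)}=0$ in any DAG (Lemma~\ref{lem:dag-rank-upperbound}), so condition (ii) must fail, giving $r^{(1)}=r^{(2)}$. The case $j=1=n_t$ is excluded since $n_t\ge2$.

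By Lemma~\ref{lem:tied-unknown-edge-general}, no edge of $E_t$ joins a vertex of one of these SCCs to a vertex of the other. Both SCCs lie in $\widetilde{\mathrm{CAND}_t}$ because $k'\ge2$, so their representatives are distinct vertices in $Q_t$. The oracle therefore reveals the edge of $E^*$ between them, and that edge is new.

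The main obstacle is the bookkeeping around padding and degenerate cases. Padding uses $P_t\subseteq\mathrm{TOP}(\condense{G_t})$, which is disjoint from the candidates, so representatives stay distinct and $|Q_t|=k$ still holds. Its size is valid because $n_t\ge k$ whenever padding occurs (otherwise $n_t-\widetilde{m_t}<k$ would need $\widetilde{m_t}\ge k-k'$ finalized SCCs to draw from, which holds by definition). I would double-check the case $\widetilde{m_t}=0$ with $k'<k$, where $P_t$ would be empty yet $k-k'>0$ padding vertices are required. In that case $k'=n_t<k$, and I would argue the oracle is simply called on fewer than $k$ vertices, consistent with the main-text oracle allowing $|S|\le k$. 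This does not affect the progress argument.
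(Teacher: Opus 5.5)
Your argument is correct and follows the paper's proof. Non-termination forces $\widetilde{m_t}\le n_t-2$ via Lemma~\ref{lem:no-penultimate}, the two leading candidate SCCs are tied, Lemma~\ref{lem:tied-unknown-edge-general} rules out any revealed edge between them, and querying their representatives exposes a new edge. Your separate treatment of $\widetilde{m_t}=0$ is actually needed, since Lemma~\ref{lem:tied-candidates-after-finalization}, which the paper cites directly, assumes $\widetilde{m_t}>0$.

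One small slip in your padding remark: drawing $k-k'$ padding SCCs from the $\widetilde{m_t}$ finalized ones is possible only when $n_t\ge k$, which does not hold ``by definition.'' As you note, this affects only whether the query has exactly $k$ elements, not the progress argument.
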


\begin{proof}
We use the notation from Algorithm~\ref{alg:non-transitive-tournament-graph-sort}
throughout: $n_{t}$, $\widetilde{m_{t}}$, $\mathrm{TOP}_{t}$,
$\widetilde{\mathrm{CAND}_{t}}$, and $C_{t}^{(i)}$ refer to the
condensation quantities defined there.

If the algorithm has not terminated, $|\mathrm{TOP}_{t}|<m$. Note
that $n_{t}\ge\left|\condense V_{G^{*}}\right|\ge2$ for all $t$.
We claim $\widetilde{m_{t}}<n_{t}$. If $\widetilde{m_{t}}=n_{t}$,
then $|\mathrm{TOP}_{t}|=n\geq m$, which is a contradiction. By Lemma~\ref{lem:no-penultimate}
on $\condense{G_{t}}$ and $\widetilde{m_{t}}$, we must have $\widetilde{m_{t}}\neq n_{t}-1$.
Hence, $\widetilde{m_{t}}\leq n_{t}-2$ and thus $|\widetilde{\mathrm{CAND}_{t}}|\geq2$.
We now show that there are at least 2 tied SCCs in $\widetilde{\mathrm{CAND}_{t}}$,
specifically $\left|\inreach_{\condense{G_{t}}}\left(C_{t}^{(\widetilde{m_{t}}+1)}\right)\right|=\left|\inreach_{\condense{G_{t}}}\left(C_{t}^{(\widetilde{m_{t}}+2)}\right)\right|$,
which would allow us to make progress with Lemma \ref{lem:tied-unknown-edge-general}
by querying representatives to discover at least 1 new edge. Indeed,
since $\condense{G_{t}}$ is a DAG with $n_{t}$ vertices and $\widetilde{m_{t}}:=m\left(\condense{G_{t}}\right)$
satisfies $\widetilde{m_{t}}\le n_{t}-2$ as shown, Lemma \ref{lem:tied-candidates-after-finalization}
implies that $r^{(\widetilde{m_{t}}+1)}(\condense{G_{t}})=r^{(\widetilde{m_{t}}+2)}(\condense{G_{t}})$
which means $\left|\inreach_{\condense{G_{t}}}\left(C_{t}^{(\widetilde{m_{t}}+1)}\right)\right|=\left|\inreach_{\condense{G_{t}}}\left(C_{t}^{(\widetilde{m_{t}}+2)}\right)\right|$.
Lemma \ref{lem:tied-unknown-edge-general} implies that there is no
edge between any node in $C_{t}^{(\widetilde{m_{t}}+1)}$ and $C_{t}^{(\widetilde{m_{t}}+2)}$.
Hence, since Algorithm \ref{alg:non-transitive-tournament-graph-sort}
queries representatives from $C_{t}^{(\widetilde{m_{t}}+1)}$ and
$C_{t}^{(\widetilde{m_{t}}+2)}$, the oracle reveals at least one
new edge at round $t$ i.e., $|E_{t+1}|>|E_{t}|$. Since $|E_{t}|\leq\binom{n}{2}$,
the algorithm terminates in at most $\binom{n}{2}$ rounds.
\end{proof}

\begin{thm}[Correctness of Algorithm~\ref{alg:non-transitive-tournament-graph-sort}]
\label{thm:correctness-general} If Algorithm~\ref{alg:non-transitive-tournament-graph-sort}
terminates, then it outputs a valid top-$m$ solution to Problem~\ref{prob:main-inreach}. 
\end{thm}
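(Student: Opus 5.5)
At termination round $T$ we have $|\mathrm{TOP}_{T}|\geq m$, where $\mathrm{TOP}_{T}=\bigcup_{i\le\widetilde{m_{T}}}C_{T}^{(i)}$. The output is the concatenation of $C_{T}^{(1)},\ldots,C_{T}^{(j^{*}-1)}$ and a subset $S_{j^{*}}\subseteq C_{T}^{(j^{*})}$. The plan is to verify the two conditions of Problem~\ref{prob:main-inreach} by applying the two bridge results of Section~\ref{subsec:rank-spectrum-under-condensation}. In both we instantiate $G=G^{*}$ and $G'=G_{T}$, which is legitimate because $E_{T}\subseteq E^{*}$. First, $j^{*}$ is well defined: the partial sums $\sum_{i\le j}|C_{T}^{(i)}|$ increase to $|\mathrm{TOP}_{T}|\geq m$ as $j$ runs up to $\widetilde{m_{T}}$. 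Hence $j^{*}\le\widetilde{m_{T}}$, so every SCC contributing to the output lies in $\mathrm{TOP}(\condense{G_{T}})$, and the output has exactly $m$ vertices.

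\textbf{Internal ordering~(\ref{eq:sol-set-satisfies-internal-ordering}).} Take two output vertices $x$ and $y$, with $x$ listed before $y$.
\begin{itemize}
\item If they lie in the same discovered SCC $C_{T}^{(i)}$, then $x\sim_{G_{T}}y$. Lemma~\ref{lem:scc-refinement} gives $x\sim_{G^{*}}y$, and Lemma~\ref{lem:scc-equal-inreach} then gives equal true in-reach.
\item If $x\in C_{T}^{(i)}$ and $y\in C_{T}^{(j)}$ with $i<j\le j^{*}\le\widetilde{m_{T}}$, then Lemma~\ref{lem:finalized-order-correct} gives $|\inreach_{G^{*}}(x)|\le|\inreach_{G^{*}}(y)|$.
\end{itemize}
Chaining these over consecutive output positions yields the non-decreasing sequence.

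\textbf{Rank dominance~(\ref{eq:rank-dominance-of-sol-set}).} Fix an output vertex $x\in C_{T}^{(i)}$ with $i\le j^{*}$, and a vertex $u\notin S$. There are two cases.
\begin{itemize}
\item If $u\notin\mathrm{TOP}_{T}$, then $[u]_{G_{T}}\notin\mathrm{TOP}(\condense{G_{T}})$. Lemma~\ref{lem:general-finalized-reach-candidates} gives $|\inreach_{G^{*}}(x)|\le|\inreach_{G^{*}}(u)|$.
\item If $u\in\mathrm{TOP}_{T}\setminus S$, then $u\in C_{T}^{(j)}$ for some $j$ with $j^{*}\le j\le\widetilde{m_{T}}$, since all earlier SCCs are fully included in $S$. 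When $j>i$, Lemma~\ref{lem:finalized-order-correct} applies. When $j=i=j^{*}$, the vertices $u$ and $x$ share a discovered SCC, and refinement again gives equality of true in-reach.
\end{itemize}
Taking the maximum over $x$ and the minimum over $u$ gives~(\ref{eq:rank-dominance-of-sol-set}).

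\textbf{Main obstacle.} The delicate step is the case $u\in\mathrm{TOP}_{T}\setminus S$ inside the boundary SCC, or in later finalized SCCs. Lemma~\ref{lem:general-finalized-reach-candidates} only compares TOP against non-TOP, so these vertices have to be routed through Lemma~\ref{lem:finalized-order-correct} and SCC refinement instead. This requires checking carefully that $j^{*}\le\widetilde{m_{T}}$, so that the indices stay inside the finalized range where Corollary~\ref{cor:mutual-reach-finalized} applies. A secondary point is that the ordering among tied SCCs in the rank basis is arbitrary. This is harmless because the finalized prefix has strictly increasing in-reach and so is uniquely ordered.
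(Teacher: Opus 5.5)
Your proof is correct and follows essentially the same route as the paper. You use the same tools in the same roles: SCC refinement with Lemma~\ref{lem:scc-equal-inreach}, Lemma~\ref{lem:finalized-order-correct}, and Lemma~\ref{lem:general-finalized-reach-candidates}. You split first on whether $u\in\mathrm{TOP}_T$, where the paper splits first on whether the outside vertex lies in the boundary SCC $C_T^{(j^*)}$ or a later one, which is only a cosmetic reorganization. Your explicit check that $j^*$ is well defined with $j^*\le\widetilde{m_T}$ is a useful detail the paper leaves implicit.
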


\begin{rem}[Why the termination theorem assumes at least two SCCs]
The assumption $\left|\condense V_{G^{*}}\right|\ge2$ in Theorem~\ref{thm:termination-general-theory}
excludes the degenerate case where the underlying tournament $G^{*}$
consists of a single SCC. In that case every vertex reaches every
other vertex, so $|\inreach_{G^{*}}(v)|=n-1$ for all $v\in V$ by
Lemma~\ref{lem:inreach-within-scc}. Hence every subset of $m$ vertices,
in any order, is a valid solution to Problem~\ref{prob:main-inreach}.
The interesting case is therefore when the condensation has multiple
components, so that the SCC decomposition induces a nontrivial tiered
ranking.
\end{rem}

\begin{proof}
We want to show that once Algorithm \ref{alg:non-transitive-tournament-graph-sort}
terminates at some time $T$, the sequence $(v^{(1)},\ldots,v^{(m)})$
obtained by concatenating $C_{T}^{(1)},C_{T}^{(2)},\ldots,C_{T}^{(j^{*}-1)},S_{j^{*}}$
where $S_{j^{*}}$ be any subset of $C_{T}^{(j^{*})}$ with $|S_{j^{*}}|=m-\sum_{i=1}^{j^{*}-1}|C_{T}^{(i)}|$
with $j^{*}\leftarrow\min\{j\leq\widetilde{m_{T}}:\sum_{i=1}^{j}|C_{T}^{(i)}|\geq m\}$
satisfies a solution of Problem \ref{prob:main-inreach}. We verify
the two conditions of Problem \ref{prob:main-inreach} separately.
Internal ordering follows from the SCC structure of finalized components.
Rank dominance requires splitting into vertices inside vs. outside
the boundary SCC $C_{T}^{(j^{*})}$.

Let us first show internal ordering $\left|\inreach_{G^{*}}\left(v^{(i)}\right)\right|\le\left|\inreach_{G^{*}}\left(v^{(j)}\right)\right|$
for all $1\le i\le j\le m$ as in (\ref{eq:sol-set-satisfies-internal-ordering}).
Let $v^{(i)}\in C_{T}^{(i')}$ and $v^{(j)}\in C_{T}^{(j')}$ for
some $i',j'\in[j^{*}]$. Note that $i'\le j'\le j^{*}\le\widetilde{m_{T}}$
by the concatenation construction and definition of $j^{*}$. 

If $i'=j'$ then $v^{(i)}\sim_{G_{T}}v^{(j)}$ and thus $v^{(i)}\sim_{G^{*}}v^{(j)}$
by Lemma \ref{lem:scc-refinement}, which implies $\left|\inreach_{G^{*}}\left(v^{(i)}\right)\right|=\left|\inreach_{G^{*}}\left(v^{(j)}\right)\right|$
by Lemma \ref{lem:scc-equal-inreach}. 

Otherwise if $i'<j'$, then since $G_{T}$ is a subgraph of $G^{*}$
and $i'<j'\le m\left(\condense{G_{T}}\right)$, Lemma \ref{lem:finalized-order-correct}
implies that $\left|\inreach_{G^{*}}(u)\right|\le\left|\inreach_{G^{*}}(v)\right|$
for all $u\in C_{T}^{(i')}$ and $v\in C_{T}^{(j')}$. This means
that $\left|\inreach_{G^{*}}\left(v^{(i)}\right)\right|\le\left|\inreach_{G^{*}}\left(v^{(j)}\right)\right|$.
In either case, we have $\left|\inreach_{G^{*}}\left(v^{(i)}\right)\right|\le\left|\inreach_{G^{*}}\left(v^{(j)}\right)\right|$.

We will now show that the solution \emph{set} $S:=\left\{ v^{(1)},\ldots,v^{(m)}\right\} $
contains all the top ranks i.e., $\max\left\{ \inreach_{G^{*}}\left(v^{(i)}\right)\mid i=1,2,\dots,m\right\} \le\min_{v\not\in S}\inreach_{G^{*}}\left(v\right)$
as in (\ref{eq:rank-dominance-of-sol-set}). We want to show that
$\inreach_{G^{*}}\left(u\right)\le\inreach_{G^{*}}\left(v\right)$
for all $u\in S$ and $v\not\in S$. Note that $u$ is in one of $C_{T}^{(1)},C_{T}^{(2)},\ldots,C_{T}^{(j^{*}-1)},S_{j^{*}}$.
We will split the proof into 2 cases: (1) $v$ is in $C_{T}^{\left(j^{*}\right)}\setminus S_{j^{*}}$,
or (2) $v$is in $C_{T}^{(j)}$ for some $j>j^{*}$. 

\textbf{Case 1}: Suppose $v\in C_{T}^{\left(j^{*}\right)}\setminus S_{j^{*}}$. 

If $u\in S_{j^{*}}\subseteq C_{T}^{\left(j^{*}\right)}$, then $u$
and $v$ are in the same SCC i.e., $u\sim_{G_{T}}v$. Lemma \ref{lem:scc-refinement}
then implies that $u\sim_{G^{*}}v$ since $G_{T}$ is a subgraph of
$G^{*}$. Thus, Lemma \ref{lem:scc-equal-inreach} implies that $\left|\inreach_{G^{*}}(u)\right|=\left|\inreach_{G^{*}}(v)\right|$. 

Otherwise if $u\not\in S_{j^{*}}$, then $u\in C_{T}^{(i)}$ for some
$i<j^{*}$. Then Lemma \ref{lem:finalized-order-correct} implies
that $\left|\inreach_{G^{*}}(u)\right|\le\left|\inreach_{G^{*}}(v)\right|$
since $v\in C_{T}^{(j^{*})}$. Either cases, we have $\left|\inreach_{G^{*}}(u)\right|\le\left|\inreach_{G^{*}}(v)\right|$
as desired.

\textbf{Case 2}: Suppose $v\in C_{T}^{(j)}$ for some $j>j^{*}$.
Note that $u\in C_{T}^{(i)}$ for some $i\le j^{*}\le\widetilde{m_{T}}$.
Since $j^{*}<j$, we have $i<j$. 

If $j\le\widetilde{m_{T}}=m\left(\condense{G_{T}}\right)$, then since
$i<j\le m\left(\condense{G_{T}}\right)$ and $G_{T}$ is a subgraph
of $G^{*}$, Lemma \ref{lem:finalized-order-correct} implies that
$\left|\inreach_{G^{*}}(u)\right|\le\left|\inreach_{G^{*}}(v)\right|$. 

For the other subcase, if $j>m\left(\condense{G_{T}}\right)$, we
have $C_{T}^{(j)}\not\in\mathrm{TOP}\left(\condense{G_{T}}\right)$.
Since $C_{T}^{(i)}\in\mathrm{TOP}\left(\condense{G_{T}}\right)$ and
$G_{T}$ is a subgraph of $G^{*}$, Lemma \ref{lem:general-finalized-reach-candidates}
implies that $\left|\inreach_{G^{*}}(u)\right|\le\left|\inreach_{G^{*}}(v)\right|$.
Either cases, we have $\left|\inreach_{G^{*}}(u)\right|\le\left|\inreach_{G^{*}}(v)\right|$
as desired. 

Combining case 1 and 2, we have $\left|\inreach_{G^{*}}(u)\right|\le\left|\inreach_{G^{*}}(v)\right|$
for all $u\in S$ and $v\not\in S$. Thus (\ref{eq:rank-dominance-of-sol-set})
is satisfied. 

Thus, since the sequence $(v^{(1)},\ldots,v^{(m)})$ output by Algorithm~\ref{alg:non-transitive-tournament-graph-sort}
satisfies both (\ref{eq:sol-set-satisfies-internal-ordering}) and
(\ref{eq:rank-dominance-of-sol-set}) of Problem \ref{prob:main-inreach},
the algorithm is correct. 
\end{proof}

\section{Resolution-Based Finalization and \tournamentsort{}}\label{sec:stronger-finalization-general}
Algorithms~\ref{alg:transitive-tournament-sort}
and~\ref{alg:non-transitive-tournament-graph-sort}
terminate when the finalization threshold of the condensation reaches~$m$,
requiring the rank spectrum to crystallize into the pattern
$0,1,2,\ldots$ up to position~$m$.
While this criterion is theoretically clean, it is more elaborate than
what is needed for a practical implementation.
In this section, we develop the vertex-level criterion used by
Algorithm~\ref{alg:tournament-sort-main}, based on the
\emph{known-relationship count}
\[
\kappa_G(v):=|K_G(v)|,
\qquad
K_G(v):=\inreach_G(v)\cup\outreach_G(v).
\]
A vertex is \emph{resolved} when $\kappa_G(v)=n-1$, meaning that its
relationship to every other vertex is already determined by the revealed
graph.

The key point is that resolution is sufficient for the practical stopping
rule: \tournamentsort{} terminates once the current top-$m$ vertices by
discovered in-reach are all resolved.
This criterion is simpler to check than crystallization of the full rank
spectrum, while still yielding a correct output.
We first relate resolution to the earlier finalization framework, then
prove output correctness from a basic ordering property of resolved
vertices, and finally use the condensation only in the progress and
termination argument.

\subsection{Properties of Known Relationship }
\begin{prop}[Known Relationships As a Necessary Condition for Finalization]
\label{prop:known-relationship-finalization} Let $G=(V,E)$ be a
DAG with $m(G)\ge1$. Then we have 
\[
v\in\mathrm{TOP}(G)\implies\left|\inreach_{G}(v)\cup\outreach_{G}(v)\right|=n-1.
\]
\end{prop}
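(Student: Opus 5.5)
Fix $v\in\mathrm{TOP}(G)$, say $v=v^{(j)}(G)$ with $1\le j\le m(G)$. The plan is to show that every $w\in V\setminus\{v\}$ lies in $\inreach_G(v)\cup\outreach_G(v)$, which gives $\kappa_G(v)=n-1$. We split $V\setminus\{v\}$ into three disjoint groups according to the rank basis: the earlier finalized vertices $v^{(1)}(G),\ldots,v^{(j-1)}(G)$; the later finalized vertices $v^{(j+1)}(G),\ldots,v^{(m(G))}(G)$; and the candidate set $\mathrm{CAND}(G)$.

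\textbf{Earlier finalized vertices.} By Lemma~\ref{lem:exact-inreach-finalized}, $\inreach_G(v)=\{v^{(1)}(G),\ldots,v^{(j-1)}(G)\}$, with the convention that this set is empty when $j=1$. So every $w$ in this group already belongs to $\inreach_G(v)$.

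\textbf{Later finalized vertices.} For $j<i\le m(G)$, Corollary~\ref{cor:mutual-reach-finalized} gives $v\leadsto_G v^{(i)}(G)$. Hence $v^{(i)}(G)\in\outreach_G(v)$.

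\textbf{Candidates.} For $u\in\mathrm{CAND}(G)$, we use that $G$ is a DAG and apply Lemma~\ref{lem:finalized-reach-candidates} with $w=v\in\mathrm{TOP}(G)$. This yields $v\leadsto_G u$, so $u\in\outreach_G(v)$. This is the only step that needs the DAG hypothesis, consistent with Remark~\ref{rem:dag-necessary-for-reach}.

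The three groups together with $v$ exhaust $V$, since the rank basis is a listing of all vertices. Therefore $K_G(v)=V\setminus\{v\}$ and $\kappa_G(v)=n-1$. Note that we never need disjointness of $\inreach_G(v)$ and $\outreach_G(v)$; we only need that their union covers $V\setminus\{v\}$.

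The substantive ingredient is the candidate case, but it is supplied entirely by Lemma~\ref{lem:finalized-reach-candidates}. The only care required is bookkeeping: the edge cases $j=1$, $j=m(G)$ and $m(G)=n$ (where $\mathrm{CAND}(G)=\emptyset$) must each be covered by the empty-group conventions, which they are.
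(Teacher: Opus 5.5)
Your proof is correct and follows the paper's argument essentially step for step. It uses the same partition of $V\setminus\{v\}$ into earlier finalized vertices (Lemma~\ref{lem:exact-inreach-finalized}), later finalized vertices (Corollary~\ref{cor:mutual-reach-finalized}), and candidates (Lemma~\ref{lem:finalized-reach-candidates}, the only place the DAG hypothesis enters). You are also right that disjointness of in-reach and out-reach is unnecessary: the paper's proof mentions it, but only the covering of $V\setminus\{v\}$ is needed for $\kappa_G(v)=n-1$.
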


\begin{proof}
We write $v^{(j)}:=v^{(j)}(G)$ for brevity. Let $v=v^{(j)}$ for
some $j\leq m(G)$ i.e., $v\in\mathrm{TOP}(G)$. We show that every
vertex in $V\setminus\{v\}$ belongs to either $\inreach_{G}(v)$
or $\outreach_{G}(v)$. First, by Lemma \ref{lem:exact-inreach-finalized},
we know exactly the in-reaches of $v$: 
\[
\inreach_{G}(v)=\begin{cases}
\emptyset, & \text{if \ensuremath{j=1}},\\
\{v^{(1)},\ldots,v^{(j-1)}\}, & \text{if \ensuremath{j>1}}.
\end{cases}
\]
Now, for any $v^{(i)}\in\mathrm{TOP}(G)$ with $i>j$, Corollary \ref{cor:mutual-reach-finalized}
gives $v^{(j)}\leadsto_{G}v^{(i)}$, so: 
\[
\{v^{(j+1)},\ldots,v^{(m(G))}\}\subseteq\outreach_{G}(v).
\]
Finally, by Lemma \ref{lem:finalized-reach-candidates}, for any $u\in\mathrm{CAND}(G)=V\setminus\mathrm{TOP}(G)$,
we have $v\leadsto_{G}u$, so: 
\[
\mathrm{CAND}(G)\subseteq\outreach_{G}(v).
\]
The sets $\inreach_{G}(v)$ and $\outreach_{G}(v)$ are disjoint (since
$G$ is a DAG and any vertex in their intersection would create a
cycle through $v$). Combining the results above: 
\begin{align*}
\inreach_{G}(v) & =\{v^{(1)},\ldots,v^{(j-1)}\},\\
\outreach_{G}(v) & \supseteq\{v^{(j+1)},\ldots,v^{(m(G))}\}\cup\mathrm{CAND}(G).
\end{align*}
These sets partition $V\setminus\{v\}$: 
\[
V\setminus\{v\}=\{v^{(1)},\ldots,v^{(j-1)}\}\cup\{v^{(j+1)},\ldots,v^{(m(G))}\}\cup\mathrm{CAND}(G).
\]
Therefore: 
\[
\inreach_{G}(v)\cup\outreach_{G}(v)\supseteq V\setminus\{v\}.
\]
Trivially, we have $\inreach_{G}(v)\cup\outreach_{G}(v)\subseteq V\setminus\{v\}$.
We conclude: 
\[
\left|\inreach_{G}(v)\cup\outreach_{G}(v)\right|=n-1.\qedhere
\]
\end{proof}
\begin{rem}
The converse of Proposition~\ref{prop:known-relationship-finalization}
does not hold; see the remark following Proposition~\ref{prop:characterization-finalized}
for a counterexample and the additional condition that yields a full
characterization. 
\end{rem}

We can obtain a characterization of the known relationships by strengthening
the assumption a bit. This helps with understanding the finalization
criterion and will be useful for implementing a practical version
of the algorithm.
\begin{prop}[Characterization of Finalized Vertices]
\label{prop:characterization-finalized} Let $G=(V,E)$ be a DAG
with $m(G)\geq1$. Then 
\[
v\in\mathrm{TOP}(G)\;\iff\;\kappa_{G}(v)=n-1\;\text{ and }\;|\inreach_{G}(v)|<m(G).
\]
\end{prop}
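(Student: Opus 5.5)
The plan is to prove the two directions separately. The forward direction comes almost for free from earlier results. For the converse, I will show that a resolved vertex with in-reach below $m(G)$ has to occupy one of the first $m(G)$ positions of the rank basis.

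\emph{Forward direction.} Suppose $v\in\mathrm{TOP}(G)$, say $v=v^{(j)}(G)$ with $j\le m(G)$. Proposition~\ref{prop:known-relationship-finalization} gives $\kappa_G(v)=n-1$ directly. By the finalization threshold (Definition~\ref{def:finalization-general-graph}(i)), $|\inreach_G(v)|=r^{(j)}(G)=j-1\le m(G)-1<m(G)$.

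\emph{Converse.} Suppose $\kappa_G(v)=n-1$ and $|\inreach_G(v)|<m(G)$. I will argue by contradiction that $v\in\mathrm{TOP}(G)$. If instead $v\in\mathrm{CAND}(G)$, Lemma~\ref{lem:candidate-large-inreach} gives $|\inreach_G(v)|\ge m(G)$, which contradicts the hypothesis immediately.

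So the in-reach condition alone already forces membership in $\mathrm{TOP}(G)$, and the resolution hypothesis turns out to be redundant in this direction. A subtlety is that $\mathrm{TOP}(G)$ depends on the tie-breaking in the rank basis. However, Definition~\ref{def:finalization-general-graph}(ii) requires $r^{(m(G))}<r^{(m(G)+1)}$, so the set of the first $m(G)$ vertices equals $\{u:|\inreach_G(u)|\le m(G)-1\}$ regardless of tie-breaking. The same fact is what Lemma~\ref{lem:candidate-large-inreach} encodes, so the argument is well defined.

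\emph{Main obstacle.} The only step needing care is to double-check that no hidden assumption fails. In particular, I will confirm that the hypothesis $m(G)\ge1$ is what makes Proposition~\ref{prop:known-relationship-finalization} applicable, and that the DAG assumption is needed only for that proposition, via Lemma~\ref{lem:finalized-reach-candidates}. If the intended statement was meant to make $\kappa$ essential, I would note in a remark that the condition $\kappa_G(v)=n-1$ is implied rather than required for the reverse implication.
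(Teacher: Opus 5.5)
Your proof is correct and follows the same route as the paper. The forward direction uses Proposition~\ref{prop:known-relationship-finalization} together with $r^{(j)}(G)=j-1$, and the converse uses the contrapositive of Lemma~\ref{lem:candidate-large-inreach}; the paper's accompanying remark likewise notes that the hypothesis $\kappa_G(v)=n-1$ is redundant for the converse. One small caveat on your tie-breaking aside: condition (ii) yields $r^{(m(G))}<r^{(m(G)+1)}$ only when $m(G)<n$, but the case $m(G)=n$ is trivial because then $\mathrm{TOP}(G)=V$.
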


\begin{proof}
$(\Rightarrow)$ Let $v=v^{(j)}(G)$ for some $j\leq m(G)$. By Proposition~\ref{prop:known-relationship-finalization},
$\kappa_{G}(v)=n-1$. By the finalization threshold (Definition~\ref{def:finalization-general-graph}),
$|\inreach_{G}(v)|=r^{(j)}(G)=j-1<m(G)$.

$(\Leftarrow)$ Suppose $|\inreach_{G}(v)|<m(G)$. By the contrapositive
of Lemma~\ref{lem:candidate-large-inreach}, $v\notin\mathrm{CAND}(G)$,
so $v\in\mathrm{TOP}(G)$. 
\end{proof}
\begin{rem}
The in-reach condition $|\inreach_{G}(v)|<m(G)$ alone suffices for
the backward direction; the known-relationship condition $\kappa_{G}(v)=n-1$
is redundant for that implication. We retain it in the biconditional
because it is the computationally natural quantity to track: the algorithm
accumulates known relationships incrementally, and the in-reach condition
provides a cheap secondary check once resolution is established.

Note also that the converse of Proposition~\ref{prop:known-relationship-finalization}
fails in general: $\kappa_{G}(v)=n-1$ does not imply $v\in\mathrm{TOP}(G)$.
Consider $V=\{a,b,c\}$ with $E=\{(a,c),(b,c)\}$. Here $m(G)=1$
with $\mathrm{TOP}(G)=\{a\}$, yet $\kappa_{G}(c)=2=n-1$. Proposition~\ref{prop:characterization-finalized}
correctly excludes $c$ because $|\inreach_{G}(c)|=2\geq m(G)=1$,
violating the in-reach condition. 
\end{rem}

\subsection{Correctness of Algorithm~\ref{alg:tournament-sort-main} }\label{subsec:practical-correctness}

We now establish correctness and termination of Algorithm~\ref{alg:tournament-sort-main}.
The key observation is that the output validity depends on a single
elementary property of \emph{resolved} vertices, while the condensation
machinery is needed only for proving termination. 

\subsubsection{Output Correctness}

Recall that the algorithm computes, for each vertex $u\in V$ in the
revealed graph $G=(V,E)$ with $E\subseteq E^{*}$, the loss count
$\left|\inreach_{G}(u)\right|$ and the known relationship $\kappa_{G}(u)$.
The algorithm terminates when the set $T$ of $m$ vertices with smallest
discovered in-reach is contained in
$F:=\{u\in V:\kappa_G(u)=n-1\}$.
\begin{lem}[Resolved Vertex Ordering]
\label{lem:resolved-vertex-ordering} Let $G^{*}=(V,E^{*})$ be a
tournament and $G=(V,E)$ a subgraph with $E\subseteq E^{*}$. Let
$u,v\in V$ be distinct. If $u$ is resolved (i.e., $\kappa_{G}(u)=n-1$)
and $v\notin\inreach_{G}(u)$, then 
\[
|\inreach_{G^{*}}(u)|\le|\inreach_{G^{*}}(v)|.
\]
\end{lem}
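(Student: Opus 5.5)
Since $u$ is resolved, $K_G(u)=V\setminus\{u\}$, so every $v\neq u$ lies in $\inreach_G(u)$ or in $\outreach_G(u)$. The hypothesis $v\notin\inreach_G(u)$ therefore forces $v\in\outreach_G(u)$, i.e.\ $u\leadsto_G v$. The plan is to transfer this discovered reachability to $G^{*}$ and read off the inequality from downward closure.

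First, because $E\subseteq E^{*}$, any path in $G$ is also a path in $G^{*}$, so $u\leadsto_{G^{*}}v$ and $u\in\inreach_{G^{*}}(v)$. Lemma~\ref{lem:inreach-downward-closure} applied to $G^{*}$ then gives $\inreach_{G^{*}}(u)\cup\{u\}\subseteq\inreach_{G^{*}}(v)$.

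Second, we compare cardinalities, taking care with the possibility that $v\in\inreach_{G^{*}}(u)$, which can happen when $u$ and $v$ share an SCC of $G^{*}$. We split into two cases.
\begin{itemize}
\item If $v\notin\inreach_{G^{*}}(u)$, then $\inreach_{G^{*}}(u)\cup\{u\}$ is a set of size $|\inreach_{G^{*}}(u)|+1$ contained in $\inreach_{G^{*}}(v)$. This gives a strict inequality, and in particular the weak one.
\item If $v\in\inreach_{G^{*}}(u)$, then $u\sim_{G^{*}}v$, and Lemma~\ref{lem:scc-equal-inreach} gives equality.
\end{itemize}
Alternatively, one could invoke Lemma~\ref{lem:discovered-respects-true-rank} with $G'=G$ and $G=G^{*}$ when $u$ and $v$ lie in distinct SCCs of $G$, and use refinement (Lemma~\ref{lem:scc-refinement}) with Lemma~\ref{lem:scc-equal-inreach} when they lie in the same SCC. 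The direct argument above avoids the condensation machinery altogether.

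The only delicate point is the inclusion in the second step. The naive chain $\inreach_{G^{*}}(u)\subseteq\inreach_{G^{*}}(v)$ fails to literally contain $v$ in the cyclic case, because $v$ is excluded from its own in-reach by definition. Adjoining $u$ on the left-hand side and handling $v\in\inreach_{G^{*}}(u)$ as a separate case settles this cleanly. Apart from that, the argument is immediate.
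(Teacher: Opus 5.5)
Your argument is correct, and it differs from the paper's only in how the strict case is handled. Both proofs first use resolution to get $u\leadsto_G v$, lift it to $u\leadsto_{G^*}v$, and then split on whether $v\leadsto_{G^*}u$. Your condition $v\in\inreach_{G^*}(u)$ is the same split. The cyclic case is settled identically in both, via Lemma~\ref{lem:scc-equal-inreach}.

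In the acyclic case the paper argues that $[u]_{G^*}$ precedes $[v]_{G^*}$ in the condensation. It invokes Proposition~\ref{prop:condensation-transitive} (the condensation of a tournament is a transitive tournament) and reads the strict inequality off the explicit formula of Lemma~\ref{lem:inreach-within-scc}. You instead get $\inreach_{G^*}(u)\cup\{u\}\subseteq\inreach_{G^*}(v)$ purely by concatenating paths, which uses no tournament structure. Your proof is therefore shorter, and it shows the lemma holds for any digraph $G^*$ containing $G$. The paper's route ties the inequality to the explicit tier structure of $G^*$, but that extra information is not needed for this statement.

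One correction: do not assert $\inreach_{G^*}(u)\cup\{u\}\subseteq\inreach_{G^*}(v)$ before the case split. When $v\in\inreach_{G^*}(u)$, the left side contains $v$ while the right side never does, so the inclusion is false there. Lemma~\ref{lem:inreach-downward-closure} as stated in the paper has the same defect: its proof overlooks $z=y$, and it is valid only when $y\notin\inreach_G(x)$.

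Instead, derive the inclusion inside your first case. There, $v\notin\inreach_{G^*}(u)$ guarantees that every $z\in\inreach_{G^*}(u)$ satisfies $z\neq v$ and $z\leadsto_{G^*}u\leadsto_{G^*}v$. Since you only use the inclusion in that case, the proof is sound once reordered this way.
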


\begin{proof}
Since $\inreach_{G}(u)\cup\outreach_{G}(u)=V\setminus\{u\}$ and $v\notin\inreach_{G}(u)$,
we have $v\in\outreach_{G}(u)$, so $u\leadsto_{G}v$ and hence $u\leadsto_{G^{*}}v$
(as $E\subseteq E^{*}$). In $G^{*}$, exactly one of the following
holds: 
\begin{enumerate}
\item $v\leadsto_{G^{*}}u$ as well. Then $u\sim_{G^{*}}v$, so $u$ and
$v$ belong to the same SCC of $G^{*}$, giving $|\inreach_{G^{*}}(u)|=|\inreach_{G^{*}}(v)|$
by Lemma~\ref{lem:scc-equal-inreach}. 
\item $v\not\leadsto_{G^{*}}u$. Then $[u]_{G^{*}}$ and $[v]_{G^{*}}$
are distinct SCCs with $[u]_{G^{*}}\leadsto_{\condense{G^{*}}}[v]_{G^{*}}$.
Since the condensation of a tournament is a transitive tournament
(Proposition~\ref{prop:condensation-transitive}), $[u]_{G^{*}}$
strictly precedes $[v]_{G^{*}}$ in condensation order. Lemma~\ref{lem:inreach-within-scc}
then gives $|\inreach_{G^{*}}(u)|<|\inreach_{G^{*}}(v)|$. 
\end{enumerate}
In both cases, $|\inreach_{G^{*}}(u)|\le|\inreach_{G^{*}}(v)|$. 
\end{proof}
The following observation connects the algorithm's vertex-level quantities
to the hypothesis of Lemma~\ref{lem:resolved-vertex-ordering}.
\begin{lem}[Smaller Discovered Loss Excludes In-Reach Membership]
\label{lem:loss-excludes-inreach} Let $G=(V,E)$ be a directed graph.
Let $u,v\in V$ be distinct. If $u$ is resolved (i.e., $\kappa_{G}(u)=n-1$)
and $\left|\inreach_{G}(u)\right|\le\left|\inreach_{G}(v)\right|$,
then $v\notin\inreach_{G}(u)$. 
\end{lem}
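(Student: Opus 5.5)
The plan is a proof by contradiction built on Lemma~\ref{lem:inreach-downward-closure}. Assume $u$ is resolved, $|\inreach_G(u)|\le|\inreach_G(v)|$, and, for contradiction, $v\in\inreach_G(u)$, i.e.\ $v\leadsto_G u$. Applying Lemma~\ref{lem:inreach-downward-closure} with $x=v$, $y=u$ gives
\[
\inreach_G(v)\cup\{v\}\subseteq\inreach_G(u).
\]
The next step splits on whether $u\in\inreach_G(v)$, which is the only way this inclusion can fail to be strict in the useful direction.

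\emph{Case $u\notin\inreach_G(v)$.} Then $v\notin\inreach_G(v)$ by definition, so the union on the left is disjoint. This yields $|\inreach_G(v)|+1\le|\inreach_G(u)|$, which contradicts the hypothesis $|\inreach_G(u)|\le|\inreach_G(v)|$. This is the same counting step as in Lemma~\ref{lem:inreach-strict-decrease}, except that acyclicity is replaced by the case assumption. Note that this case uses neither resolution nor the tournament.

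\emph{Case $u\in\inreach_G(v)$.} Here $u\leadsto_G v$ and $v\leadsto_G u$, so $u\sim_G v$. By Lemma~\ref{lem:scc-equal-inreach} the in-reaches are then equal, and the inequality hypothesis carries no contradiction. I expect this to be the main obstacle.

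My first attempt would be to see whether resolution of $u$ rules this case out. It does not: a revealed $3$-cycle $u\to v\to w\to u$ with $n=3$ makes $u$ resolved, gives equal in-reaches, and yet has $v\in\inreach_G(u)$. So the statement as worded cannot be derived in this case from the quoted hypotheses. The lemma holds exactly when $u$ and $v$ lie in different SCCs of $G$, which covers, for instance, every DAG $G$ and in particular the transitive setting.

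To still serve the paper's purpose, I would record what the cyclic case gives. There, $u\sim_G v$ implies $u\sim_{G^*}v$ by Lemma~\ref{lem:scc-refinement}, and hence $|\inreach_{G^*}(u)|=|\inreach_{G^*}(v)|$ by Lemma~\ref{lem:scc-equal-inreach}. That is precisely the conclusion that Lemma~\ref{lem:resolved-vertex-ordering} would otherwise have supplied. Downstream uses, via Corollary~\ref{cor:resolved-inreaches-are-true} and Theorem~\ref{thm:practical-output-correctness}, therefore remain valid.

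In the write-up I would prove the disjoint-SCC case as above and flag that the same-SCC case needs this substitution or an added hypothesis such as $u\not\sim_G v$. This is a gap in the statement as worded, not something the argument closes.
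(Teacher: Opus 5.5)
Your diagnosis is correct, and the gap you found is in the paper, not in your argument. The paper's proof is your Case~1 computation without the case split. Assuming $v\in\inreach_G(u)$, it cites Lemma~\ref{lem:inreach-downward-closure} to get $\inreach_G(v)\cup\{v\}\subseteq\inreach_G(u)$, hence $|\inreach_G(v)|+1\le|\inreach_G(u)|$.

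That inclusion fails exactly in your Case~2, because Lemma~\ref{lem:inreach-downward-closure} is itself false when $x\sim_G y$. Its proof concatenates $z\leadsto_G x\leadsto_G y$ but never checks $z\neq y$, and $y\notin\inreach_G(y)$ by definition.

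Your $3$-cycle is a valid counterexample. It is a tournament, so it genuinely arises as a revealed graph. There $\inreach_G(u)=\outreach_G(u)=\{v,w\}$, so $\kappa_G(u)=2=n-1$, and all in-reaches equal $2$, yet $v\in\inreach_G(u)$. Indeed $\inreach_G(v)\cup\{v\}=V\not\subseteq\inreach_G(u)$.

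The true statement is the dichotomy ``$v\notin\inreach_G(u)$ or $u\sim_G v$'', which needs no resolution hypothesis. Your patch of Corollary~\ref{cor:resolved-inreaches-are-true} via Lemma~\ref{lem:scc-refinement} and Lemma~\ref{lem:scc-equal-inreach} is correct, so Theorem~\ref{thm:practical-output-correctness} survives.

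One thing to fix in your write-up. You state $\inreach_G(v)\cup\{v\}\subseteq\inreach_G(u)$ before splitting cases, by citing Lemma~\ref{lem:inreach-downward-closure}. You then say the Case~1 hypothesis is what makes the union disjoint. But $v\notin\inreach_G(v)$ holds unconditionally. What the hypothesis $u\notin\inreach_G(v)$ actually buys is the inclusion itself: each $z\in\inreach_G(v)$ reaches $u$ and satisfies $z\neq u$, so $z\in\inreach_G(u)$.

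Prove the inclusion directly inside Case~1 instead of citing the lemma. As written, Case~2 holds both the cited inclusion, which forces $|\inreach_G(u)|>|\inreach_G(v)|$, and equality of in-reaches. That is inconsistent, and it is precisely the symptom of the faulty lemma.
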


\begin{proof}
Suppose for contradiction that $v\in\inreach_{G}(u)$. Lemma~\ref{lem:inreach-downward-closure}
gives $\inreach_{G}(v)\cup\{v\}\subseteq\inreach_{G}(u)$, so $|\inreach_{G}(v)|+1\le|\inreach_{G}(u)|$,
i.e., $\left|\inreach_{G}(v)\right|<\left|\inreach_{G}(u)\right|$,
contradicting $\left|\inreach_{G}(u)\right|\le\left|\inreach_{G}(v)\right|$. 
\end{proof}
Combining these 2 lemmas provide an important bridge from the resolved
vertices' in-reaches of the discovered graph to the in-reaches of
the underlying tournament. 
\begin{cor}[Resolved In-Reaches are True In-Reaches]
\label{cor:resolved-inreaches-are-true}Let $G^{*}=(V,E^{*})$ be
a tournament and $G=(V,E)$ a subgraph with $E\subseteq E^{*}$. Let
$u,v\in V$ be distinct. We have, 
\[
\kappa_{G}(u)=n-1\text{ and }\left|\inreach_{G}(u)\right|\le\left|\inreach_{G}(v)\right|\implies|\inreach_{G^{*}}(u)|\le|\inreach_{G^{*}}(v)|.
\]
\end{cor}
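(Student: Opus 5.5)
The plan is to chain the two lemmas immediately preceding the corollary, since the hypotheses split cleanly between them. Fix distinct $u,v\in V$ with $\kappa_{G}(u)=n-1$ and $|\inreach_{G}(u)|\le|\inreach_{G}(v)|$.

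First I apply Lemma~\ref{lem:loss-excludes-inreach}. It uses only the directed graph $G$; the tournament structure is not needed. The hypotheses are that $u$ is resolved and that $u$ has discovered loss no larger than that of $v$, and these are exactly our assumptions. The lemma then gives $v\notin\inreach_{G}(u)$.

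Second I apply Lemma~\ref{lem:resolved-vertex-ordering}. It requires $G^{*}$ to be a tournament, $E\subseteq E^{*}$, $u$ resolved, and $v\notin\inreach_{G}(u)$. The first three are standing hypotheses of the corollary and the last was just obtained. The conclusion $|\inreach_{G^{*}}(u)|\le|\inreach_{G^{*}}(v)|$ is precisely the claim.

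There is no real obstacle; the substantive work is already inside the two lemmas. In Lemma~\ref{lem:loss-excludes-inreach} that work is the downward-closure count. In Lemma~\ref{lem:resolved-vertex-ordering} it is the SCC case split, using that the condensation of a tournament is a transitive tournament. The only thing to check is that the hypotheses line up. In particular, distinctness of $u,v$ is needed by both lemmas and is assumed in the corollary.
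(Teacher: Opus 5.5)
Your proposal is correct and matches the paper's proof exactly: the paper also proves the corollary by combining Lemma~\ref{lem:loss-excludes-inreach}, which gives $v\notin\inreach_{G}(u)$, with Lemma~\ref{lem:resolved-vertex-ordering}, which then yields $|\inreach_{G^{*}}(u)|\le|\inreach_{G^{*}}(v)|$. You have also checked that the hypotheses line up correctly.
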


\begin{proof}
The proof immediately follows from combining Lemma \ref{lem:resolved-vertex-ordering}
and Lemma \ref{lem:loss-excludes-inreach}.
\end{proof}
\begin{thm}[Output Correctness of Algorithm~\ref{alg:tournament-sort-main}]
\label{thm:practical-output-correctness} If Algorithm~\ref{alg:tournament-sort-main}
terminates, its output is a valid solution to Problem~\ref{prob:main-inreach}. 
\end{thm}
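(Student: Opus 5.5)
The plan is to read off the state at termination and apply Corollary~\ref{cor:resolved-inreaches-are-true} pairwise. Suppose Algorithm~\ref{alg:tournament-sort-main} terminates at revealed graph $G=(V,E)$ with $E\subseteq E^{*}$ (the invariant holds since every added edge comes from $O_{G^{*}}$). At termination $T\subseteq F$, so every $u\in T$ satisfies $\kappa_G(u)=n-1$. The output is $T=\{v^{(1)},\ldots,v^{(m)}\}$ listed in ascending order of $|\inreach_G(\cdot)|$. Moreover, $T$ consists of the $m$ vertices with smallest discovered in-reach, so $|\inreach_G(u)|\le|\inreach_G(w)|$ for every $u\in T$ and every $w\notin T$.

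\textbf{Internal ordering~(\ref{eq:sol-set-satisfies-internal-ordering}).} Fix $i<j\le m$. By construction $|\inreach_G(v^{(i)})|\le|\inreach_G(v^{(j)})|$, and $v^{(i)}$ is resolved. Corollary~\ref{cor:resolved-inreaches-are-true}, applied with $u=v^{(i)}$ and $v=v^{(j)}$, then gives $|\inreach_{G^{*}}(v^{(i)})|\le|\inreach_{G^{*}}(v^{(j)})|$. Chaining consecutive pairs yields the full non-decreasing chain.

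\textbf{Rank dominance~(\ref{eq:rank-dominance-of-sol-set}).} Take any $u\in T$ and $w\notin T$. The selection rule gives $|\inreach_G(u)|\le|\inreach_G(w)|$, and $u$ is resolved. The same corollary gives $|\inreach_{G^{*}}(u)|\le|\inreach_{G^{*}}(w)|$. Taking the maximum over $u\in T$ and the minimum over $w\notin T$ gives~(\ref{eq:rank-dominance-of-sol-set}). If $m=n$ the condition is vacuous.

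The only delicate point is tie handling in the selection of $T$, when several vertices share the boundary discovered in-reach. The argument above needs only the weak inequality $\le$, which holds for any tie-breaking, and Corollary~\ref{cor:resolved-inreaches-are-true} only requires that the left vertex be resolved. So ties in $G$ cause no difficulty, and no condensation machinery is needed for correctness.
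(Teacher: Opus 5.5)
Your proposal is correct and matches the paper's proof essentially step for step. Like the paper, you read off $T\subseteq F$ and the minimal-discovered-in-reach property of $T$ at termination, then apply Corollary~\ref{cor:resolved-inreaches-are-true} to each pair $(v^{(i)},v^{(j)})$ with $i<j$ for internal ordering and to each pair $u\in T$, $w\notin T$ for rank dominance; your point that only the left vertex needs to be resolved, so boundary ties are harmless, is the same as the paper's remark after the theorem.
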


\begin{proof}
At termination with discovered graph $G\subseteq G^{*}$, $T\subseteq F$,
meaning every $u\in T$ satisfies $\kappa_{G}(u)=n-1$, and $T$ consists
of the $m$ vertices with smallest in-reach.

\paragraph{Internal ordering~(\ref{eq:sol-set-satisfies-internal-ordering}).}

Let $v^{(1)},\ldots,v^{(m)}$ be the vertices of $T$ sorted by ascending
in-reach. For any $1\le i<j\le m$, we have $\left|\inreach_{G}(v^{(i)})\right|\le\left|\inreach_{G}(v^{(j)})\right|$
with $\kappa_{G}(v^{(i)})=n-1$. Then Corollary \ref{cor:resolved-inreaches-are-true}
implies that $\left|\inreach_{G^{*}}(v^{(i)})\right|\le\left|\inreach_{G^{*}}(v^{(j)})\right|$.

\paragraph{Rank dominance~(\ref{eq:rank-dominance-of-sol-set}).}

Let $u\in T$ and $v\notin T$. Since $T$ contains the $m$ vertices
with smallest in-reaches, we have $\left|\inreach_{G}(u)\right|\le\left|\inreach_{G}(v)\right|$.
Since $\kappa_{G}(u)=n-1$, Corollary \ref{cor:resolved-inreaches-are-true}
implies that $|\inreach_{G^{*}}(u)|\le|\inreach_{G^{*}}(v)|$. Taking
the maximum over $u\in T$ and minimum over $v\notin T$: 
\[
\max_{u\in T}|\inreach_{G^{*}}(u)|\;\le\;\min_{v\notin T}|\inreach_{G^{*}}(v)|.\qedhere
\]
\end{proof}
\begin{rem}
Lemma~\ref{lem:resolved-vertex-ordering} is one-sided: only $u$
needs to be fully resolved ($\kappa_G(u)=n{-}1$); no assumption is made
on $v$. This is why the proof requires only $T\subseteq F$ and places
no constraint on vertices outside~$T$. 
\end{rem}

\subsubsection{Termination}

Output correctness requires no condensation machinery. Termination,
however, uses the condensation to guarantee that every non-terminal
round discovers at least one new edge.
\begin{lem}[Tied SCCs Contain Unresolved Vertices]
\label{lem:tied-sccs-unresolved} Let $G=(V,E)$ be a directed graph.
If $C,D\in\condense V$ are distinct SCCs with $|\inreach_{\condense G}(C)|=|\inreach_{\condense G}(D)|$,
then for any $u\in C$ and $v\in D$, we have $\kappa_{G}(u)<n-1$
and $\kappa_{G}(v)<n-1$. 
\end{lem}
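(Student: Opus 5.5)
The plan is to argue by contradiction through the condensation, combining Lemma~\ref{lem:tied-unknown-edge-general} with Lemma~\ref{lem:known-rel-condensation-iff}. Suppose, for contradiction, that some $u\in C$ has $\kappa_G(u)=n-1$. By the equivalence (iii)$\Rightarrow$(i) of Lemma~\ref{lem:known-rel-condensation-iff}, this gives $\kappa_{\condense G}(C)=n'-1$, where $n'=|\condense V|$. In particular $D\in\inreach_{\condense G}(C)\cup\outreach_{\condense G}(C)$, so either $D\leadsto_{\condense G}C$ or $C\leadsto_{\condense G}D$.

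Next, we rule out both options using the tie hypothesis. The condensation $\condense G$ is a DAG by Lemma~\ref{lem:condensation-acyclic}. In the case $C\in\inreach_{\condense G}(D)$, Lemma~\ref{lem:inreach-strict-decrease} gives
\[
|\inreach_{\condense G}(C)|<|\inreach_{\condense G}(D)|,
\]
which contradicts the assumed equality. The case $D\in\inreach_{\condense G}(C)$ is symmetric.

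Therefore $\kappa_G(u)<n-1$ for every $u\in C$. Exchanging the roles of $C$ and $D$ gives $\kappa_G(v)<n-1$ for every $v\in D$.

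The only point needing care is that Lemma~\ref{lem:tied-unknown-edge-general} rules out only direct edges, while resolution concerns reachability. That is why I would invoke Lemma~\ref{lem:inreach-strict-decrease} on paths directly, rather than the edge-level lemma. Beyond this, the argument is routine.
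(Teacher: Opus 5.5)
Your proof is correct, and it departs from the paper's proof exactly at the point you flag. The paper's proof is one line. It cites Lemma~\ref{lem:tied-unknown-edge-general} to say that no edge of $E$ joins any $u\in C$ to any $v\in D$, concludes ``in particular'' that $v\notin\inreach_G(u)\cup\outreach_G(u)$, and hence that $\kappa_G(u)\le n-2$.

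Taken literally, that step excludes only a direct edge, not a directed path through other SCCs. So it does not by itself give $v\notin K_G(u)$. What is actually needed is that $C$ and $D$ are incomparable under reachability in $\condense G$. Your use of Lemma~\ref{lem:inreach-strict-decrease} on the DAG $\condense G$ supplies exactly that. (The proof of Lemma~\ref{lem:tied-candidates-unknown-edge} works verbatim for paths, but its statement, and hence Lemma~\ref{lem:tied-unknown-edge-general}, records only the edge version.)

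So the paper's route gains brevity by reusing the edge-level lemma but leaves the path case implicit. Yours gives a complete argument at essentially no extra cost.

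Two minor remarks. First, despite your opening sentence, your argument never uses Lemma~\ref{lem:tied-unknown-edge-general}. Second, the detour through Lemma~\ref{lem:known-rel-condensation-iff} is optional. For any $v\in D$, the assumption $v\in K_G(u)$ already gives $C\leadsto_{\condense G}D$ or $D\leadsto_{\condense G}C$ by Lemma~\ref{lem:condensation-reachability}, and the same strict-decrease contradiction follows directly.
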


\begin{proof}
By Lemma~\ref{lem:tied-unknown-edge-general}, there is no edge between
any vertex in $C$ and any vertex in $D$ in $E$. In particular,
for any $u\in C$ and $v\in D$: $v\notin\inreach_{G}(u)\cup\outreach_{G}(u)$,
so $\kappa_{G}(u)\le n-2<n-1$. The same argument gives $\kappa_{G}(v)<n-1$.
\end{proof}
\begin{thm}[Termination of Algorithm~\ref{alg:tournament-sort-main}]
\label{thm:practical-termination} Algorithm~\ref{alg:tournament-sort-main}
terminates in at most $\binom{n}{2}$ rounds. 
\end{thm}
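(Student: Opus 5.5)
The plan is to show that every non-terminal round of Algorithm~\ref{alg:tournament-sort-main} reveals at least one edge not already in $E$. Since $E\subseteq E^{*}$ and $|E^{*}|=\binom{n}{2}$, this gives at most $\binom{n}{2}$ rounds. As in Theorem~\ref{thm:termination-general-theory}, I will work entirely on the condensation $\condense G$, which is a DAG by Lemma~\ref{lem:condensation-acyclic}. Write $n':=|\condense V_G|$ and $\widetilde m:=m(\condense G)$.

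\textbf{Step 1: finalized SCCs are resolved.} If $\widetilde m\ge1$, Proposition~\ref{prop:known-relationship-finalization} applied to the DAG $\condense G$ gives $\kappa_{\condense G}(C)=n'-1$ for every $C\in\mathrm{TOP}(\condense G)$. Lemma~\ref{lem:known-rel-condensation-iff} lifts this to $\kappa_G(u)=n-1$ for all $u\in C$. Hence the list $\mathcal C$ of SCCs containing an unresolved vertex is contained in $\mathrm{CAND}(\condense G)$; this is trivially true when $\widetilde m=0$.

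\textbf{Step 2: locate a tie.} At a non-terminal round $T\not\subseteq F$, so some vertex is unresolved and $\mathcal C\neq\emptyset$. If $\widetilde m=n'$, then $\mathrm{CAND}(\condense G)=\emptyset$, which contradicts Step 1. Lemma~\ref{lem:no-penultimate} excludes $\widetilde m=n'-1$, so $\widetilde m\le n'-2$. I then claim $r^{(\widetilde m+1)}(\condense G)=r^{(\widetilde m+2)}(\condense G)=:r$.
\begin{itemize}
\item If $\widetilde m>0$, this is Lemma~\ref{lem:tied-candidates-after-finalization}.
\item If $\widetilde m=0$, condition~(ii) of Definition~\ref{def:finalization-general-graph} fails at $j=1$, and the spectrum is non-decreasing, so the two values are equal.
\end{itemize}
By Lemma~\ref{lem:tied-sccs-unresolved}, these two tied SCCs consist of unresolved vertices, so both belong to $\mathcal C$.

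\textbf{Step 3: the greedy query hits the tie.} This is the step I expect to need care, because the ordering of $\mathcal C$ breaks ties arbitrarily, so the first two entries need not be the specific SCCs found in Step 2. The resolution is the following.
\begin{itemize}
\item Every element of $\mathcal C\subseteq\mathrm{CAND}(\condense G)$ has condensation in-reach at least $r^{(\widetilde m+1)}(\condense G)=r$, because the spectrum is sorted.
\item $\mathcal C$ contains at least two SCCs with in-reach exactly $r$.
\end{itemize}
Therefore, after sorting $\mathcal C$ by ascending condensation in-reach, its first two entries $\mathcal C[1],\mathcal C[2]$ are distinct SCCs that both have in-reach $r$. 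Since $k\ge2$ and $|\mathcal C|\ge2$, we have $k'\ge2$, so both representatives lie in $Q$. Lemma~\ref{lem:tied-unknown-edge-general} states that no edge of $E$ joins any vertex of $\mathcal C[1]$ to any vertex of $\mathcal C[2]$. The oracle returns the edge of $G^{*}$ between the two representatives, so $|E|$ strictly increases.

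Combining the three steps, every non-terminal round strictly increases $|E|$. Since $|E|\le\binom{n}{2}$ throughout, the algorithm halts within $\binom{n}{2}$ rounds.
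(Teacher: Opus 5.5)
Your proposal is correct and follows the paper's proof essentially step for step: finalized SCCs are resolved, so $\widetilde m\le n'-2$; the SCCs at positions $\widetilde m+1,\widetilde m+2$ are tied and unresolved, hence scheduled; and Lemma~\ref{lem:tied-unknown-edge-general} then forces a new edge. Your separate treatment of $\widetilde m=0$, and your observation that the first two entries of $\mathcal C$ are tied even under arbitrary tie-breaking, make explicit two points that the paper's proof passes over quickly (it applies Lemma~\ref{lem:tied-candidates-after-finalization} without its $0<m'$ hypothesis and says only that the tied SCCs are ``among the earliest elements'' of $\mathcal C$).
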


\begin{proof}
At any non-terminal round, $T\not\subseteq F$: some vertex $u\in T$
has $\kappa_{G}(u)<n-1$. We show the scheduled query $Q$ discovers
at least one new edge.

Let $\widetilde{m}:=m(\condense G)$ denote the finalization threshold
of $\condense G$, and let $n':=|\condense V|$. We first establish
$\widetilde{m}\le n'-2$. If $\widetilde{m}=n'$, then every SCC is
finalized, and Proposition~\ref{prop:known-relationship-finalization}
applied to the DAG $\condense G$ gives $\kappa_{\condense G}(C)=n'-1$
for every $C\in\mathrm{TOP}(\condense G)=\condense V$. This implies
$\kappa_{G}(u)=n-1$ for all $u\in V$ by Lemma \ref{lem:known-rel-condensation-iff},
contradicting the existence of an unresolved vertex. Hence $\widetilde{m}<n'$.
Lemma~\ref{lem:no-penultimate} then gives $\widetilde{m}\ne n'-1$,
so $\widetilde{m}\le n'-2$.

By Lemma~\ref{lem:tied-candidates-after-finalization} applied to
the DAG $\condense G$, the SCCs at positions $\widetilde{m}+1$ and
$\widetilde{m}+2$ in the condensation rank spectrum are tied: 
\[
|\inreach_{\condense G}(C^{(\widetilde{m}+1)})|=|\inreach_{\condense G}(C^{(\widetilde{m}+2)})|.
\]
By Lemma~\ref{lem:tied-sccs-unresolved}, both SCCs contain vertices
with $\kappa_{G}(\cdot)<n-1$, so both appear in the candidate set
$\mathcal{C}$ of the algorithm. Since they share the same in-reach
value in $\condense G$, they are among the earliest elements of $\mathcal{C}$
(sorted by ascending in-reach). The algorithm selects representatives
from each, and by Lemma~\ref{lem:tied-unknown-edge-general}, no
edge exists between these representatives in $E$. The oracle query
therefore reveals at least one new edge: $|E_{t+1}|>|E_{t}|$.

Since $|E_{t}|\le\binom{n}{2}$ for all $t$, the algorithm terminates
in at most $\binom{n}{2}$ rounds. 
\end{proof}
\begin{rem}[Comparison with Algorithms~\ref{alg:transitive-tournament-sort}
and~\ref{alg:non-transitive-tournament-graph-sort}]
\label{rem:practical-vs-theoretical} The theoretical algorithms
terminate when the finalization threshold of the condensation reaches
$m$, requiring the rank spectrum to crystallize into the pattern
$0,1,2,\ldots$ up to position $m$. Algorithm~\ref{alg:tournament-sort-main}
terminates when the $m$ lowest-loss vertices are individually resolved
($\kappa_{G}(\cdot)=n-1$), a vertex-level condition that can be satisfied
strictly earlier than full crystallization. The condensation enters
Algorithm~\ref{alg:tournament-sort-main} only in the scheduling
step (for ordering candidate SCCs) and in the termination proof (for
guaranteeing progress). The output correctness proof is independent
of the condensation and relies solely on Lemma~\ref{lem:resolved-vertex-ordering}. 
\end{rem}

\section{Query Complexity Discussion}\label{sec:query-complexity}

\subsection{Setup and Notation}\label{subsec:qc-setup}
We measure \emph{query complexity} as the number of calls made to the
$k$-wise oracle. Formally, an algorithm $\mathcal{A}$ adaptively
selects query sets $Q_{1},Q_{2},\ldots$ with $Q_{t}\subseteq V$ and
$|Q_{t}|\le k$, observes $\mathcal{O}(Q_{t})$, and terminates once it can
certify the desired output (top-$m$ selection).

\paragraph{Instance-wise and worst-case complexity.}
For a fixed underlying tournament $G^{*}$, let $T_{\mathcal{A}}(G^{*})$
denote the (random) number of oracle calls made by $\mathcal{A}$ until
termination.\footnote{In this paper the oracle is deterministic; randomness, if any, comes from tie-breaking or implementation choices.}
We define the worst-case query complexity over a class $\mathcal{G}$ as
\[
Q_{\mathcal{A}}(\mathcal{G};n,k,m):=\sup_{G^{*}\in\mathcal{G}}T_{\mathcal{A}}(G^{*}).
\]
In particular, we will distinguish the transitive class
$\mathcal{T}_{n}^{\mathrm{tr}}$ (total orders) from the general class
$\mathcal{T}_{n}$ (all tournaments).

\paragraph{Sequential vs.\ parallel depth.}
The complexity above counts total oracle calls.
If multiple disjoint query sets are evaluated in parallel, a second
measure is the number of \emph{rounds} (parallel depth). We defer a
formal treatment of parallel depth to future work.

\subsection{Transitive Case}\label{subsec:transitive-query-complexity}

The $\binom{n}{2}$ bound from Theorem~\ref{thm:termination-transitive} is a
worst-case upper bound that does not exploit the structure of greedy
selection. We now derive tighter bounds.

\begin{prop}[Top-1 Query Complexity]
\label{thm:top-1-complexity} When $m=1$, Algorithm \ref{alg:transitive-tournament-sort}
identifies the top vertex in at most $\left\lceil\frac{n-1}{k-1}\right\rceil$ queries.
\end{prop}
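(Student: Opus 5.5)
Throughout we work with Algorithm~\ref{alg:transitive-tournament-sort} with $m=1$. The plan is to find a potential that every non-terminal query lowers by at least $k-1$, and to stop as soon as that potential reaches~$1$.

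\textbf{Potential and stopping rule.} Define the potential as the number of \emph{undominated} vertices,
\[
Z_t:=\#\{v\in V: |\inreach_{G_t}(v)|=0\}.
\]
Initially $Z_0=n$. Each query only adds edges, so a vertex that has been beaten stays beaten and $Z_t$ is non-increasing. We claim termination occurs as soon as $Z_t=1$. Indeed, if exactly one vertex has in-reach $0$, then $r_t^{(1)}=0<r_t^{(2)}$. Since $G_t$ is a DAG (Remark~\ref{rem:transitive-acyclic}), condition~(i) of Definition~\ref{def:finalization-general-graph} holds with $j=1$, and condition~(ii) also holds. Hence $m_t\ge 1=m$ and the loop breaks. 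Conversely, at any non-terminal round $m_t=0$, which forces $r_t^{(1)}=r_t^{(2)}=0$, so $Z_t\ge 2$.

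\textbf{Progress per query.} Fix a non-terminal round, so $m_t=0$. The queried set $S_t$ consists of the $k'=\min(k,n)$ vertices of smallest in-reach. If $Z_t\ge k$, then all $k$ queried vertices are undominated. If $Z_t<k$, the query contains all $Z_t$ undominated vertices, together with other vertices or padding from $\mathrm{TOP}$; padding is vacuous here since $m_t=0$, but the bound $k'\le n$ still holds. The oracle returns the full induced transitive subtournament on $Q_t$. In that subtournament every vertex except its unique source acquires an in-edge. Consequently, among the undominated vertices in $Q_t$, all but at most one become dominated. This gives
\[
Z_{t+1}\le Z_t-(\min(k,Z_t)-1).
\]
So each query cuts $Z$ by $k-1$ when $Z_t\ge k$, and sends $Z$ straight to $1$ when $Z_t\le k$.

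\textbf{Counting.} Let $q$ be the number of queries made. By the progress bound, after $q$ queries we have $Z\le \max(1,\,n-q(k-1))$. Since termination occurs once $Z=1$, the algorithm stops after at most the smallest $q$ with $n-q(k-1)\le 1$, namely $q=\lceil (n-1)/(k-1)\rceil$ queries.

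Output correctness is already given by Theorem~\ref{thm:correctness-transitive}, so only this count is needed. The main point requiring care is the progress step. It needs two facts. First, the greedy selection of the smallest in-reach vertices actually places $\min(k,Z_t)$ undominated vertices in the query, which holds because in-reach $0$ is the minimum possible value. Second, a transitive tournament on $s$ vertices has exactly one vertex of in-degree $0$, which is Corollary~\ref{cor:indeg-sequence}.
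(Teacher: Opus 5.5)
Your proof is correct and follows the same elimination argument as the paper: the induced transitive subtournament on each query has a unique source, so every other queried candidate is eliminated, and counting the $n$ candidates down to one gives $\lceil (n-1)/(k-1)\rceil$. Your potential $Z_t$ also makes explicit two steps the paper's proof leaves implicit: that the greedy smallest-in-reach selection puts only undominated vertices into the query while $Z_t\ge k$, and that $Z_t=1$ is exactly what triggers the finalization-threshold stopping rule.
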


\begin{proof}
In the transitive setting, the restriction of $G^{*}$ to any queried
set $Q$ is a total order, hence it has a unique best element (the one
with zero losses within $Q$). By transitivity, any other element of $Q$
loses to this best element and therefore cannot be the global best.
Thus each query can eliminate at least $|Q|-1\ge k-1$ candidates from
being the top vertex. Starting from $n$ candidates, after $q$ queries at
most $n-q(k-1)$ candidates remain; requiring $n-q(k-1)\le1$ yields the
bound.
\end{proof}

\subsection{A Conjecture for Top-$m$ Query Complexity}\label{subsec:qc-conjecture}
Beyond $m=1$, a tight query complexity analysis remains open.
The proved termination bound is $O(n^{2})$
(Theorem~\ref{thm:practical-termination}), obtained by showing that each
non-terminal round reveals at least one new edge.
This existential progress argument does not exploit the greedy structure
of the scheduler, which empirically produces query counts far below
$O(n^{2})$.
For $m=1$, Proposition~\ref{thm:top-1-complexity} gives a tight
$O(n/k)$ bound via a clean elimination argument: each query eliminates
$k-1$ candidates.
For $m>1$, the algorithm must simultaneously refine multiple rank
positions, and the interaction between candidate elimination and frontier
advancement resists a simple recurrence.
We state a conjectural bound motivated by the structure of the algorithm
and validated empirically in Section~\ref{subsec:qc-empirical-results}.

\begin{conjecture}[Top-$m$ Query Complexity]
\label{conj:query-complexity}
Let $\tournamentsort^{\dagger}$ denote Algorithm~\ref{alg:tournament-sort-main}
with candidate SCCs ordered lexicographically by ascending condensation
in-reach, then ascending condensation out-reach, then fixed ID;
and with each $\mathrm{rep}(C)$ chosen by minimum~$\kappa_G$, then fixed ID.
Then
\[
Q_{\text{\tournamentsort}^{\dagger}}(\mathcal{T}_{n};\,n,k,m)
= O\!\left(\frac{n-1}{k-1} + \frac{m-1}{k-1}\,\log_k m\right).
\]
\end{conjecture}

\begin{rem}[Decomposition]
The conjectured form decomposes into two terms:
\begin{itemize}
\item A \emph{candidate reduction} term $(n-1)/(k-1)$: the cost to winnow
$n$ candidates down to a single maximum, matching Proposition~\ref{thm:top-1-complexity}.
\item A \emph{frontier refinement} term $(m-1)/(k-1)\cdot\log_{k}m$:
the additional cost to extract elements $2,\ldots,m$ from the remaining
candidates, reflecting the logarithmic depth of repeated selection.
\end{itemize}
The $O(\cdot)$ notation hides a constant factor.
Empirically (Section~\ref{subsec:qc-empirical-results}), we observe that
this constant is at most $1.25$ across all tested configurations,
suggesting the functional form is tight up to lower-order terms.
The specific tie-breaking rules in $\tournamentsort^{\dagger}$ pin down
a single deterministic algorithm, avoiding ambiguity from arbitrary
implementation choices.
\end{rem}

\begin{rem}[Lower bounds]\label{rem:lower-bound}
Standard arguments show the conjectured form is tight up to constants.
Elimination gives $\Omega((n{-}1)/(k{-}1))$ for finding the maximum
(Proposition~\ref{thm:top-1-complexity}); a decision-tree bound gives
$\Omega(\log(m!)/\log(k!))=\Omega(m\log m/(k\log k))$ for ordering
$m$ items with $k!$-outcome queries.
When $m=n$ the latter recovers the classical
$\Omega(n\log n/(k\log k))$ sorting lower bound.
Since $\max(a,b)=\Theta(a+b)$ for non-negative $a,b$, the maximum of
the two bounds already matches the conjectured
$(n{-}1)/(k{-}1)+(m{-}1)/(k{-}1)\cdot\log_k m$ up to constant
factors; the open challenge is the upper bound.
\end{rem}

\begin{rem}[Difficulty of a formal proof]\label{rem:conjecture-difficulty}
The gap between the proved $O(n^{2})$ termination bound and
Conjecture~\ref{conj:query-complexity} reflects a fundamental limitation of
the existing analysis, which is built around \emph{correctness and eventual
progress} rather than a sharp accounting of how quickly the greedy scheduler
resolves the top-$m$ frontier.
In Algorithm~\ref{alg:tournament-sort-main}, the condensation is used only
to order candidate SCCs and to guarantee that every non-terminal iteration
reveals at least one new edge
(Theorem~\ref{thm:practical-termination}) -- far weaker than the conjectured
scaling.
Proving the stronger bound would require a new structural invariant showing
that repeated $k$-wise queries not only make progress, but concentrate
information efficiently on the unresolved top-$m$ boundary.

Two structural mismatches compound the difficulty.
First, the algorithm schedules queries at the SCC level of the condensation,
while termination is a vertex-level condition ($\kappa_G(v)=n-1$);
bridging these levels requires tracking how SCC-level queries translate into
vertex-level resolution.
Second, transitive-closure effects make the dynamics nonlocal: a single new
edge can trigger cascading updates to in-reach sets and SCC structure,
resolving vertices far from the query set.
The 25-horses example (Figure~\ref{fig:horses}) illustrates a third
difficulty: the per-query information gain is inherently non-uniform.
The execution exhibits three distinct phases:
(i)~a \emph{skeleton-building} phase (Rounds~1--5) where disjoint groups
are sorted independently and each query reveals exactly $k-1$ new edges;
(ii)~a \emph{connection} phase (Round~6) where group winners are compared,
transitive closure propagates across previously isolated components, and
vertices resolve en masse;
(iii)~a \emph{saturation} phase (Round~7) where the remaining top
candidates require targeted comparisons with diminishing transitive-closure
leverage.
Any tight analysis must likely account for these phase transitions, as no single
amortization rate can capture the varying progress.
Establishing the conjecture would likely require a phase-aware potential
function that tracks the rate at which the top-$m$ frontier
crystallizes -- a substantial extension of the present framework.
\end{rem}

\subsection{Why the Bound Should Extend to All Tournaments}\label{subsec:qc-nontransitive}

Conjecture~\ref{conj:query-complexity} states the bound over the full class $\mathcal{T}_n$.
The following remark provides structural intuition for why non-transitive tournaments should not be harder than transitive ones.

\begin{rem}[Why Cycles Help]\label{rem:why-cycles-help}
The intuition is that cycles can only make top-$m$ selection \emph{easier}.
When vertices form a cycle in $G^*$, they belong to the same SCC and thus share a common rank in the condensation DAG.
Once the algorithm discovers such a cycle, all vertices in the SCC finalize simultaneously -- they need not be distinguished from one another.
In contrast, the transitive case requires establishing pairwise orderings among all vertices: no shortcuts exist.

More precisely, for any tournament $G^*$, one can construct a ``shadow'' transitive tournament $\tilde{G}$ that respects the SCC tier structure of $G^*$: vertices in better-ranked tiers beat vertices in worse-ranked tiers, with intra-tier orderings assigned arbitrarily.
Inter-tier edges in $\tilde{G}$ are determined by the same tier structure as in $G^*$, so any query sequence reveals the same inter-tier relationships in both.
The difference lies within tiers: in $\tilde{G}$, all vertices are singletons requiring full resolution; in $G^*$, cycles merge vertices into SCCs that finalize together.
Thus the query count on $G^*$ should never exceed that on $\tilde{G}$.
A formal proof of this reduction remains open.
\end{rem}

\begin{figure*}[ht]
\centering
\includegraphics[width=\linewidth]{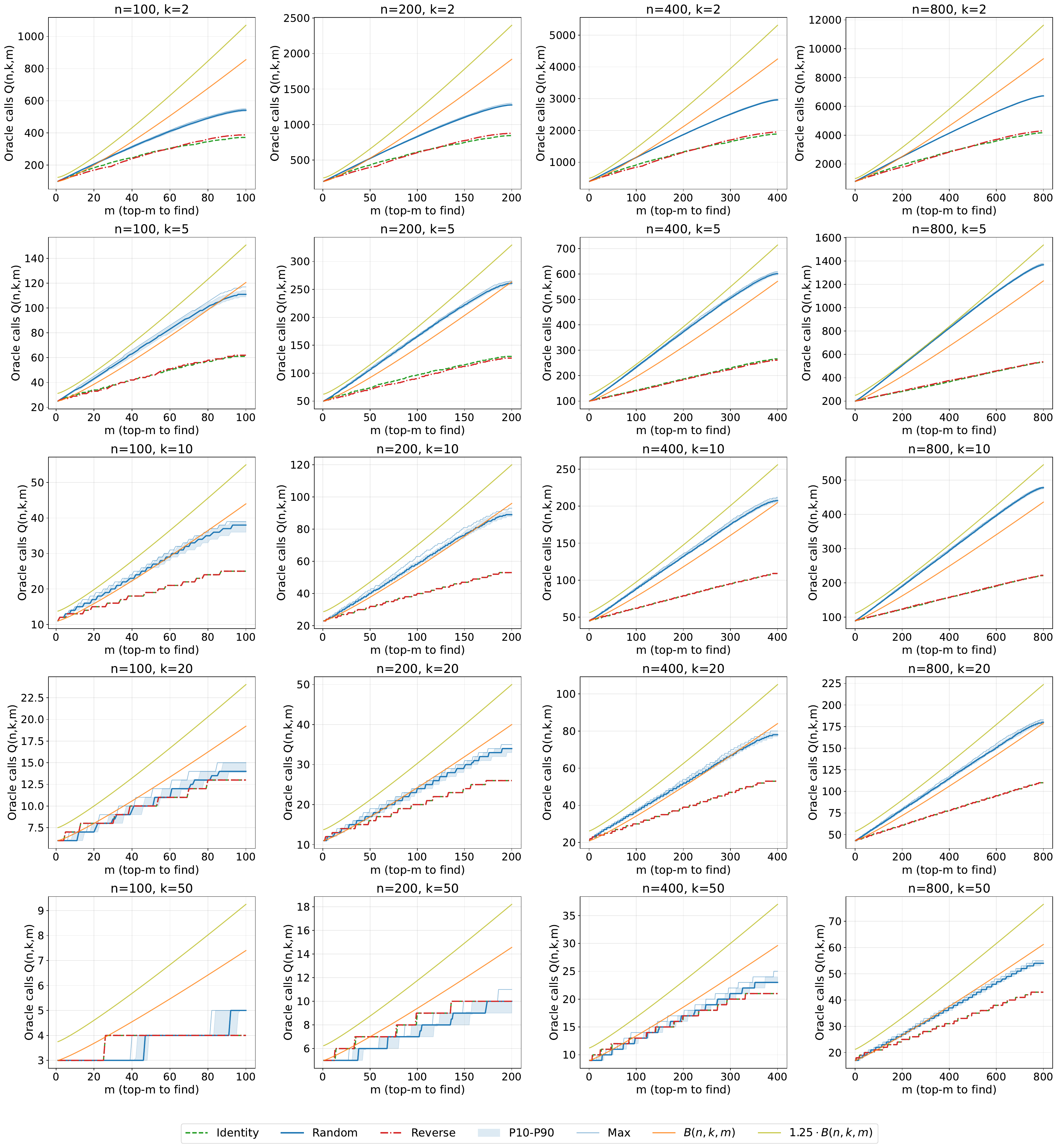}
\caption{Empirical query counts $Q(n,k,m)$ versus target count $m$ for three initial orderings: random (blue, 20 seeds with P10--P90 band and max), identity (green, sorted input), and reverse (red, reverse-sorted input). The conjectured bound $B(n,k,m)$ (orange) closely tracks the empirical complexity; $1.25 \cdot B(n,k,m)$ (yellow-green) upper-bounds all observed counts. Identity and reverse permutations lie uniformly below the random baseline.}
\label{fig:q-vs-m-grid}
\end{figure*}

\subsection{Empirical Methodology}\label{subsec:qc-empirical-methodology}

We empirically validate Conjecture~\ref{conj:query-complexity} by measuring
the query complexity of $\tournamentsort^{\dagger}$ on synthetic transitive
instances -- the conjectured worst case
(Remark~\ref{rem:why-cycles-help}).

\paragraph{Instance generation.}
A transitive tournament on $n$ vertices corresponds to a total order,
uniquely determined by a permutation $\pi:[n]\to[n]$.
The underlying tournament $G^*_\pi$ has edge $u\to v$ iff $\pi^{-1}(u)<\pi^{-1}(v)$.
The oracle $\mathcal{O}$ applied to any query set $Q\subseteq V$
returns the restriction of $G^*_\pi$ to $Q$ -- equivalently, the
total order on $Q$ induced by $\pi$.

We test three classes of initial orderings:
\begin{enumerate}[nosep]
\item \emph{Random}: $\pi$ sampled uniformly at random (20 independent seeds per $(n,k)$ pair).
\item \emph{Identity}: $\pi = \mathrm{id}$, i.e.\ items presented in sorted order.
      The best items appear first in the input, giving the algorithm early access to them.
\item \emph{Reverse}: $\pi(i) = n - i$, i.e.\ items presented in reverse sorted order.
      The best items appear last, forcing the algorithm to scan the entire input
      before encountering them.
\end{enumerate}
The identity and reverse permutations are deterministic (one run per $(n,k)$ pair).

\paragraph{Measurement protocol.}
For each parameter triple $(n,k,m)$, we execute $\tournamentsort^{\dagger}$
with target count $m$ and record $T_{\mathcal{A}}(G^*_\pi)$, the number of
oracle calls until termination.
Rather than running separate experiments for each $m\in\{1,\ldots,n\}$,
we exploit a key observation: a single execution with $m=n$ (full sort)
reveals $Q(n,k,m')$ for all $m'\leq n$ as a byproduct.
Specifically, when the $m'$-th vertex is finalized at round $t$,
we record $T_{m'}:=t$ as the query count for top-$m'$ selection.
This yields the full curve $m\mapsto Q(n,k,m)$ from one run.

\paragraph{Experimental grid.}
We test $n \in \{100, 200, 400, 800\}$ and $k \in \{2, 5, 10, 20, 50\}$, extracting $m = 1, \ldots, n$ from each full-sort run. For each $(n, k, m)$, we report the median, 10th--90th percentile range, and maximum of $T_m$ across seeds (for random permutations) or the single observed value (for identity and reverse).

\subsection{Empirical Results}\label{subsec:qc-empirical-results}

Figure~\ref{fig:q-vs-m-grid} presents the empirical query counts
alongside the concrete reference curve
\begin{equation}\label{eq:conjectured-bound}
B(n,k,m) \;=\; \left\lceil\frac{n-1}{k-1}\right\rceil
+ \frac{m-1}{k-1}\cdot\left(1+\log_k(m)\right),
\end{equation}
a non-asymptotic instantiation of the $O(\cdot)$ form in
Conjecture~\ref{conj:query-complexity} (the additive~$1$ inside the
parenthesis is absorbed by the big-$O$).

\paragraph{Main observations.}
\begin{enumerate}[nosep]
\item \emph{The conjectured form captures the scaling.}
Across all tested $(n,k)$ pairs, the empirical curves follow the
shape predicted by $B(n,k,m)$: sublinear in $k$, with a candidate-reduction
plateau for small $m$ transitioning to logarithmic growth.

\item \emph{The constant is small.}
The maximum observed query count satisfies
$\max_\pi T_{\mathcal{A}}(G^*_\pi) \leq 1.25\cdot B(n,k,m)$
for all tested configurations.
The median typically lies below $B(n,k,m)$ itself.

\item \emph{Variance is low.}
The 10th--90th percentile bands are narrow, indicating that
query complexity is stable across random permutations and
not dominated by rare worst-case instances within our sample.

\item \emph{Structured orderings are easier than random.}
The identity and reverse permutations -- representing the two extremes of
initial ordering -- produce query counts that lie uniformly below
the random baseline across all tested $(n,k,m)$ configurations
(Figure~\ref{fig:q-vs-m-grid}).
Both stay well within $B(n,k,m)$ itself, far from the $1.25$ envelope.
This suggests that random permutations, which maximally mix rank positions,
are harder for the algorithm than structured orderings.
\end{enumerate}

\begin{table}[ht]
\centering
\caption{Full-sort query counts ($m{=}n$, $k{=}10$) for three initial orderings. Random reports [min, max] over 20 seeds.}
\label{tab:adversarial-counts}
\small
\begin{tabular}{rrrcc}
\toprule
$n$ & Identity & Reverse & Random [min, max] & $1.25 \cdot B(n,k,n)$ \\
\midrule
100 & 25 & 25 & [35, 39] & 55 \\
200 & 53 & 53 & [86, 89] & 120 \\
400 & 109 & 109 & [204, 212] & 255.9 \\
800 & 221 & 222 & [470, 480] & 544.4 \\
\bottomrule
\end{tabular}
\end{table}

Table~\ref{tab:adversarial-counts} summarizes full-sort query counts ($m{=}n$) at $k{=}10$.
The gap between structured and random permutations widens as $n$ increases: at $n{=}800$, identity and reverse require 221--222 queries while random requires 470--480 -- all well within the $1.25 \cdot B(n,k,n)$ envelope.
Notably, identity and reverse permutations yield nearly identical query counts despite representing opposite extremes of the initial ordering, suggesting the algorithm's efficiency depends on the inherent structure of the permutation rather than its alignment with the target ranking.

\paragraph{Limitations.}
These experiments provide evidence for, but do not prove,
Conjecture~\ref{conj:query-complexity}. Key caveats:
\begin{itemize}[nosep]
\item We test identity, reverse, and random permutations; a worst-case
permutation engineered with knowledge of the algorithm's internal scheduling
could conceivably yield higher query counts. Transitive tournaments are the
conjectured worst case (Remark~\ref{rem:why-cycles-help}), but this reduction
 itself remains unproved.
\item The parameter range is bounded ($n\leq 800$, $k\leq 50$);
extrapolation to larger scales is uncertain.
\item The $1.25$ multiplicative gap may reflect lower-order terms
not captured by the asymptotic form, or slack in our analysis.
\end{itemize}
See Remark~\ref{rem:conjecture-difficulty} for a discussion of the
obstacles to a formal proof.

\end{document}